\definecolor{mydarkgreen}{RGB}{39,130,67}
\definecolor{mydarkred}{RGB}{192,47,25}
\newcommand{\green}{\color{mydarkgreen}}
\newcommand{\red}{\color{mydarkred}}
\newcommand{\cmark}{{\green\ding{51}}}
\newcommand{\xmark}{{\red\ding{55}}}
\titleformat{\paragraph}[runin]
{\normalfont\bfseries}{}{1em}{}[:]
\titlespacing*{\paragraph}{0pt}{6pt}{3pt}
\theoremstyle{plain}
\newtheorem{theorem}{Theorem}[section]
\newtheorem{proposition}[theorem]{Proposition}
\newtheorem{lemma}[theorem]{Lemma}
\newtheorem{corollary}[theorem]{Corollary}
\theoremstyle{definition}
\theoremstyle{remark}
\newtheorem{remark}[theorem]{Remark}
\title{Deep Stochastic Mechanics}
\author[1]{Elena Orlova}
\author[4]{Aleksei Ustimenko $^\dag$}
\author[1]{Ruoxi Jiang}
\author[2]{Peter Y. Lu}
\author[1,3]{Rebecca Willett}
\affil[1]{Department of Computer Science, The University of Chicago, Chicago, US}
\affil[2]{Department of Physics, The University of Chicago, Chicago, US}
\affil[3]{Department of Statistics, The University of Chicago, Chicago, US}
\affil[4]{ShareChat, London, UK}
\date{}                   
\begin{document}
\maketitle
\def\thefootnote{$\dag$}\footnotetext{The corresponding author: research@aleksei.uk}\def\thefootnote{\arabic{footnote}}

\begin{abstract}
This paper introduces a novel deep-learning-based approach for numerical simulation of a time-evolving Schr\"odinger equation inspired by stochastic mechanics and generative diffusion models. Unlike existing approaches, which exhibit computational complexity that scales exponentially in the problem dimension, our method allows us to adapt to the latent low-dimensional structure of the wave function by sampling from the Markovian diffusion. Depending on the latent dimension, our method may have far lower computational complexity in higher dimensions. Moreover, we propose novel equations for stochastic quantum mechanics, resulting in quadratic computational complexity with respect to the number of dimensions. Numerical simulations verify our theoretical findings and show a significant advantage of our method compared to other deep-learning-based approaches used for quantum mechanics.
\end{abstract}

\let\clearpage\relax

\section{Introduction}\label{sec1}
Mathematical models for many problems in nature appear in the form of partial differential equations (PDEs) in high dimensions. 
Given access to precise solutions of the many-electron time-dependent Schr\"odinger equation (TDSE),
a vast body of scientific problems could be addressed, including in quantum chemistry \citep{cances2003computational, nakatsuji2012discovery}, drug discovery \citep{ganesan2017molecular, heifetz2020quantum}, condensed matter physics \citep{boghosian1998quantum, liu2013bilinear}, and quantum computing \citep{grover2001Schrodinger, papageorgiou2013measures}. However, solving high-dimensional PDEs and the Schr\"odinger equation, in particular, are notoriously difficult problems in scientific computing due to the well-known curse of dimensionality: the computational complexity grows exponentially as a function of the dimensionality of the problem \citep{bellman2010dynamic}. Traditional numerical solvers have been limited to dealing with problems in rather low dimensions since they rely on a grid.

Deep learning is a promising way to avoid the curse of dimensionality \citep{poggio2017and, madala2023cnns}. However, no known deep learning approach avoids it in the context of the TDSE \citep{manzhos2020machine}. Although generic deep learning approaches have been applied to solving the TDSE
\citep{e2017deep, han2018solving, raissi2019physics, weinan2021algorithms}, this paper shows that it is possible to get performance improvements by developing an approach specific to the TDSE by incorporating quantum physical structure into the deep learning algorithm itself. 

We propose a method that relies on a stochastic interpretation of quantum mechanics \citep{NelsonOG, guerra1995introduction, nelson2005mystery} and is inspired by the success of deep diffusion models that can model complex multi-dimensional distributions effectively \citep{yang2022diffusion};
we call it \textit{Deep Stochastic Mechanics (DSM)}. Our approach is not limited to only the linear Schr\"odinger equation but can be adapted to Klein-Gordon, Dirac equations \citep{serva1988relativistic, lindgren2019quantum}, and to the non-linear Schr\"odinger equations of condensed matter physics, e.g., by using mean-field stochastic differential equations (SDEs) \citep{eriksen2020mean}, or McKean-Vlasov SDEs \citep{dos2022simulation}.

\subsection{Problem Formulation}
The Schr\"odinger equation, a governing equation in quantum mechanics, predicts the future behavior of a dynamic system for $0 \le t \le T$ and $\forall x\in \mathcal{M}$:
\begin{align}\label{eq:Schrodinger}
    i \hbar \partial_{t} \psi (x, t) &= \mathcal{H} \psi(x, t),\\
    \psi(x, 0) &= \psi_{0}(x),
\end{align}
where $\psi:\mathcal{M}\times [0, T]\rightarrow \mathbb{C}$ 
is a wave function defined over a manifold $\mathcal{M}$, and $\mathcal{H}$ is a self-adjoint operator acting on a Hilbert space of wave functions. For simplicity of future derivations, we consider a case of a spinless particle\footnote{A multi-particle case is covered by considering $d = 3 n$, where $n$ -- the number of particles.} in $\mathcal{M}=\mathbb{R}^d$
moving in a smooth potential 
$V:\mathbb{R}^d\times [0, T] \rightarrow \mathbb{R}_{+}$.
In this case, $\mathcal{H} = -\frac{\hbar^2}{2}\mathrm{Tr}(m^{-1}\nabla^2) + V,$ where $m \in \mathbb{R}^d \otimes \mathbb{R}^d$ is a mass tensor. 
The probability density of finding a particle at position $x$ is $|\psi (x, t)|^2$. A notation list is given in Appendix \ref{sec:app_notation}.

Given initial conditions in the form of samples drawn from density $\psi_0(x)$, we wish to draw samples from $|\psi (x, t)|^2$ for $t \in (0, T]$
using a neural-network-based approach that can adapt to latent low-dimensional structures in the system and sidestep the curse of dimensionality. Rather than explicitly estimating $\psi(x,t)$ and sampling from the corresponding density, we devise a strategy that directly samples from an approximation of  $|\psi(x,t)|^2$, concentrating computation in high-density regions. When regions where the density $|\psi (x, t)|^2$ lie in a latent low-dimensional space, our sampling strategy concentrates computation in that space, leading to the favorable scaling properties of our approach.

\section{Related Work}\label{sec: related_work}

Physics-Informed Neural Networks (PINNs) \citep{raissi2019physics} are general-purpose tools that are widely studied for their ability to solve PDEs and can be applied to solve \cref{eq:Schrodinger}. However, this method is prone to the same issues as classical numerical algorithms since it relies on a collection of collocation points uniformly sampled over the domain ${\cal M} \subseteq {\mathbb R}^d$. In the remainder of the paper, we refer to this as a `grid' for simplicity of exposition. Another recent paper by \citet{bruna2022neural} introduces Neural Galerkin schemes based on deep learning, which leverage active learning to generate training data samples for numerically solving real-valued PDEs. Unlike collocation-points-based methods, this approach allows theoretically adaptive data collection guided by the dynamics of the equations if we could sample from the wave function effectively.

Another family of approaches including DeepWF \citep{han2019solving}, FermiNet \citep{pfau2020ferminet}, and PauliNet \citep{hermann2020deep} reformulates the problem \labelcref{eq:Schrodinger} as maximization of an energy functional that depends on the solution of the stationary Schr\"odinger equation. This approach sidesteps the curse of dimensionality but cannot be applied to the time-dependent wave function setting considered in this paper.

The only thing that one can experimentally obtain is samples from the quantum mechanics density. So, it makes sense to focus on obtaining samples from the density rather than attempting to solve the Schr\"odinger equation; these samples can be used to predict the system's behavior without conducting real-world experiments. 
Based on this observation, there are a variety of quantum Monte Carlo (MC) methods \citep{barker1979quantum, Corney_2004, austin2012quantum}, which rely on estimating expectations of observables rather than the wave function itself, resulting in improved computational efficiency. However, these methods still encounter the curse of dimensionality due to recovering the full-density operator. The density operator in atomic simulations is concentrated on a lower dimensional manifold of such operators \citep{eriksen2020mean}, suggesting that methods that adapt to this manifold can be more effective than high-dimensional grid-based methods. Deep learning has the ability to adapt to this structure. Numerous works explore the time-dependent Variational Monte Carlo (t-VMC) schemes \citep{carleo2017unitary, carleo2017solving, schmitt2020quantum, yao2021adaptive} for simulating many-body quantum systems. Their applicability is often tailored to a specific problem setting as these methods require significant prior knowledge to choose a good variational ansatz function. As highlighted by \citet{sinibaldi2023unbiasing}, t-VMC methods may encounter challenges related to systematic statistical bias or exponential sample complexity, particularly when the wave function contains zeros.
 
As noted in \citet{schlick2010molecular}, knowledge of the density is unnecessary for sampling. We need a score function $\nabla \log \rho$ to be able to sample from it. The fast-growing field of generative modeling with diffusion processes demonstrates that for high-dimensional densities with low-dimensional manifold structure, it is incomparably more effective to learn a score function than the density itself \citep{ho2020denoising, yang2022diffusion}. 

For high-dimensional real-valued PDEs, there exist a variety of classic and deep learning-based approaches that rely on sampling from diffusion processes, e.g., \citet{cliffe2011multilevel, warin2018nesting, han2018solving, weinan2021algorithms}. Those works rely on the Feynman-Kac formula \citep{del2004feynman} to obtain an estimator for the solution to the PDE. However, for the Schr\"odinger equation, we need an analytical continuation of the Feynman-Kac formula on an imaginary time axis \citep{yan1994feynman} as it is a complex-valued equation. This requirement limits the applicability of this approach to our setting. BSDE methods studied by \citet{nusken2021solving, nusken2021interpolating} are closely related to our approach, but they are developed for the elliptic version of the Hamilton–Jacobi–Bellman (HJB) equation. We consider the hyperbolic HJB setting, for which the existing method cannot be applied. 

\section{Contributions}
We are inspired by works of \citet{NelsonOG, nelson2005mystery}, 
who has developed a stochastic interpretation of quantum mechanics, so-called stochastic mechanics, based on a Markovian diffusion. Instead of solving the Schr\"odinger equation\labelcref{eq:Schrodinger}, our method aims to learn the stochastic mechanical process's osmotic and current velocities equivalent to classical quantum mechanics. Our formulation differs from the original one \citep{NelsonOG, guerra1995introduction, nelson2005mystery}, as we derive equivalent differential equations describing the velocities that do not require the computation of the Laplacian operator. 
Another difference is that our formulation interpolates anywhere between stochastic mechanics and deterministic Pilot-wave theory \citep{PhysRev.85.166}. More details are given in \cref{sec:app_interpolation}.

We highlight the main contributions of this work as follows:

\begin{itemize}

    \item We propose to use a stochastic formulation of quantum mechanics \citep{NelsonOG, guerra1995introduction, nelson2005mystery} to create an efficient and theoretically sound computational framework for quantum mechanics simulation. We accomplish our result by using stochastic mechanics equations stemming from Nelson's formulation. In contrast to Nelson’s original expressions, which rely on second-order derivatives like the Lagrangian, our expressions rely solely on first-order derivatives -- specifically, the gradient of the divergence operator. This formulation, which is more amenable to neural network-based solvers, results in a reduction in the computational complexity of the loss evaluation from cubic to quadratic in dimension.
    \item We prove theoretically in \cref{sec:theory_main_part} that the proposed loss function upper bounds the $L_2$ distance between the approximate process and the `true' process that samples from the quantum density, which implies that if loss converges to zero, then the approximate process strongly converges to the `true' process. Our theoretical finding offers a simple mechanism for guaranteeing the accuracy of our predicted solution, even in settings in which no baseline methods are computationally tractable.
    \item We empirically estimate the performance of our method in various settings. Our approach shows a superior advantage to PINNs and t-VMC in terms of accuracy. We also conduct an experiment for non-interacting bosons where our method reveals linear convergence time in the dimension, operating easily in a higher-dimensional setting. Another interacting bosons experiment highlights the favorable scaling properties of our approach in terms of memory and computing time compared to a grid-based numerical solver. While our theoretical analysis establishes an $\mathcal{O}(d^2)$ bound on the algorithmic complexity, we observe an empirical scaling closer to $\mathcal{O}(d)$ for the memory and compute requirements as the problem dimension $d$ increases due to parallelization in modern machine learning frameworks.

\end{itemize}

\cref{table:comparison} compares properties of methods for solving \cref{eq:Schrodinger}. For numerical solvers, the number of grid points scales as $\mathcal{O}(N^{\frac{d}{2}+1})$ as $N$ is the number of discretization points in time, and $\sqrt{N}$ is the number of discretization points in each spatial dimension. We assume a numerical solver aims for a precision $\varepsilon = \mathcal{O}(\frac{1}{\sqrt{N}})$. In the context of neural networks, the iteration complexity is dominated by loss evaluation. For PINNs, $N_f$ denotes the number of collocation points used to enforce physics-informed constraints in the spatio-temporal domain for $d=1$. The original PINN formulation faces an exponential growth in the number of collocation points with respect to the problem dimension, $\mathcal{O}(N_f^d)$, posing a significant challenge in higher dimensions. Subsampling $\mathcal{O}(d)$ collocation points in a non-adaptive way leads to poor performance for high-dimensional problems.

For both t-VMC and FermiNet, $H_d$ denotes the number of MC iterations required to draw a single sample. The t-VMC approach requires calculating a matrix inverse, which generally exhibits a cubic computational complexity of $\mathcal{O}(d^3)$ and may suffer from numerical instabilities. Similarly, the FermiNet method, which is used for solving the time-independent Schr\"odinger equation to find ground states, necessitates estimating matrix determinants, an operation that also scales as $\mathcal{O}(d^3)$. We note that for our  DSM approach, $N$ is independent of $d$. We focus on lower bounds on iteration complexity and known bounds for the convergence of non-convex stochastic gradient descent \citep{fehrman2019convergence} that scales polynomial with $\varepsilon^{-1}$.

\begin{table*}[ht!]
    \centering
    \small
    \caption{Comparison of different approaches for simulating quantum mechanics. }\label{table:comparison} 
    \vskip 0.15in
    \scalebox{0.95}{
    \begin{tabular}{ c c c c c c c c}
    \hline
    Method & Domain & \begin{tabular}{@{}c@{}} Time \\ Evolving \end{tabular} & Adaptive  & \begin{tabular}{@{}c@{}} Iteration \\ complexity \end{tabular}   & \begin{tabular}{@{}c@{}} Overall \\ complexity \end{tabular} \\
    \hline
    PINN \citep{raissi2019physics} & Compact & \cmark & \xmark  & $\mathcal{O}(N_{f}^d)$ & $\geq \mathcal{O}(N_{f}^d \mathrm{poly}(\varepsilon^{-1}))$ 
    \\    [0.5ex]
    FermiNet \citep{pfau2020ferminet}
    & $\mathbb{R}^d$ & \xmark & \cmark & $\mathcal{O}(H_d d^3)$  & $\geq \mathcal{O}(H_{d} d^3 \mathrm{poly}(\varepsilon^{-1}))$  
    \\ [0.5ex]
    t-VMC & $\mathbb{R}^d$ & \cmark & \cmark  & $\mathcal{O}(H_d d^3)$ & $\ge \mathcal{O}(H_d d^3 \mathrm{poly}(\varepsilon^{-1}))$ 
    \\    [0.5ex]
    Num. solver & Compact & \cmark & \xmark  & N/A & $\displaystyle \mathcal{O}(d\varepsilon^{-d-2})$ 
    \\ [1ex] 
    \textbf{DSM (Ours)} & $\mathbb{R}^d$ & \cmark & \cmark & $\mathcal{O}(Nd^2)$ & $\geq \mathcal{O}(N d^2\mathrm{poly}(\varepsilon^{-1}))$  
    \\
    \hline
    \end{tabular}
    }
    \vskip -0.1in
\end{table*}

\section{Deep Stochastic Mechanics}\label{sec:SDE_mechanics}

here is a family of diffusion processes that are equivalent to \cref{eq:Schrodinger} in a sense that all time-marginals of any such process coincide with $|\psi (x, t)|^2$; we refer to Appendix \ref{sec:stochastic_processes} for derivation. Assuming $\psi (x, t) = \sqrt{\rho(x, t)}e^{iS(x, t)}$, we define:
\begin{equation}\label{eq:u_and_v}
    \begin{aligned}
        v(x,t) &= \frac{\hbar}{m}\nabla S(x, t),\\
        u(x,t) &= \frac{\hbar}{2m}  \nabla \log \rho(x, t).
    \end{aligned}
\end{equation}
Our method relies on the following stochastic process with $\nu \ge 0$ \footnote{$\nu=0$ is allowed if and only if $\psi_{0}$ is sufficiently regular, e.g., $|\psi_{0}|^2 > 0$ everywhere.}, which corresponds to sampling from
$\rho = \big|\psi (x, t)\big|^2$ \citep{NelsonOG}:
\begin{equation}\label{eq:main_sde}
\begin{aligned}
    \mathrm{d}Y(t) &= (v(Y(t), t)+\nu u(Y(t), t))\mathrm{d}t + \sqrt{\frac{\nu \hbar}{m} }\mathrm{d}\overset{\rightarrow}{W},\\
    Y(0) &\sim \big|\psi_{0}\big|^2,
\end{aligned}
\end{equation}
where $u$ is an osmotic velocity, $v$ is a current velocity and $\overset{\rightarrow}{W}$ is a standard (forward) Wiener process. Process $Y(t)$ is called the Nelsonian process. Since we don't know the true $u, v$, we instead aim at approximating them with the process defined using neural network approximations $v_{\theta}, u_{\theta}$:
\begin{equation}\label{eq:approx_sde}
\begin{aligned}
    \mathrm{d}X(t) &= (v_{\theta}(X(t), t)+\nu u_{\theta}(X(t), t))\mathrm{d}t + \sqrt{\frac{\nu \hbar}{m} }\mathrm{d}\overset{\rightarrow}{W}, \\
    X(0) &\sim \big|\psi_{0}\big|^2.
\end{aligned}
\end{equation}
Any numerical integrator can be used to obtain samples from the diffusion process. The simplest one is the Euler--Maruyama integrator \citep{kloeden1992stochastic}:
\begin{align}\label{eq:main_sde_integrator}
    X_{i+1} &= X_{i} + (v_{\theta}(X_{i}, t_{i})+\nu u_{\theta}(X_{i}, t_{i}))\epsilon + \mathcal{N}\big(0, \frac{\nu \hbar}{m}  \epsilon I_{d}\big),
\end{align}
where $\epsilon > 0$ denotes a step size, $0 \le i < \frac{T}{\epsilon}$, 
and $\mathcal{N}(0, I_{d})$ is a Gaussian distribution. We consider this integrator in our work. Switching to higher-order integrators, e.g., the Runge-Kutta family of integrators \citep{kloeden1992stochastic}, can potentially enhance efficiency and stability when $\epsilon$ is larger.

The diffusion process from \cref{eq:main_sde} achieves sampling from $\rho = \big|\psi (x, t)\big|^2$ for each $t\in[0, T]$ for known $u$ and $v$. Assume that $\psi_{0}(x) = \sqrt{\rho_{0}(x)}e^{i S_{0}(x)}$. Our approach relies on the following equations for the velocities:
\begin{subequations}
\label{eq:nav}
\begin{align}\label{eq:nav-s1}
    \partial_{t} v &= -\frac{1}{m} \nabla V + \langle u, \nabla\rangle u - \langle v, \nabla\rangle v + \frac{\hbar}{2m} \nabla\langle \nabla, u\rangle,
\end{align}
\vskip -0.25in
\begin{align}\label{eq:nav-s2}
    \partial_{t} u &=  - \nabla \langle v, u\rangle - \frac{\hbar}{2m} \nabla \langle \nabla, v\rangle,\
\end{align}
\vskip -0.25in
\begin{align}\label{eq:nav-s3}
    v_{0}(x) &= \frac{\hbar}{m}\nabla S_0(x), ~ u_{0}(x) = \frac{\hbar}{2m}  \nabla \log \rho_0(x).
\end{align}
\end{subequations}
These equations are derived in Appendix \ref{app:stochastic_mechanics} and are equivalent to the Schr\"odinger equation.
As mentioned, our equations differ from the canonical ones developed in \citet{NelsonOG, guerra1995introduction}. In particular, the original formulation from \cref{eq:nelson_eq}, which we call the \textit{Nelsonian version}, includes the Laplacian of $u$; in contrast, \textit{our version} in \labelcref{eq:nav-s1} uses the gradient of the divergence operator. These versions are equivalent in our setting, but our version has significant computational advantages, as we describe later in \Cref{prop:alg}.

\subsection{Learning Drifts}
This section describes how we learn the velocities $u_{\theta}(X, t)$ and $v_{\theta}(X, t)$,  parameterized by neural networks with parameters $\theta$. We propose to use a combination of three losses: two of them come from the Navier-Stokes-like equations \labelcref{eq:nav-s1}, \labelcref{eq:nav-s2}, and the third one enforces the initial conditions \labelcref{eq:nav-s3}. We define non-linear differential operators that appear in \cref{eq:nav-s1}, \labelcref{eq:nav-s2}: 

\begin{equation} \label{eq:diff_operators}
\begin{aligned}
    \mathcal{D}_{u}[v, u, x, t] =  - \nabla \langle v(x, t), u(x, t)\rangle - \frac{\hbar}{2m} \nabla \langle\nabla, v(x, t)\rangle,
\end{aligned}
\end{equation}

\begin{equation} \label{eq:diff_operators2}
\begin{aligned}
    \mathcal{D}_{v}[v, u, x, t] =  - \frac{1}{m} \nabla V (x, t) + \frac{1}{2}\nabla \| u(x, t)\|^2 -\frac{1}{2}\nabla \| v(x, t)\|^2 + \frac{\hbar}{2m} \nabla\langle \nabla, u(x, t)\rangle.
\end{aligned}
\end{equation}

We aim to minimize the following losses:
\begin{equation} \label{eq:L_u}
\begin{aligned}
    L_1 (v_{\theta}, u_{\theta}) = \int_{0}^T\mathbb{E}^{X}\big\| \partial_{t} {  u_\theta}(X(t), t) -\mathcal{D}_{u}[v_{\theta}, u_{\theta}, X(t), t]\big\|^2\mathrm{d}t,
\end{aligned}
\end{equation}
\begin{equation}\label{eq:L_v}
\begin{aligned}
    L_2 (v_{\theta}, u_{\theta}) =   \int_{0}^T\mathbb{E}^{X} \big\| \partial_{t} v_{\theta} (X(t),t) -  \mathcal{D}_{v}[v_{\theta}, u_{\theta}, X(t), t] \big\|^2\mathrm{d}t ,
    \end{aligned}
\end{equation}

\begin{equation}\label{eq:L_0}
\begin{aligned}
    L_{3}(v_{\theta}, u_{\theta}) = \mathbb{E}^{X} \|u_{\theta}(X(0), 0) - u_{0}(X(0))\|^2, 
\end{aligned}
\end{equation}

\begin{equation}\label{eq:L_0_v}
\begin{aligned}
    L_{4}(v_{\theta}, u_{\theta}) = \mathbb{E}^{X} \|v_{\theta}(X(0), 0) - v_{0}(X(0))\|^2,
\end{aligned}
\end{equation}

where $u_0,v_0$ are defined in \cref{eq:nav-s3}.
Finally, we define a combined loss using a weighted sum with $w_{i} > 0$:
\begin{equation}\label{eq:L_full}
    \mathcal{L}(\theta) = \sum_{i=1}^{4} w_i L_{i} (v_{\theta}, u_{\theta}).
\end{equation}
The basic idea of our approach is to sample new trajectories using \cref{eq:main_sde_integrator} with $\nu=1$ for each iteration $\tau$. These trajectories are then used to compute stochastic estimates of the loss from \cref{eq:L_full}, and then we back-propagate gradients of the loss to update $\theta$.  
We re-use recently generated trajectories to reduce computational overhead as SDE integration cannot be paralleled. The training procedure is summarized in Algorithm \ref{alg:train} and \cref{fig:dsm_overview}; a more detailed version is given in \cref{sec:app_algo}.

\begin{algorithm}
\small
\caption{Training algorithm pseudocode}\label{alg:train}
\begin{algorithmic}
\vspace{2pt}
        \vspace{2pt}
\STATE {\bfseries Input} $\psi_{0}$ -- initial wave-function, $M$ -- epoch number, $B$ -- batch size, other parameters (optimizer parameters, physical constants, Euler--Maruyama parameters; see \cref{sec:app_algo})  
\STATE Initialize NNs $u_{\theta_0}$, $v_{\theta_0}$
\FOR{each iteration $0 \le \tau < $ $M$} 
        \STATE Sample $B$ trajectories 
        using $u_{\theta_\tau}, v_{\theta_\tau}$ via \cref{eq:main_sde_integrator} with $\nu=1$ 
        \STATE Estimate loss $\mathcal{L}(v_{\theta_{\tau}}, u_{\theta_{\tau}})$ from \cref{eq:L_full} over the sampled trajectories
        \vspace{2pt}
        \STATE Back-propagate gradients to get $\nabla_{\theta}\mathcal{L}(v_{\theta_{\tau}}, u_{\theta_{\tau}})$
        \STATE An optimizer step to get $\theta_{\tau+1}$        
\ENDFOR
\STATE {\bfseries output } $u_{\theta_{M}}, v_{\theta_{M}}$ 
\end{algorithmic}
\end{algorithm}

\begin{figure*}[t!]
\vskip 0.2in
    \centering
    \includegraphics[width=1.0\textwidth]{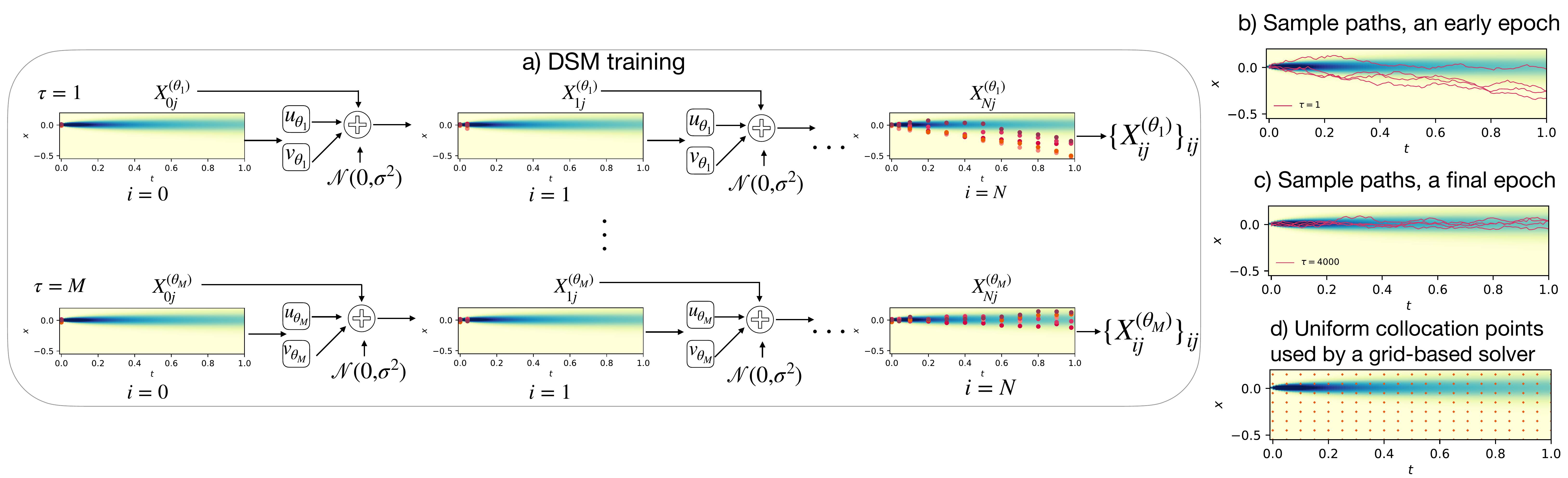} 
    \caption{An illustration of our approach. Blue regions in the plots correspond to higher-density regions. (a) DSM training scheme: at every epoch $\tau$, we generate $B$ full trajectories $\{ X_{ij}\}_{ij}$, $i=0, ..., N$, $j=1, ..., B$. Then, we update the weights of our NNs.  (b) An illustration of sampled trajectories at the early epoch. (c) An illustration of sampled trajectories at the final epoch.  (d) Collocation points for a grid-based solver where it should predict values of $\psi(x, t)$.  
    }
    \label{fig:dsm_overview}
    \vskip -0.2in
\end{figure*}

We use trained  $u_{\theta_{M}}, v_{\theta_{M}}$ to simulate the forward diffusion for $\nu \geq 0$ given $X_0 \sim \mathcal{N}(0, I_{d})$: 
\begin{equation}
    X_{i+1} = X_{i} + (v_{\theta_{M}}(X_{i}, t_{i})+\nu u_{\theta_{M}}(X_{i}, t_{i}))\epsilon + \mathcal{N}\big(0, \frac{\hbar}{m}\nu \epsilon I_{d}\big). 
\end{equation}

Appendix \ref{sec:applications_full} describes a wide variety of possible ways to apply our approach for estimating an arbitrary quantum observable, singular initial conditions like $\psi_{0} = \delta_{x_{0}}$, singular potentials, correct estimations of observable that involve measurement process, and recovering the wave function from the velocities $u,v$.

Although PINNs can be used to solve Equations \labelcref{eq:nav-s1}, \labelcref{eq:nav-s2}, that approach would suffer from having fixed sampled density (see \cref{sec:experiments}). Our method, much like PINNs, seeks to minimize the residuals of the PDEs from Equations (\ref{eq:nav-s1}) and (\ref{eq:nav-s2}). However,
we do so on the distribution generated by sampled trajectories $X(t)$, which in turn depends on current neural approximations $v_{\theta}, u_{\theta}$. This allows our method to focus only on high-density regions and alleviates the inherent curse of dimensionality that comes from reliance on a grid.

\subsection{Algorithmic Complexity}
Our formulation of stochastic mechanics with novel Equations \labelcref{eq:nav} is much more amenable to automatic differentiation tools than if we developed a neural diffusion approach based on the Nelsonian version. In particular, the original formulation uses the Laplacian operator $\Delta u$ that naively requires $\mathcal{O}(d^3)$ operations, which might become a major bottleneck for scaling them to many-particle systems. While a stochastic trace estimator \citep{hutchinson} may seem an option to reduce the computational complexity of Laplacian calculation to $\mathcal{O}(d^2)$, it introduces a noise of an amplitude $\mathcal{O}(\sqrt{d})$. Consequently, a larger batch size (as $\mathcal{O}(d)$) is necessary to offset this noise resulting in still a cubic complexity.

\begin{remark}\label{prop:alg}
    The algorithmic complexity w.r.t. $d$ of computing differential operators from Equations (\ref{eq:diff_operators}), (\ref{eq:diff_operators2}) for velocities $u, v$ is $\mathcal{O}(d^2)$. \footnote{Estimation of a term $\nabla V (x, t)$ might have different computational complexity from $\mathcal{O}(d)$, $\mathcal{O}(d^2)$, or even higher depending on a particle interaction type.}
\end{remark}

This remark is proved in Appendix \ref{sec:app_complexity}. This trick with the gradient of divergence can be used as we rely on the fact that the velocities $u, v$ are full gradients, which is not the case for the wave function $\psi(x, t)$ itself.

{We expect that one of the factors of $d$ associated with evaluating a $d$-dimensional function gets parallelized over in modern machine learning frameworks, so we can see a linear scaling even though we are using an $\mathcal{O}(d^2)$ method.  We will see such behavior in our experiments.

\subsection{Theoretical Guarantees}\label{sec:theory_main_part}
To further justify the effectiveness of our loss function, we prove the following theorem in Appendix \ref{app:theory}:
\begin{theorem} (Strong Convergence Bound) We have the following bound between processes $Y$ (the Nelsonian process that samples from $|\psi|^2$) and $X$ (the neural approximation with $v_{\theta}, u_{\theta}$):
\begin{align}
\sup_{t\le T} \mathbb{E}\|X(t) - Y(t)\|^2 \le C_{T} \mathcal{L}(v_{\theta}, u_{\theta}),
\end{align}
where constant $C_{T}$ is defined explicitly in \ref{thm:main_theorem}. 
\end{theorem}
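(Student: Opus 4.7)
My plan is a two-step argument: first, a synchronous-coupling Grönwall estimate converts closeness of drifts into closeness of sample paths; second, an energy estimate converts the PDE-residual loss into a bound on the drift error along trajectories of $X$.

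For Step 1, I would drive $X$ and $Y$ by the same Wiener process with $X(0) = Y(0) \sim |\psi_{0}|^2$. Writing $Z(t) = X(t) - Y(t)$, the diffusion terms cancel and $Z$ has zero quadratic variation, so
\begin{equation*}
\frac{d}{dt}\|Z\|^2 = 2\langle Z,\; (v_\theta + \nu u_\theta)(X,t) - (v + \nu u)(Y,t)\rangle.
\end{equation*}
Adding and subtracting $(v + \nu u)(X, t)$, applying Cauchy--Schwarz, and using a Lipschitz assumption on the true drift $f = v + \nu u$ with constant $L$ gives, with $\delta_v = v_\theta - v$ and $\delta_u = u_\theta - u$,
\begin{equation*}
\frac{d}{dt}\mathbb{E}\|Z(t)\|^2 \le (1+2L)\,\mathbb{E}\|Z(t)\|^2 + 2\,\mathbb{E}^X\bigl[\|\delta_v\|^2 + \nu^2\|\delta_u\|^2\bigr](t).
\end{equation*}
Grönwall then yields
\begin{equation*}
\sup_{t\le T}\mathbb{E}\|Z(t)\|^2 \le C_1(T) \int_0^T \mathbb{E}^X\bigl[\|\delta_v\|^2 + \|\delta_u\|^2\bigr](s)\, ds,
\end{equation*}
with $C_1(T)$ exponential in $LT$.

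Step 2 must show $\int_0^T \mathbb{E}^X[\|\delta_v\|^2 + \|\delta_u\|^2]\,dt \le C_2(T)\,\mathcal{L}(v_\theta, u_\theta)$. Writing the pointwise residuals $R_u = \partial_t u_\theta - \mathcal{D}_u[v_\theta, u_\theta]$ and $R_v = \partial_t v_\theta - \mathcal{D}_v[v_\theta, u_\theta]$, the losses $L_1, L_2$ are exactly $\int_0^T \mathbb{E}^X \|R_u\|^2 dt$ and $\int_0^T \mathbb{E}^X\|R_v\|^2 dt$, while $L_3, L_4$ control $\mathbb{E}^X\|\delta_u(X(0), 0)\|^2$ and $\mathbb{E}^X\|\delta_v(X(0), 0)\|^2$. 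Since the true $(v, u)$ satisfies \cref{eq:nav} exactly, subtraction gives the coupled linearized system
\begin{equation*}
\partial_t \delta_u = \mathcal{D}_u[v_\theta, u_\theta] - \mathcal{D}_u[v, u] + R_u, \qquad \partial_t \delta_v = \mathcal{D}_v[v_\theta, u_\theta] - \mathcal{D}_v[v, u] + R_v.
\end{equation*}
I would then apply It\^o to $\|\delta_u(X(t),t)\|^2$ and $\|\delta_v(X(t),t)\|^2$, use the Fokker--Planck identity
\begin{equation*}
\frac{d}{dt}\mathbb{E}^X F(X(t), t) = \mathbb{E}^X\bigl[\partial_t F + \langle f_\theta, \nabla F\rangle + \tfrac{\nu\hbar}{2m}\Delta F\bigr],
\end{equation*}
and Cauchy--Schwarz to separate the $R_u, R_v$ contributions from the linearized part; a second Grönwall, combined with the $L_3, L_4$ bounds at $t=0$, would deliver the estimate. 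Composing with Step~1 gives the theorem with $C_T = C_1(T)\,C_2(T)$.

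The main technical obstacle lies in the linearized right-hand side of Step~2: the expansion
\begin{equation*}
\mathcal{D}_u[v_\theta, u_\theta] - \mathcal{D}_u[v, u] = -\nabla\langle \delta_v, u_\theta\rangle - \nabla\langle v, \delta_u\rangle - \tfrac{\hbar}{2m}\nabla\langle\nabla, \delta_v\rangle
\end{equation*}
contains first and even second spatial derivatives of $\delta_v, \delta_u$ (and similarly for the $\mathcal{D}_v$ difference), whereas the loss only controls zeroth-order quantities of the residuals. I expect the full proof in Appendix~\ref{app:theory} to close this gap by imposing uniform regularity on both the true velocities and the neural approximants -- bounded spatial Jacobians, and either bounded Hessians or pointwise $W^{1,\infty}$ control on $v_\theta, u_\theta$ -- so that the derivative contributions can be absorbed pointwise into the Grönwall coefficient. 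The resulting constant $C_T$ will then be exponential in $T$ and polynomial in the Lipschitz and smoothness norms of $u, v, u_\theta, v_\theta$.
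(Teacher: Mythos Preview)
Your two-step decomposition is natural, but it diverges from the paper's proof in an essential way, and the gap you flag in Step~2 is real and is \emph{not} closed by the regularity assumptions you conjecture.

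The paper never attempts to bound $\int_0^T \mathbb{E}^X[\|\delta_u\|^2 + \|\delta_v\|^2]\,dt$ by the loss. Instead of your first-order Gr\"onwall (Step~1), it applies a \emph{second-order} identity for $\mathbb{E}\|Z(t)\|^2$ (their Lemma~\ref{G5}): with synchronous coupling and $Z = X - Y$,
\[
\mathbb{E}\|Z(t)\|^2 = \int_0^t \mathbb{E}\Big[2\langle \overset{\leftarrow}{\mathcal{L}^Z}Z(0), Z(s)\rangle + 2\int_0^s \langle \overset{\leftarrow}{\mathcal{L}^Z}\overset{\rightarrow}{\mathcal{L}^Z}Z(z), Z(s)\rangle\,dz\Big]ds.
\]
The crucial point is that the composition $\overset{\leftarrow}{\mathcal{L}^Z}\overset{\rightarrow}{\mathcal{L}^Z}Z$ splits as $\overset{\leftarrow}{\mathcal{L}^X}\overset{\rightarrow}{\mathcal{L}^X}X - \overset{\leftarrow}{\mathcal{L}^Y}\overset{\rightarrow}{\mathcal{L}^Y}Y$, and for the Nelsonian process $\overset{\leftarrow}{\mathcal{L}^Y}\overset{\rightarrow}{\mathcal{L}^Y}Y = -\tfrac{1}{m}\nabla V(Y)$ exactly (Newton--Nelson law). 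The losses $L_1, L_2$ are \emph{precisely} the $L^2$ norms of $\overset{\rightarrow}{\mathcal{L}^X}\overset{\leftarrow}{\mathcal{L}^{X'}}x + \tfrac{1}{m}\nabla V$ and its antisymmetric counterpart, so the second-order generator term is directly controlled by the loss---no detour through $\delta_u, \delta_v$. The only correction needed is the gap between $\overset{\leftarrow}{\mathcal{L}^{X'}}$ (built from $u_\theta$) and the true backward generator $\overset{\leftarrow}{\mathcal{L}^{X}}$ (built from $\tfrac{\hbar}{2m}\nabla\log p_X$); this is handled by a separate Gr\"onwall (Theorem~\ref{thmG8}) bounding $\mathbb{E}^X\|u_\theta - \tfrac{\hbar}{2m}\nabla\log p_X\|^2$ in terms of $L_2 + L_3$.

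Your proposed fix---absorbing $\nabla\delta_v$, $\nabla\langle\nabla,\delta_v\rangle$ into the Gr\"onwall coefficient via $W^{1,\infty}$ or Hessian bounds on $v_\theta, u_\theta$---does not work as stated: boundedness of $\|\nabla v_\theta\|_\infty$ and $\|\nabla v\|_\infty$ only gives $\|\nabla\delta_v\| \le C$, a constant, not $C\|\delta\|$. Feeding a constant into Gr\"onwall yields an $O(1)$ bound, not $O(\mathcal{L})$. The paper's route sidesteps this entirely, and its final constant $C_T$ depends only on $\|\nabla u_\theta\|_\infty, \|\nabla v_\theta\|_\infty$ (no second derivatives), so the extra regularity you anticipated is not used.
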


This theorem means optimizing the loss leads to a strong convergence of the neural process $X$ to the Nelsonian process $Y$, and that the loss value directly translates into an improvement of $L_{2}$ error between the processes. The constant $C$ depends on a horizon $T$ and Lipshitz constants of $u, v, u_{\theta}, v_{\theta}$. It also hints that we have a `low-dimensional' structure when Lipshitz constants of $u, v, u_{\theta}, v_{\theta}$ are $\ll d$, which is the case of low-energy regimes (as large Lipshitz smoothness constant implies large value of the Laplacian and, hence, energy) and with the proper selection of a neural architecture \citep{aziznejad2020deep}.

\section{Experiments}\label{sec:experiments}
\paragraph{Experimental setup} As a baseline, we use an analytical or numerical solution. We compare our method's (DSM) performance with PINNs and t-VMC. In the case of non-interacting particles, the models are feed-forward neural networks with one hidden layer and a hyperbolic tangent ($\tanh$) activation function. We use a similar architecture with residual connection blocks and a $\tanh$ activation function when studying interacting particles. Further details on numerical solvers, architecture, training procedures, hyperparameters of our approach, PINNs, and t-VMC can be found in \cref{sec:app_training}. Additional experiment results are given in  \cref{sec:app_res}. The code of our experiments can be found on GitHub \footnote{\url{https://github.com/elena-orlova/deep-stochastic-mechanics}}. We only consider bosonic systems, leaving fermionic systems for further research.

\paragraph{Evaluation metrics} 
We estimate errors between true and predicted values of the mean and the variance of a coordinate $X_i$ at time $i = 1, \dots, T$ as the relative $L_2$-norm, namely $\mathcal{E}_m(X_i)$ and $\mathcal{E}_v(X_i)$. 
The standard deviation (confidence intervals) of the observables are indicated in the results. 
True $v$ and $u$ values are estimated numerically with the finite difference method. Our trained $u_\theta$ and $v_\theta$ should output these values. We measure errors $\mathcal{E} (u)$ and  $\mathcal{E} (v)$ as the $L_2$-norm between the true and predicted values in $L_2(\mathbb{R}^d\times [0, T], \mu)$ with $\mu(\mathrm{d}x, \mathrm{d}t) = |\psi(x, t)|^2 \mathrm{d}x\mathrm{d}t$.

\subsection{Non-interacting Case: Harmonic Oscillator} \label{sec:harm_osc}
We consider a harmonic oscillator model with 
$x\in\mathbb{R}^{1}$, 
$V (x) = \frac{1}{2}m\omega^2(x - 0.1)^2$,   
$t\in [0, 1]$ and where $m=1$ and $\omega=1$. The initial wave function is given as $\psi(x, 0) \propto e^{-x^2/(4\sigma^2)}$. Then ${  u}_0(x) = -\frac{\hbar x}{2 m \sigma^2} $, 
${  v}_0(x) \equiv 0$. $X(0)$ comes from $X(0) \sim \mathcal{N}(0, \sigma^2),$ where $\sigma^2 = 0.1$.

We use the numerical solution as the ground truth. Our approach is compared with a PINN. The PINN input data consists of $N_0 = 1000$ points sampled for estimating $\psi(x, 0)$, $N_b = 300$ points for enforcing the boundary conditions (we assume zero boundary conditions), and $N_f = 60000$ collocation points to enforce the corresponding equation inside the solution domain, all points sampled uniformly for $x \in [-2, 2]$ and $t \in [0, 1]$.

\cref{fig:boson_interact_2d_both_new}(a) summarizes the results of our experiment. The left panel of the figure illustrates the evolution of the density $|\psi (x, t)|^2$ over time for different methods. It is evident that our approach accurately captures the density evolution, while the PINN model initially aligns with the ground truth but deviates from it over time. Sampling collocation points uniformly when density is concentrated in a small region explains why PINN struggles to learn the dynamics of \cref{eq:Schrodinger}; we illustrate this effect in Figure \ref{fig:dsm_overview} (d). The right panel demonstrates observables of the system, the averaged mean of $X_i$, and the averaged variance of $X_i$.  Our approach consistently follows the corresponding distribution of $X_i$. On the contrary, the predictions of the PINN model only match the distribution at the initial time steps but fail to accurately represent it as time elapses. Table \ref{table:osc_1d_comparison} shows the error rates for our method and PINNs. In particular, our method performs better in terms of all error rates than the PINN. These findings emphasize the better performance of the proposed method in capturing the dynamics of the Schr\"odinger equation compared to the PINN model.

\begin{figure}[!t]
 \vskip 0.1in
    \centering
    {\includegraphics[width=0.98\textwidth]{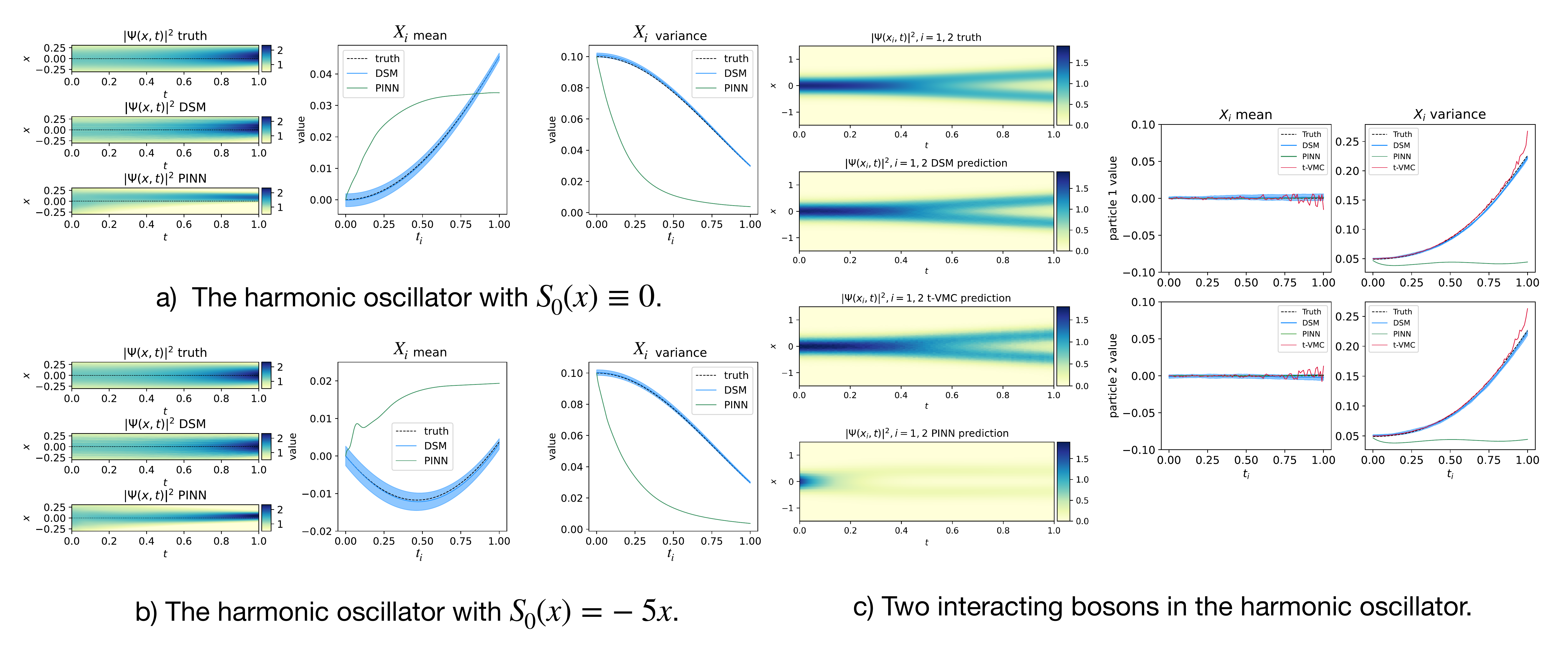}} 
    \caption{Simulation results of PINN and our DSM method: (a) and (b) correspond to a particle in the harmonic oscillator with different initial phases; (c) corresponds to two interacting bosons in the harmonic oscillator. The left panel of these figures corresponds to the density $|\psi(x,t)|^2$ of the ground truth solution, our approach (DSM), PINN, and t-VMC. The right panel presents statistics, including the particle's mean position and variance.}
    \label{fig:boson_interact_2d_both_new}
    \vskip -0.1in
\end{figure}

We also consider a non-zero initial phase $S_0(x) = -5x$. It corresponds to the initial impulse of a particle.
Then ${v}_0(x) \equiv -\frac{5\hbar}{m}$. The PINN inputs are $N_0 = 3000$, $N_b = 300$ points, and  $N_f = 80000$ collocation points.
\cref{fig:boson_interact_2d_both_new} (b) and Table \ref{table:osc_1d_comparison} present the results of our experiment. Our method consistently follows the corresponding ground truth, while the PINN model fails to do so. It indicates the ability of our method to accurately model the behavior of the quantum system. 

In addition, we consider an oscillator model with three non-interacting particles, which can be seen as a 3d system. The results are given in \cref{table:osc_1d_comparison} and \cref{sec:app_res_3d}.

\addtocounter{footnote}{-1} 
\begin{table*}[ht!]
\caption{Results for different harmonic oscillator settings. 
In the 3d setting, the reported errors are averaged across all dimensions. The \textbf{best} results are in bold. 
\footnotemark{}}\label{table:osc_1d_comparison}
\vskip 0.15in
\centering
\scalebox{0.80}{
\begin{tabular}{c c c c c c} 
 \hline
 Setting & Model & $\mathcal{E}_m(X_i)$ $\downarrow$ & $\mathcal{E}_v(X_i)$ $\downarrow$ & $\mathcal{E}(v)$ $\downarrow$ & $\mathcal{E}(u)$ $\downarrow$ \\ [0.5ex] 
 \hline
\multirow{2}{*}{\begin{tabular}{@{}c@{}} $d=1$, \\ $S_0(x)\equiv 0$ \end{tabular}} & PINN & 0.877 $\pm$ 0.263 &  0.766 $\pm$ 0.110 & 24.153 $\pm$ 3.082  & 4.432 $\pm$ 1.000 \\ [0.3ex] 
 & DSM &  $\bf 0.079 \pm 0.007$  &  $\bf 0.019 \pm 0.005$ & $\bf 1.7 \times 10^{-4} \pm 4.9 \times 10^{-5}$  &  $\bf 2.7 \times 10^{-5} \pm  4.9 \times 10^{-6}$  \\ [0.3ex] 
 & Gaussian sampling & 0.355 $\pm$ 0.038 &  0.460 $\pm$ 0.039 & 8.478  $\pm$ 4.651 & 2.431 $\pm$	0.792 \\  [1ex] 
 \hline
 \multirow{2}{*}{\begin{tabular}{@{}c@{}} $d=1$, \\ $S_0(x)=-5x$ \end{tabular}} & PINN & 2.626 $\pm$	0.250  &  0.626 $\pm$ 0.100 & 234.926 $\pm$	57.666 & 65.526 $\pm$ 8.273 \\ [0.3ex] 
 & DSM & $ \bf 0.268 \pm	0.036$   &  $ \bf 0.013 \pm	0.008$ &  $  \bf 1.4 \times 10^{-5} \pm 5.5 \times 10^{-6}$   & $ \bf 2.5 \times 10^{-5} \pm 3.8 \times 10^{-6}$   \\[0.3ex] 
 & Gaussian sampling &  0.886 $\pm$ 0.137  &  0.078 $\pm$ 0.013 &  73.588 $\pm$	6.675 & 16.298 $\pm$	6.311 \\ [1ex] 
 \hline
 \multirow{2}{*}{\begin{tabular}{@{}c@{}} $d=3$, \\ $S_0(x)\equiv 0$ \end{tabular}} & DSM (Nelsonian) & $ \bf 0.080 \pm 0.015$ & $ \bf 0.016 \pm 0.007$ & $\bf 8.1 \times 10^{-5} \pm 2.8 \times 10^{-5}$ & $\bf 4.0 \times 10^{-5} \pm 2.2 \times 10^{-5}$ \\ [0.3ex] 
 &  DSM (Grad Div) & $\bf 0.075 \pm 0.004$ & $ \bf 0.015 \pm 0.004$ &   $ \bf 6.2 \times 10^{-5} \pm 2.2 \times 10^{-5} $ &  $ \bf 3.9 \times 10^{-5} \pm 2.9 \times 10^{-5}$ \\[0.3ex] 
 & Gaussian sampling   & 0.423 $\pm$	0.090  & 4.743 $\pm$	0.337 & 6.505  $\pm$ 3.179 & 3.207 $\pm$	0.911 \\ [1ex] 
 \hline
\multirow{2}{*}{\begin{tabular}{@{}c@{}}  $d=2$, \\ interacting \\ system \end{tabular}} 
& PINN & 0.258 $\pm$ 0.079 & 1.937 $\pm$	0.654 & 20.903  $\pm$ 7.676 & 10.210 $\pm$	3.303    \\[0.3ex] 
& DSM & $\bf 0.092 \pm	0.004$& $\bf 0.055 \pm	0.015$ &  $ \bf 7.6 \times 10^{-5}\pm  1.0 \times 10^{-5}$ & $\bf 6.6\times 10^{-5} \pm 2.8 \times 10^{-5}$  \\ [0.3ex] 
& t-VMC & 0.103 $\pm$ 0.007 & 0.109 $\pm$	0.023 & $2.9 \times 10^{-3} \pm 2.4 \times 10^{-4}$   &  $3.5 \times 10^{-4} \pm 0.8 \times 10^{-4}$   \\
\hline
\end{tabular}}
\vskip -0.1in
\end{table*}

\subsection{Naive Sampling}
To further evaluate our approach, we consider the following sampling scheme: it is possible to replace all measures in the expectations from \cref{eq:L_full} with a Gaussian noise $\mathcal{N}(0,1)$. Minimizing this loss perfectly would imply that the PDE is satisfied for all values $x, t$. 
\cref{table:osc_1d_comparison} shows worse quantitative results compared to our approach in the setting from \cref{sec:harm_osc}.
More detailed results, including the singular initial condition and 3d harmonic oscillator setting, are given in Appendix \ref{app:naive_sampling}.

\subsection{Interacting System}\label{sec:interact_main}
Next, we consider a system of two interacting bosons in a harmonic trap with a soft contact term $V(x_1,x_2) = \frac{1}{2}m\omega^2(x_1^2 + x_2^2) + \frac{g}{2}\frac{1}{\sqrt{2\pi\sigma^2}}e^{-(x_{1} - x_{2})^2/(2\sigma^2)}$ and initial condition $\psi_{0} \propto e^{-m\omega^2x^2/(2\hbar)}$. We use $\omega=1$, $T=1$, $\sigma^2=0.1$, and $N=1000$. The term $g$ controls interaction strength. When $g = 0$, there is no interaction, and $\psi_{0}$ is the ground state of the corresponding Hamiltonian $\mathcal{H}$. We use $g = 1$ in our simulations. 

\cref{fig:boson_interact_2d_both_new} (c) shows simulation results: our method follows the corresponding ground truth while PINN fails over time. As $t$ increases, the variance of $X_i$ for PINN either decreases or remains relatively constant, contrasting with the dynamics that exhibit more divergent behavior. We hypothesize that such discrepancy in the performance of PINN, particularly in matching statistics, is due to the design choice. Specifically, the output predictions, $\psi(x_i,t)$, made by PINNs are not constrained to adhere to physical meaningfulness, meaning $\int_{\mathbb{R}^d} \big|\psi(x, t)\big|^2\mathrm{d}x$ does not always equal 1, making uncontrolled statistics. 

As for the t-VMC baseline, the results are a good qualitative approximation to the ground truth. The t-VMC ansatz representation comprises Hermite polynomials with two-body interaction terms \citep{carleo2017unitary}, scaling quadratically with the number of basis functions. This representation inherently incorporates knowledge about the ground truth solution. However, even when using the same number of samples and time steps as our DSM approach, t-VMC does not achieve the same level of accuracy, and the t-VMC approach does not perform well beyond $d=3$ (see Appendix \ref{app:scaling_bosons_interact}). We anticipate the performance of t-VMC will further deteriorate for larger systems due to the absence of higher-order interactions in the chosen ansatz. We opted for this polynomial representation for scalability and because our experiments with neural network ansatzes \citep{schmitt2020quantum} did not yield satisfactory results for any $d$. Additional details are provided in Appendix \ref{sec:app_inter_nn_details}.

\addtocounter{footnote}{-1}
\stepcounter{footnote}\footnotetext{The difference between the mean errors of the DSM approach and other methods is statistically significant with a p-value
$< 0.001$ measured by the one-sided Welsh t-test.
Each model is trained and evaluated 10 times independently.
}

\subsubsection{DSM in Higher Dimensions}

To verify that our method can yield reasonable outputs for large many-body systems, we perform experiments on a 100 particle version of the interacting boson system. While ground truth is unavailable for a system of such a large scale, we perform a partial validation of our results by analyzing how the estimated densities change at $x=0$ as a function of the interaction strength $g$. Scaling our method to many particles is straightforward, as we only need to adjust the neural network input size and possibly other parameters, such as a hidden dimension size. The obtained results in \cref{fig:100_bosons_stat} suggest that the time evolution is at least qualitatively reasonable since the one-particle density decays more quickly with increasing interaction strength $g$. In particular, this value should be higher for overlapping particles (a stable system with a low $g$ value) and lower for moving apart particles (a system with a stronger interaction $g$). Furthermore, the low training loss of $10^{-2}$ order achieved by our model suggests that it is indeed representing a process consistent with Schr\"odinger equation, even for these large-scale systems. This experiment demonstrates our ability to scale the DSM approach to large interacting systems easily while providing partial validation of the results through the qualitative analysis of the one-particle density and its dependence on the interaction strength.

\begin{figure}[ht]
\centering
\vskip 0.1in
\includegraphics[width=0.4\textwidth]{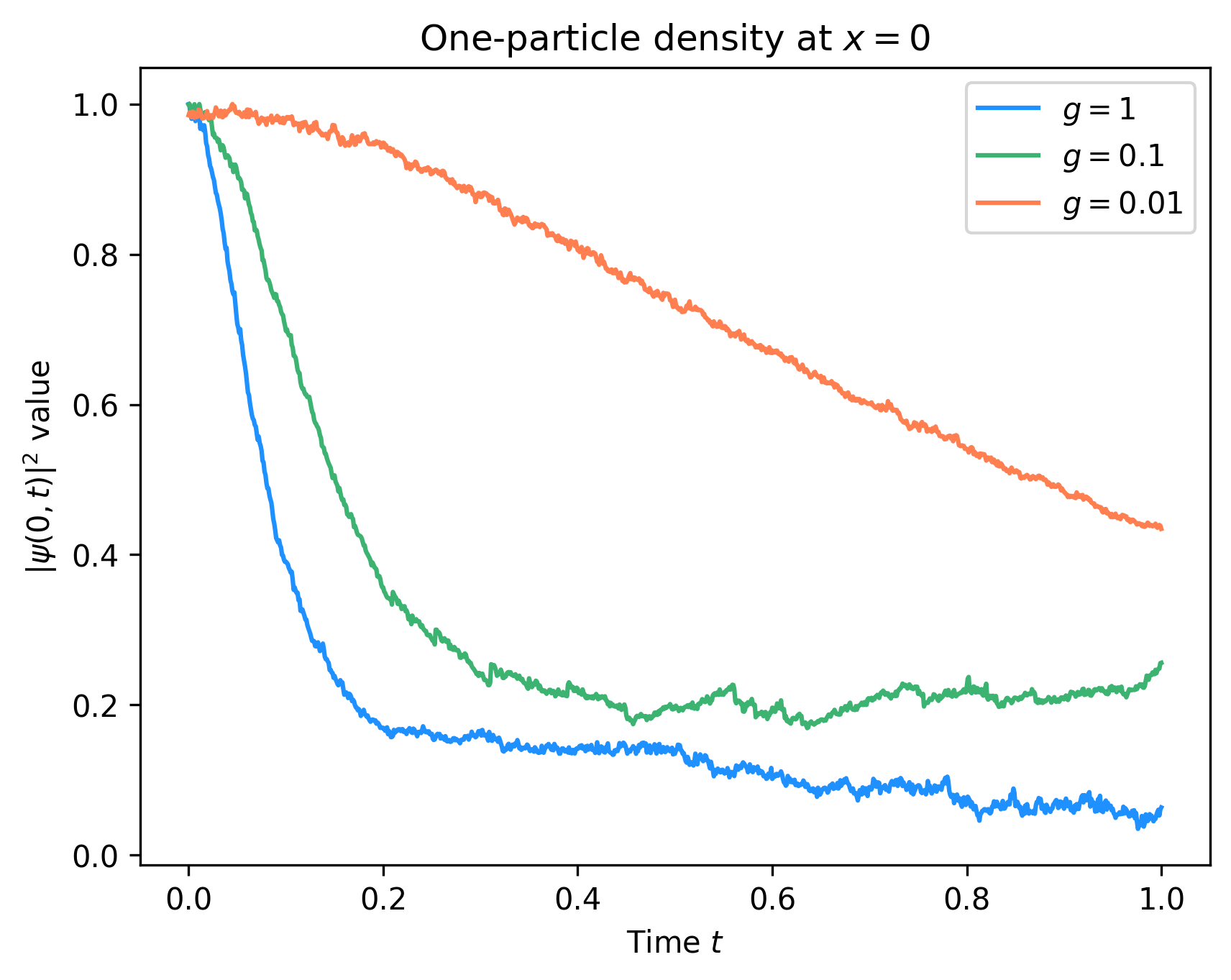}
    \caption{The one-particle density of a system of $100$ interacting bosons for varying interaction strength $g$. 
    For a weaker interaction, the one-particle density is higher, indicating a more stable particle configuration. Conversely, for a stronger interaction, this value decreases, suggesting a more dispersed particle behavior. 
    }
    \label{fig:100_bosons_stat}
    \vskip -0.1in
\end{figure}


\subsection{Computational and Memory Complexity}\label{sec:res_complexity}

\subsubsection{Non-Interacting System} \label{sec:res_complexity_noninter}
We measure training time per epoch and total train time for two versions of the DSM algorithm for $d = 1, 3, 5, 7, 9$: the Nelsonian one and our version. 
The experiments are conducted using the harmonic oscillator model with $S_0(x)\equiv 0$ from \cref{sec:harm_osc}. The results are averaged across 30 runs.
In this setting, the Hamiltonian is separable in the dimensions, and the problem has a linear scaling in $d$. However, given no prior knowledge about that, traditional numerical solvers and PINNs would suffer from exponential growth in data when tackling this task. 
Our method does not rely on a grid in $x$, and avoids computing the Laplacian in the loss function. That establishes lower bounds on the computational complexity of our method, and this bound is sharp for this particular problem. The advantageous behavior of our method is observed without any reliance on prior knowledge about the problem's nature.

\paragraph{Time per epoch} The left panel of Figure \ref{fig:complexity} illustrates the scaling of time per iteration for both the Nelsonian formulation and our proposed approach. The time complexity exhibits a quadratic scaling trend for the Nelsonian version, while our method achieves a more favorable linear scaling behavior with respect to the problem dimension. These empirical observations substantiate our analytical complexity analysis.  

\paragraph{Total training time} The right panel of Figure \ref{fig:complexity} demonstrates the total training time of our version versus the problem dimension. We train our models until the training loss reaches a threshold of $2.5 \times 10^{-5}$. We observe that the total training time exhibits a linear scaling trend as the dimensionality $d$ increases.  The performance errors are presented in Appendix \ref{app:scaling_noninteract}.

\begin{figure}[ht]
\centering
\includegraphics[width=0.70\textwidth]{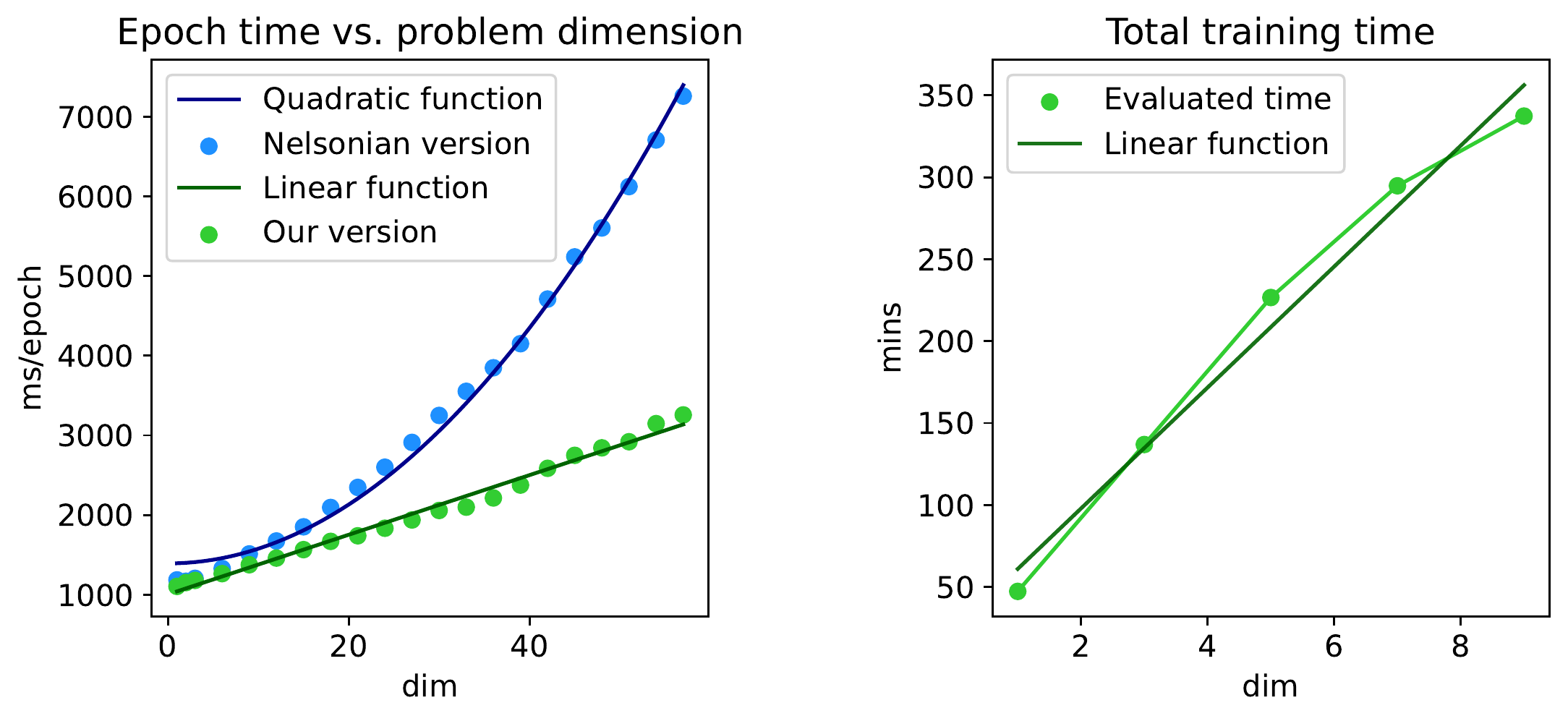}
    \caption{Empirical complexity evaluation of our method for the non-interacting system.}
    \label{fig:complexity}
\end{figure}

\subsubsection{Interacting System}\label{sec:scaling_bosons_interact}

We study the scaling capabilities of our DSM approach in the setting from \cref{sec:interact_main}, comparing the performance of our algorithm with a numerical solver based on the Crank–Nicolson method. \cref{table:num_solver_scaling} reports time and memory usage of the numerical solver. \cref{table:dsm_scaling} shows training time, time per epoch, and memory usage for our method. More details and illustrations of obtained solutions are given in \cref{app:scaling_bosons_interact}.

\paragraph{Memory} DSM memory usage and time per epoch grow linearly in $d$ (according to our theory and evident in our numerical results) in contrast to the Crank-Nikolson solver, whose memory usage grows exponentially since discretization matrices are of  $N^d \times N^d$ size. As a consequence, we are unable to execute the Crank-Nicolson method for $d > 4$ on our computational system due to memory constraints. The results show that our method is far more memory efficient for larger $d$. 

\paragraph{Compute time} While the total compute times of our DSM method, including training, are longer than those of the Crank-Nicolson solver for smaller values of $d$, the scaling trends suggest a computational advantage as $d$ increases. In general, DSM is expected to scale quadratically with the problem dimension as there are pairwise interactions in our potential function.

\begin{table}[ht!]
\caption{Time (s) to get a solution and memory usage (Gb) of the Crank-Nicolson method for different problem dimensions (interacting bosons).} \label{table:num_solver_scaling}
\vskip 0.1in
\centering
\scalebox{0.95}{
\begin{tabular}{c c c c } 
 \hline
 ~ & $d=2$ & $d=3$ & $d=4$ \\ [0.5ex] 
 \hline
Time &  0.75 & 35.61 & 2363 \\ 
Memory usage &   7.4 & 10.6  & 214 \\[1ex] 
 \hline
\end{tabular}
}
\end{table}

\begin{table}[ht!]
\caption{Training time (s), time per epoch (s/epoch), and memory usage (Gb) of our method for different problem dimensions (interacting bosons).} \label{table:dsm_scaling}
\vskip 0.1in
\centering
\scalebox{0.95}{
\begin{tabular}{c c c c c} 
 \hline
 ~ & $d=2$ & $d=3$ & $d=4$ & $d=5$ \\ [0.5ex] 
 \hline
Training time  &  1770 & 3618 & 5850 & 9240 \\ 
Time per epoch &  0.52 & 1.09 & 1.16 & 1.24 \\
Memory usage &  17.0 & 22.5 & 28.0 & 33.5 \\[1ex] 
 \hline
\end{tabular}
}
\end{table}

\section{Discussion and Limitations}
This paper considers the simplest case of the linear spinless Schr\"odinger equation on a flat manifold $\mathbb{R}^d$ with a smooth potential. For many practical setups, such as quantum chemistry, quantum computing, or condensed matter physics, our approach should be modified, e.g., by adding a spin component or by considering some approximation and, therefore, requires additional validations that are beyond the scope of this work. We have shown evidence of adaptation of our method to one kind of low-dimensional structure, but this paper does not explore a broader range of systems with low latent dimension.

\section{Conclusion}\label{sec:conclusion}
We develop a new algorithm for simulating quantum mechanics that addresses the curse of dimensionality by leveraging the latent low-dimensional structure of the system. This approach is based on a modification of the stochastic mechanics theory that establishes a correspondence between the Schr\"odinger equation and a diffusion process. We learn the drifts of this diffusion process using deep learning to sample from the corresponding quantum density. We believe that our approach has the potential to bring to quantum mechanics simulation the same progress that deep learning has enabled in artificial intelligence.
We provide future work discussion in Appendix \ref{app:future_work}.

\section*{Acknowledgements}

The authors gratefully acknowledge the support of DOE DE-SC0022232, NSF DMS-2023109, NSF PHY2317138, NSF 2209892, and the University of Chicago Data Science Institute. Peter Y. Lu gratefully acknowledges the support of the Eric and Wendy Schmidt AI in Science Postdoctoral Fellowship, a Schmidt Futures program.


\bibliography{references}

\begin{thebibliography}{80}
\providecommand{\natexlab}[1]{#1}
\providecommand{\url}[1]{\texttt{#1}}
\expandafter\ifx\csname urlstyle\endcsname\relax
  \providecommand{\doi}[1]{doi: #1}\else
  \providecommand{\doi}{doi: \begingroup \urlstyle{rm}\Url}\fi

\bibitem[Cances et~al.(2003)Cances, Defranceschi, Kutzelnigg, Le~Bris, and
  Maday]{cances2003computational}
Eric Cances, Mireille Defranceschi, Werner Kutzelnigg, Claude Le~Bris, and Yvon
  Maday.
\newblock Computational quantum chemistry: a primer.
\newblock \emph{Handbook of numerical analysis}, 10:\penalty0 3--270, 2003.

\bibitem[Nakatsuji(2012)]{nakatsuji2012discovery}
Hiroshi Nakatsuji.
\newblock Discovery of a general method of solving the {S}chr\"odinger and
  {D}irac equations that opens a way to accurately predictive quantum
  chemistry.
\newblock \emph{Accounts of Chemical Research}, 45\penalty0 (9):\penalty0
  1480--1490, 2012.

\bibitem[Ganesan et~al.(2017)Ganesan, Coote, and Barakat]{ganesan2017molecular}
Aravindhan Ganesan, Michelle~L Coote, and Khaled Barakat.
\newblock {Molecular dynamics-driven drug discovery: leaping forward with
  confidence}.
\newblock \emph{Drug discovery today}, 22\penalty0 (2):\penalty0 249--269,
  2017.

\bibitem[Heifetz(2020)]{heifetz2020quantum}
Alexander Heifetz.
\newblock \emph{Quantum mechanics in drug discovery}.
\newblock Springer, 2020.

\bibitem[Boghosian and Taylor~IV(1998)]{boghosian1998quantum}
Bruce~M Boghosian and Washington Taylor~IV.
\newblock {Quantum lattice-gas model for the many-particle {S}chr{\"o}dinger
  equation in d dimensions}.
\newblock \emph{Physical Review E}, 57\penalty0 (1):\penalty0 54, 1998.

\bibitem[Liu et~al.(2013)Liu, Tian, Liu, Qin, and L{\"u}]{liu2013bilinear}
Rong-Xiang Liu, Bo~Tian, Li-Cai Liu, Bo~Qin, and Xing L{\"u}.
\newblock {Bilinear forms, N-soliton solutions and soliton interactions for a
  fourth-order dispersive nonlinear {S}chr{\"o}dinger equation in
  condensed-matter physics and biophysics}.
\newblock \emph{Physica B: Condensed Matter}, 413:\penalty0 120--125, 2013.

\bibitem[Grover(2001)]{grover2001Schrodinger}
Lov~K Grover.
\newblock {From {S}chr{\"o}dinger’s equation to the quantum search
  algorithm}.
\newblock \emph{Pramana}, 56:\penalty0 333--348, 2001.

\bibitem[Papageorgiou and Traub(2013)]{papageorgiou2013measures}
Anargyros Papageorgiou and Joseph~F Traub.
\newblock Measures of quantum computing speedup.
\newblock \emph{Physical Review A}, 88\penalty0 (2):\penalty0 022316, 2013.

\bibitem[Bellman(2010)]{bellman2010dynamic}
Richard~E Bellman.
\newblock \emph{Dynamic programming}.
\newblock Princeton university press, 2010.

\bibitem[Poggio et~al.(2017)Poggio, Mhaskar, Rosasco, Miranda, and
  Liao]{poggio2017and}
Tomaso Poggio, Hrushikesh Mhaskar, Lorenzo Rosasco, Brando Miranda, and Qianli
  Liao.
\newblock Why and when can deep-but not shallow-networks avoid the curse of
  dimensionality: a review.
\newblock \emph{International Journal of Automation and Computing}, 14\penalty0
  (5):\penalty0 503--519, 2017.

\bibitem[Madala et~al.(2023)Madala, Chandrasekaran, and Bunk]{madala2023cnns}
Vamshi~C Madala, Shivkumar Chandrasekaran, and Jason Bunk.
\newblock {CNN}s avoid curse of dimensionality by learning on patches.
\newblock \emph{IEEE Open Journal of Signal Processing}, 2023.

\bibitem[Manzhos(2020)]{manzhos2020machine}
Sergei Manzhos.
\newblock Machine learning for the solution of the {S}chr{\"o}dinger equation.
\newblock \emph{Machine Learning: Science and Technology}, 1\penalty0
  (1):\penalty0 013002, 2020.

\bibitem[E and Yu(2017)]{e2017deep}
Weinan E and Bing Yu.
\newblock {The Deep Ritz method: A deep learning-based numerical algorithm for
  solving variational problems}, 2017.

\bibitem[Han et~al.(2018)Han, Jentzen, and E]{han2018solving}
Jiequn Han, Arnulf Jentzen, and Weinan E.
\newblock Solving high-dimensional partial differential equations using deep
  learning.
\newblock \emph{Proceedings of the National Academy of Sciences}, 115\penalty0
  (34):\penalty0 8505--8510, 2018.

\bibitem[Raissi et~al.(2019)Raissi, Perdikaris, and
  Karniadakis]{raissi2019physics}
Maziar Raissi, Paris Perdikaris, and George~E Karniadakis.
\newblock Physics-informed neural networks: A deep learning framework for
  solving forward and inverse problems involving nonlinear partial differential
  equations.
\newblock \emph{Journal of Computational physics}, 378:\penalty0 686--707,
  2019.

\bibitem[Weinan et~al.(2021)Weinan, Han, and Jentzen]{weinan2021algorithms}
E~Weinan, Jiequn Han, and Arnulf Jentzen.
\newblock Algorithms for solving high dimensional {PDE}s: from nonlinear {Monte
  Carlo} to machine learning.
\newblock \emph{Nonlinearity}, 35\penalty0 (1):\penalty0 278, 2021.

\bibitem[Nelson(1966)]{NelsonOG}
Edward Nelson.
\newblock Derivation of the {S}chr\"odinger equation from {N}ewtonian
  mechanics.
\newblock \emph{Phys. Rev.}, 150:\penalty0 1079--1085, Oct 1966.
\newblock \doi{10.1103/PhysRev.150.1079}.
\newblock URL \url{https://link.aps.org/doi/10.1103/PhysRev.150.1079}.

\bibitem[Guerra(1995)]{guerra1995introduction}
Francesco Guerra.
\newblock Introduction to {N}elson stochastic mechanics as a model for quantum
  mechanics.
\newblock \emph{The Foundations of Quantum Mechanics—Historical Analysis and
  Open Questions: Lecce, 1993}, pages 339--355, 1995.

\bibitem[Nelson(2005)]{nelson2005mystery}
Edward Nelson.
\newblock The mystery of stochastic mechanics.
\newblock \emph{Unpublished manuscript}, 2005.
\newblock URL \url{https://web.math.princeton.edu/~nelson/papers/talk.pdf}.

\bibitem[Yang et~al.(2022)Yang, Zhang, Song, Hong, Xu, Zhao, Shao, Zhang, Cui,
  and Yang]{yang2022diffusion}
Ling Yang, Zhilong Zhang, Yang Song, Shenda Hong, Runsheng Xu, Yue Zhao,
  Yingxia Shao, Wentao Zhang, Bin Cui, and Ming-Hsuan Yang.
\newblock Diffusion models: A comprehensive survey of methods and applications.
\newblock \emph{arXiv preprint arXiv:2209.00796}, 2022.

\bibitem[Serva(1988)]{serva1988relativistic}
Maurizio Serva.
\newblock Relativistic stochastic processes associated to {Klein-Gordon}
  equation.
\newblock \emph{Annales de l'IHP Physique th{\'e}orique}, 49\penalty0
  (4):\penalty0 415--432, 1988.

\bibitem[Lindgren and Liukkonen(2019)]{lindgren2019quantum}
Jussi Lindgren and Jukka Liukkonen.
\newblock Quantum mechanics can be understood through stochastic optimization
  on spacetimes.
\newblock \emph{Scientific reports}, 9\penalty0 (1):\penalty0 19984, 2019.

\bibitem[Eriksen(2020)]{eriksen2020mean}
Janus~J Eriksen.
\newblock Mean-field density matrix decompositions.
\newblock \emph{The Journal of Chemical Physics}, 153\penalty0 (21):\penalty0
  214109, 2020.

\bibitem[dos Reis et~al.(2022)dos Reis, Engelhardt, and
  Smith]{dos2022simulation}
Gon{\c{c}}alo dos Reis, Stefan Engelhardt, and Greig Smith.
\newblock Simulation of {M}c{K}ean--{V}lasov {SDE}s with super-linear growth.
\newblock \emph{IMA Journal of Numerical Analysis}, 42\penalty0 (1):\penalty0
  874--922, 2022.

\bibitem[Bruna et~al.(2022)Bruna, Peherstorfer, and
  Vanden-Eijnden]{bruna2022neural}
Joan Bruna, Benjamin Peherstorfer, and Eric Vanden-Eijnden.
\newblock {Neural {Galerkin} scheme with active learning for high-dimensional
  evolution equations}.
\newblock \emph{arXiv preprint arXiv:2203.01360}, 2022.

\bibitem[Han et~al.(2019)Han, Zhang, and Weinan]{han2019solving}
Jiequn Han, Linfeng Zhang, and E~Weinan.
\newblock {Solving many-electron Schr{\"o}dinger equation using deep neural
  networks}.
\newblock \emph{Journal of Computational Physics}, 399:\penalty0 108929, 2019.

\bibitem[Pfau et~al.(2020)Pfau, Spencer, de~G.~Matthews, and
  Foulkes]{pfau2020ferminet}
D.~Pfau, J.S. Spencer, A.G. de~G.~Matthews, and W.M.C. Foulkes.
\newblock Ab-initio solution of the many-electron {S}chr{\"o}dinger equation
  with deep neural networks.
\newblock \emph{Phys. Rev. Research}, 2:\penalty0 033429, 2020.
\newblock \doi{10.1103/PhysRevResearch.2.033429}.
\newblock URL \url{https://link.aps.org/doi/10.1103/PhysRevResearch.2.033429}.

\bibitem[Hermann et~al.(2020)Hermann, Sch{\"a}tzle, and
  No{\'e}]{hermann2020deep}
Jan Hermann, Zeno Sch{\"a}tzle, and Frank No{\'e}.
\newblock Deep-neural-network solution of the electronic {S}chr{\"o}dinger
  equation.
\newblock \emph{Nature Chemistry}, 12\penalty0 (10):\penalty0 891--897, 2020.

\bibitem[Barker(1979)]{barker1979quantum}
John~A Barker.
\newblock A quantum-statistical {Monte Carlo} method; path integrals with
  boundary conditions.
\newblock \emph{The Journal of Chemical Physics}, 70\penalty0 (6):\penalty0
  2914--2918, 1979.

\bibitem[Corney and Drummond(2004)]{Corney_2004}
J.~F. Corney and P.~D. Drummond.
\newblock Gaussian quantum {Monte Carlo} methods for fermions and bosons.
\newblock \emph{Physical Review Letters}, 93\penalty0 (26), dec 2004.
\newblock \doi{10.1103/physrevlett.93.260401}.
\newblock URL \url{https://doi.org/10.1103\%2Fphysrevlett.93.260401}.

\bibitem[Austin et~al.(2012)Austin, Zubarev, and Lester~Jr]{austin2012quantum}
Brian~M Austin, Dmitry~Yu Zubarev, and William~A Lester~Jr.
\newblock Quantum {Monte Carlo} and related approaches.
\newblock \emph{Chemical reviews}, 112\penalty0 (1):\penalty0 263--288, 2012.

\bibitem[Carleo et~al.(2017)Carleo, Cevolani, Sanchez-Palencia, and
  Holzmann]{carleo2017unitary}
Giuseppe Carleo, Lorenzo Cevolani, Laurent Sanchez-Palencia, and Markus
  Holzmann.
\newblock {Unitary dynamics of strongly interacting Bose gases with the
  time-dependent variational Monte Carlo method in continuous space}.
\newblock \emph{Physical Review X}, 7\penalty0 (3):\penalty0 031026, 2017.

\bibitem[Carleo and Troyer(2017)]{carleo2017solving}
Giuseppe Carleo and Matthias Troyer.
\newblock Solving the quantum many-body problem with artificial neural
  networks.
\newblock \emph{Science}, 355\penalty0 (6325):\penalty0 602--606, 2017.

\bibitem[Schmitt and Heyl(2020)]{schmitt2020quantum}
Markus Schmitt and Markus Heyl.
\newblock Quantum many-body dynamics in two dimensions with artificial neural
  networks.
\newblock \emph{Physical Review Letters}, 125\penalty0 (10):\penalty0 100503,
  2020.

\bibitem[Yao et~al.(2021)Yao, Gomes, Zhang, Wang, Ho, Iadecola, and
  Orth]{yao2021adaptive}
Yong-Xin Yao, Niladri Gomes, Feng Zhang, Cai-Zhuang Wang, Kai-Ming Ho, Thomas
  Iadecola, and Peter~P Orth.
\newblock Adaptive variational quantum dynamics simulations.
\newblock \emph{PRX Quantum}, 2\penalty0 (3):\penalty0 030307, 2021.

\bibitem[Sinibaldi et~al.(2023)Sinibaldi, Giuliani, Carleo, and
  Vicentini]{sinibaldi2023unbiasing}
Alessandro Sinibaldi, Clemens Giuliani, Giuseppe Carleo, and Filippo Vicentini.
\newblock {Unbiasing time-dependent Variational Monte Carlo by projected
  quantum evolution}.
\newblock \emph{arXiv preprint arXiv:2305.14294}, 2023.

\bibitem[Schlick(2010)]{schlick2010molecular}
Tamar Schlick.
\newblock \emph{Molecular modeling and simulation: an interdisciplinary guide},
  volume~2.
\newblock Springer, 2010.

\bibitem[Ho et~al.(2020)Ho, Jain, and Abbeel]{ho2020denoising}
Jonathan Ho, Ajay Jain, and Pieter Abbeel.
\newblock Denoising diffusion probabilistic models.
\newblock \emph{Advances in Neural Information Processing Systems},
  33:\penalty0 6840--6851, 2020.

\bibitem[Cliffe et~al.(2011)Cliffe, Giles, Scheichl, and
  Teckentrup]{cliffe2011multilevel}
K~Andrew Cliffe, Mike~B Giles, Robert Scheichl, and Aretha~L Teckentrup.
\newblock Multilevel {Monte Carlo} methods and applications to elliptic {PDE}s
  with random coefficients.
\newblock \emph{Computing and Visualization in Science}, 14:\penalty0 3--15,
  2011.

\bibitem[Warin(2018)]{warin2018nesting}
Xavier Warin.
\newblock Nesting {Monte Carlo} for high-dimensional non-linear {PDE}s.
\newblock \emph{{Monte Carlo} Methods and Applications}, 24\penalty0
  (4):\penalty0 225--247, 2018.

\bibitem[Del~Moral(2004)]{del2004feynman}
Pierre Del~Moral.
\newblock \emph{Feynman-{K}ac formulae}.
\newblock Springer, 2004.

\bibitem[Yan(1994)]{yan1994feynman}
Jia-An Yan.
\newblock From {F}eynman-{K}ac formula to {F}eynman integrals via analytic
  continuation.
\newblock \emph{Stochastic processes and their applications}, 54\penalty0
  (2):\penalty0 215--232, 1994.

\bibitem[N{\"u}sken and Richter(2021{\natexlab{a}})]{nusken2021solving}
Nikolas N{\"u}sken and Lorenz Richter.
\newblock {Solving high-dimensional Hamilton--Jacobi--Bellman PDEs using neural
  networks: perspectives from the theory of controlled diffusions and measures
  on path space}.
\newblock \emph{Partial differential equations and applications}, 2:\penalty0
  1--48, 2021{\natexlab{a}}.

\bibitem[N{\"u}sken and Richter(2021{\natexlab{b}})]{nusken2021interpolating}
Nikolas N{\"u}sken and Lorenz Richter.
\newblock {Interpolating between BSDEs and PINNs: deep learning for elliptic
  and parabolic boundary value problems}.
\newblock \emph{arXiv preprint arXiv:2112.03749}, 2021{\natexlab{b}}.

\bibitem[Bohm(1952)]{PhysRev.85.166}
David Bohm.
\newblock A suggested interpretation of the quantum theory in terms of "hidden"
  variables. {I}.
\newblock \emph{Phys. Rev.}, 85:\penalty0 166--179, Jan 1952.
\newblock \doi{10.1103/PhysRev.85.166}.
\newblock URL \url{https://link.aps.org/doi/10.1103/PhysRev.85.166}.

\bibitem[Fehrman et~al.(2019)Fehrman, Gess, and
  Jentzen]{fehrman2019convergence}
Benjamin Fehrman, Benjamin Gess, and Arnulf Jentzen.
\newblock Convergence rates for the stochastic gradient descent method for
  non-convex objective functions, 2019.

\bibitem[Kloeden and Platen(1992)]{kloeden1992stochastic}
Peter~E Kloeden and Eckhard Platen.
\newblock \emph{Stochastic differential equations}.
\newblock Springer, 1992.

\bibitem[Hutchinson(1989)]{hutchinson}
Michael~F. Hutchinson.
\newblock {A stochastic estimator of the trace of the influence matrix for
  {Laplacian} smoothing splines}.
\newblock \emph{Communication in Statistics- Simulation and Computation},
  18:\penalty0 1059--1076, 01 1989.
\newblock \doi{10.1080/03610919008812866}.

\bibitem[Aziznejad et~al.(2020)Aziznejad, Gupta, Campos, and
  Unser]{aziznejad2020deep}
Shayan Aziznejad, Harshit Gupta, Joaquim Campos, and Michael Unser.
\newblock Deep neural networks with trainable activations and controlled
  {L}ipschitz constant.
\newblock \emph{IEEE Transactions on Signal Processing}, 68:\penalty0
  4688--4699, 2020.

\bibitem[Virtanen et~al.(2020)Virtanen, Gommers, Oliphant, Haberland, Reddy,
  Cournapeau, Burovski, Peterson, Weckesser, Bright, et~al.]{virtanen2020scipy}
Pauli Virtanen, Ralf Gommers, Travis~E Oliphant, Matt Haberland, Tyler Reddy,
  David Cournapeau, Evgeni Burovski, Pearu Peterson, Warren Weckesser, Jonathan
  Bright, et~al.
\newblock Scipy 1.0: fundamental algorithms for scientific computing in python.
\newblock \emph{Nature methods}, 17\penalty0 (3):\penalty0 261--272, 2020.

\bibitem[Kingma and Ba(2014)]{kingma2014adam}
Diederik~P Kingma and Jimmy Ba.
\newblock Adam: A method for stochastic optimization.
\newblock \emph{arXiv preprint arXiv:1412.6980}, 2014.

\bibitem[Jacot et~al.(2018)Jacot, Gabriel, and Hongler]{jacot2018neural}
Arthur Jacot, Franck Gabriel, and Cl{\'e}ment Hongler.
\newblock Neural tangent kernel: Convergence and generalization in neural
  networks.
\newblock \emph{Advances in neural information processing systems}, 31, 2018.

\bibitem[Wang et~al.(2022)Wang, Yu, and Perdikaris]{wang2022and}
Sifan Wang, Xinling Yu, and Paris Perdikaris.
\newblock When and why {PINN}s fail to train: a neural tangent kernel
  perspective.
\newblock \emph{Journal of Computational Physics}, 449:\penalty0 110768, 2022.

\bibitem[Raginsky et~al.(2017)Raginsky, Rakhlin, and
  Telgarsky]{raginsky2017nonconvex}
Maxim Raginsky, Alexander Rakhlin, and Matus Telgarsky.
\newblock Non-convex learning via stochastic gradient {L}angevin dynamics: a
  nonasymptotic analysis, 2017.

\bibitem[Muzellec et~al.(2020)Muzellec, Sato, Massias, and
  Suzuki]{muzellec2020dimensionfree}
Boris Muzellec, Kanji Sato, Mathurin Massias, and Taiji Suzuki.
\newblock Dimension-free convergence rates for gradient {L}angevin dynamics in
  {RKHS}, 2020.

\bibitem[Jiang and Willett(2022)]{jiang2022embed}
Ruoxi Jiang and Rebecca Willett.
\newblock {Embed and Emulate: Learning to estimate parameters of dynamical
  systems with uncertainty quantification}.
\newblock \emph{Advances in Neural Information Processing Systems},
  35:\penalty0 11918--11933, 2022.

\bibitem[Vicentini et~al.(2022)Vicentini, Hofmann, Szab{\'o}, Wu, Roth,
  Giuliani, Pescia, Nys, Vargas-Calder{\'o}n, Astrakhantsev,
  et~al.]{vicentini2022netket}
Filippo Vicentini, Damian Hofmann, Attila Szab{\'o}, Dian Wu, Christopher Roth,
  Clemens Giuliani, Gabriel Pescia, Jannes Nys, Vladimir Vargas-Calder{\'o}n,
  Nikita Astrakhantsev, et~al.
\newblock Net{K}et 3: {M}achine learning toolbox for many-body quantum systems.
\newblock \emph{SciPost Physics Codebases}, page 007, 2022.

\bibitem[Alvarez(1986)]{about_line_bundles}
Orlando Alvarez.
\newblock String theory and holomorphic line bundles.
\newblock In \emph{{7th Workshop on Grand Unification: ICOBAN 86}}, 9 1986.

\bibitem[Wallstrom(1989)]{inequavalency}
Timothy Wallstrom.
\newblock On the derivation of the {S}chr\"odinger equation from stochastic
  mechanics.
\newblock \emph{Foundations of Physics Letters}, 2:\penalty0 113--126, 03 1989.
\newblock \doi{10.1007/BF00696108}.

\bibitem[Prieto and Vitolo(2014)]{about_single_valuedness}
Carlos~Tejero Prieto and Raffaele Vitolo.
\newblock On the geometry of the energy operator in quantum mechanics.
\newblock \emph{International Journal of Geometric Methods in Modern Physics},
  11\penalty0 (07):\penalty0 1460027, aug 2014.
\newblock \doi{10.1142/s0219887814600275}.
\newblock URL \url{https://doi.org/10.1142%2Fs0219887814600275}.

\bibitem[Anderson(1982)]{ANDERSON1982313}
Brian~D.O. Anderson.
\newblock Reverse-time diffusion equation models.
\newblock \emph{Stochastic Processes and their Applications}, 12\penalty0
  (3):\penalty0 313--326, 1982.

\bibitem[Colin and Struyve(2010)]{colin2010quantum}
Samuel Colin and Ward Struyve.
\newblock Quantum non-equilibrium and relaxation to equilibrium for a class of
  de {B}roglie--{B}ohm-type theories.
\newblock \emph{New Journal of Physics}, 12\penalty0 (4):\penalty0 043008,
  2010.

\bibitem[Boffi and Vanden-Eijnden(2023)]{boffi2023probability}
Nicholas~M. Boffi and Eric Vanden-Eijnden.
\newblock Probability flow solution of the {F}okker-{P}lanck equation, 2023.

\bibitem[Griewank and Walther(2008)]{doi:10.1137/1.9780898717761}
Andreas Griewank and Andrea Walther.
\newblock \emph{Evaluating Derivatives}.
\newblock Society for Industrial and Applied Mathematics, second edition, 2008.
\newblock \doi{10.1137/1.9780898717761}.
\newblock URL \url{https://epubs.siam.org/doi/abs/10.1137/1.9780898717761}.

\bibitem[Baldi and Baldi(2017)]{baldi2017stochastic}
Paolo Baldi and Paolo Baldi.
\newblock \emph{Stochastic calculus}.
\newblock Springer, 2017.

\bibitem[Nelson(2020)]{nelson2020dynamical}
Edward Nelson.
\newblock \emph{Dynamical theories of Brownian motion}, volume 106.
\newblock Princeton university press, 2020.

\bibitem[Gronwall(1919)]{gronwall}
T.~H. Gronwall.
\newblock Note on the derivatives with respect to a parameter of the solutions
  of a system of differential equations.
\newblock \emph{Annals of Mathematics}, 20\penalty0 (4):\penalty0 292--296,
  1919.
\newblock ISSN 0003486X.
\newblock URL \url{http://www.jstor.org/stable/1967124}.

\bibitem[Woolley and Sutcliffe(1977)]{woolley1977molecular}
RG~Woolley and BT~Sutcliffe.
\newblock Molecular structure and the {Born}—{O}ppenheimer approximation.
\newblock \emph{Chemical Physics Letters}, 45\penalty0 (2):\penalty0 393--398,
  1977.

\bibitem[Derakhshani and Bacciagaluppi(2022)]{derakhshani2022multitime}
Maaneli Derakhshani and Guido Bacciagaluppi.
\newblock On multi-time correlations in stochastic mechanics, 2022.

\bibitem[Smith and Smith(1985)]{smith1985numerical}
Gordon~D Smith and Gordon~D Smith.
\newblock \emph{Numerical solution of partial differential equations: finite
  difference methods}.
\newblock Oxford university press, 1985.

\bibitem[May(1999)]{may1999concise}
J~Peter May.
\newblock \emph{A concise course in algebraic topology}.
\newblock University of Chicago press, 1999.

\bibitem[Gy{\"o}ngy(1986)]{gyongy1986mimicking}
Istv{\'a}n Gy{\"o}ngy.
\newblock Mimicking the one-dimensional marginal distributions of processes
  having an {I}t{\^o} differential.
\newblock \emph{Probability theory and related fields}, 71\penalty0
  (4):\penalty0 501--516, 1986.

\bibitem[Ilie et~al.(2015)Ilie, Jackson, and Enright]{ilie2015adaptive}
Silvana Ilie, Kenneth~R Jackson, and Wayne~H Enright.
\newblock Adaptive time-stepping for the strong numerical solution of
  stochastic differential equations.
\newblock \emph{Numerical Algorithms}, 68\penalty0 (4):\penalty0 791--812,
  2015.

\bibitem[Blanchard et~al.(2005)Blanchard, Combe, Sirugue, and
  Sirugue-Collin]{blanchard2005stochastic}
Ph~Blanchard, Ph~Combe, M~Sirugue, and M~Sirugue-Collin.
\newblock Stochastic jump processes associated with {D}irac equation.
\newblock In \emph{Stochastic Processes in Classical and Quantum Systems:
  Proceedings of the 1st Ascona-Como International Conference, Held in Ascona,
  Ticino (Switzerland), June 24--29, 1985}, pages 65--86. Springer, 2005.

\bibitem[Serkin and Hasegawa(2000)]{serkin2000novel}
Vladimir~N Serkin and Akira Hasegawa.
\newblock Novel soliton solutions of the nonlinear {S}chr{\"o}dinger equation
  model.
\newblock \emph{Physical Review Letters}, 85\penalty0 (21):\penalty0 4502,
  2000.

\bibitem[Buckdahn et~al.(2017)Buckdahn, Li, Peng, and
  Rainer]{10.1214/15-AOP1076}
Rainer Buckdahn, Juan Li, Shige Peng, and Catherine Rainer.
\newblock {Mean-field stochastic differential equations and associated {PDE}s}.
\newblock \emph{The Annals of Probability}, 45\penalty0 (2):\penalty0 824 --
  878, 2017.
\newblock \doi{10.1214/15-AOP1076}.
\newblock URL \url{https://doi.org/10.1214/15-AOP1076}.

\bibitem[Dankel(1970)]{dankel1970mechanics}
Thaddeus~George Dankel.
\newblock Mechanics on manifolds and the incorporation of spin into {N}elson's
  stochastic mechanics.
\newblock \emph{Archive for Rational Mechanics and Analysis}, 37:\penalty0
  192--221, 1970.

\bibitem[De~Angelis et~al.(1991)De~Angelis, Rinaldi, and
  Serva]{de1991imaginary}
GF~De~Angelis, A~Rinaldi, and M~Serva.
\newblock Imaginary-time path integral for a relativistic spin-(1/2) particle
  in a magnetic field.
\newblock \emph{Europhysics Letters}, 14\penalty0 (2):\penalty0 95, 1991.

\bibitem[Vaswani et~al.(2017)Vaswani, Shazeer, Parmar, Uszkoreit, Jones, Gomez,
  Kaiser, and Polosukhin]{vaswani2017attention}
Ashish Vaswani, Noam Shazeer, Niki Parmar, Jakob Uszkoreit, Llion Jones,
  Aidan~N Gomez, {\L}ukasz Kaiser, and Illia Polosukhin.
\newblock Attention is all you need.
\newblock \emph{Advances in neural information processing systems}, 30, 2017.

\bibitem[Neklyudov et~al.(2024)Neklyudov, Nys, Thiede, Carrasquilla, Liu,
  Welling, and Makhzani]{neklyudov2024wasserstein}
Kirill Neklyudov, Jannes Nys, Luca Thiede, Juan Carrasquilla, Qiang Liu, Max
  Welling, and Alireza Makhzani.
\newblock {Wasserstein quantum Monte Carlo: a novel approach for solving the
  quantum many-body Schr{\"o}dinger equation}.
\newblock \emph{Advances in Neural Information Processing Systems}, 36, 2024.

\end{thebibliography}
\newpage
\appendix
\section{Notation}\label{sec:app_notation}
\begin{itemize}
    \item $\langle a, b\rangle  = \sum_{i=1}^{d} a_{i}b_{i}$ for $a,b \in \mathbb{R}^d$ -- a scalar product.
    \item $\|a\| = \sqrt{\langle a, a\rangle}$ for $a \in \mathbb{R}^d$ -- a norm.
    \item $\mathrm{Tr}(A) = \sum_{i=1}^{d} a_{ii}$ for a matrix $A = \big[a_{ij}\big]_{i=1,j=1}^{d,d}$.
    \item $A(t), B(t), C(t), \ldots$ -- stochastic processes indexed by time $t\ge 0$.
    \item $A_{i}, B_{i}, C_{i}, \ldots$ -- approximations to those processes at a discrete time step $i$, $i = 1, \dots, N$, where $N$ is the number of discritization time points.
    \item $a, b, c$ -- other variables.
    \item $\mathbf{A}, \mathbf{B}, \mathbf{C}, \ldots$ -- quantum observables, e.g., $\mathbf{X}(t)$ -- result of quantum measurement of the coordinate of the particle at moment $t$.
    \item $\rho_{A}(x, t)$ -- a density probability of a process $A(t)$ at time $t$.
    \item $\psi(x, t)$ -- a wave function.
    \item $\psi_{0} = \psi(x, 0)$ -- an initial wave function.
    \item $\rho(x, t) = \big|\psi(x, t)\big|^2$ -- a quantum density.
    \item $\rho_{0}(x) = \rho(x, 0)$ -- an initial probability distribution.
    \item $\psi(x, t) = \sqrt{\rho(x, t)}e^{i S(x, t)}$, where $S(x, t)$ -- a single-valued representative of the phase of the wave function.
    \item $\nabla = \Big(\frac{\partial}{\partial x_{1}} \cdot, \ldots,\frac{\partial}{\partial x_{d}} \cdot\Big)$ -- the gradient operator. If $f:\mathbb{R}^d\rightarrow \mathbb{R}^m$, then $\nabla f(x) \in \mathbb{R}^{d\times m}$ is the Jacobian of $f$, in the case of $m=1$ we call it a gradient of $f$.
    \item $\nabla^2  = \Big[\frac{\partial^2}{\partial x_i \partial x_j}\Big]_{i=1,j=1}^{d,d}$ -- the Hessian operator.
    \item $\nabla^2 \cdot A = \Big[\frac{\partial^2}{\partial x_i \partial x_j}a_{ij}\Big]_{i=1,j=1}^{d,d}$ for $A = \big[a_{ij}(x)\big]_{i=1,j=1}^{d,d}$.
    \item $\langle \nabla, \cdot\rangle$ -- the divergence operator, e.g., for $f:\mathbb{R}^d \rightarrow \mathbb{R}^d$, we have $\langle \nabla, f(x)\rangle = \sum_{i=1}^{d} \frac{\partial}{\partial x_{i}}f_{i}(x)$.
    \item $\Delta = \mathrm{Tr}(\nabla^2)$ -- the Laplace operator.
    \item $m$ -- a mass tensor (or a scalar mass).
    \item $\hbar$ -- the reduced Planck's constant.
    \item $\partial_{y} = \frac{\partial}{\partial y}$ -- a short-hand notation for a partial derivative operator.
    \item $\big[A, B\big] = AB - BA$ -- a commutator of two operators. If one of the arguments is a scalar function, we consider a scalar function as a point-wise multiplication operator.
    \item $|z| = \sqrt{x^2 + y^2}$ for a complex number $z = x + iy \in \mathcal{C}, x,y \in \mathbb{R}$.
    \item $\mathcal{N}(\mu, C)$ -- a Gaussian distribution with mean $\mu \in \mathbb{R}^d$ and covariance $C\in \mathbb{R}^{d\times d}$.
    \item $A \sim \rho$ means that $A$ is a random variable with distribution $\rho$. We do not differentiate between "sample from" and "distributed as", but it is evident from context when we consider samples from distribution versus when we say that something has such distribution.
    \item $\delta_{x}$ -- delta-distribution concentrated at $x$. It is a generalized function corresponding to the "density" of a distribution with a singular support $\{x\}$. 
\end{itemize}

\section{DSM Algorithm}\label{sec:app_algo}

We present detailed algorithmic descriptions of our method: Algorithm \ref{alg:generate} for batch generation and Algorithm \ref{alg:train_orig} for model training. During inference, distributions of $X_i$ converge to $\rho=|\psi|^2$, thereby yielding the desired outcome. Furthermore, by solving \cref{eq:nav-s1} on points generated by the current best approximations of $u, v$, the method exhibits self-adaptation behavior. Specifically, it obtains its current belief where $X(t)$ is concentrated, updates its belief, and iterates accordingly. With each iteration, the method progressively focuses on the high-density regions of $\rho$, effectively exploiting the low-dimensional structure of the underlying solution.
\begin{algorithm}
\small
\caption{GenerateBatch($u, v, \rho_{0}, \nu, T, B, N$) -- sample trajectories}\label{alg:generate} 
\begin{algorithmic}
\STATE {\bfseries Physical hyperparams:}  $T$ -- time horizon, $\psi_{0}$ -- initial wave-function.
\STATE {\bfseries Hyperparams: } $\nu \ge 0$ -- diffusion constant, $B \ge 1$ -- batch size, $N \ge 1$ -- time grid size.
\vspace{2pt}
        \STATE $t_{i} =  i T/N$ for $0 \le i\le N$
        \STATE sample $X_{0j} \sim \big|\psi_{0}\big|^2$ for $1 \le j B$
        \vspace{2pt}
        \vspace{2pt}
        \FOR{$1 \le i \le N$}
            \STATE sample $\xi_{j}\sim \mathcal{N}(0, I_{d})$ for $1 \le j \le B$
            \STATE $X_{ij} = X_{(i-1) j} + \frac{T}{N}\big(v_{\theta}(X_{(i-1)j}, t_{i-1})+\nu u_{\theta}(X_{(i-1)j}, t_{i-1})\big) + \sqrt{\frac{\nu\hbar T}{mN}}\xi_{j}$ for $1 \le j \le B$
        \ENDFOR
        \STATE {\bfseries output } \Big\{$\big\{X_{ij}\big\}_{j=1}^{B}\Big\}_{i=0}^{N}$ 
\end{algorithmic}
\end{algorithm}

\begin{algorithm}
\small
\caption{A training algorithm}\label{alg:train_orig}
\begin{algorithmic}
\STATE {\bfseries Physical hyperparams:}  $m > 0 $ -- mass, $\hbar > 0$ -- reduced Planck constant,  $T$ -- a time horizon, $\psi_{0}:\mathbb{R}^d\rightarrow \mathbb{C}$ -- an initial wave function, $V:\mathbb{R}^d\times [0, T]\rightarrow \mathbb{R}$ -- potential.
\STATE {\bfseries Hyperparams: } $\eta > 0$ -- learning rate for backprop, $\nu > 0$ -- diffusion constant, $B \ge 1$ -- batch size, $M \ge 1$ -- optimization steps, $N \ge 1$ -- time grid size, $w_{u}, w_{v}, w_{0} > 0$ -- weights of losses.
\vspace{2pt}
        \vspace{2pt}
\STATE {\bfseries Instructions:}
        \STATE $t_{i} = i T / N$ for $0 \le i\le N$
        \vspace{2pt}
\FOR{$1 \le \tau \le M$}
        \STATE $X = \mathrm{GenerateBatch}(u_{\theta_{\tau-1}}, v_{\theta_{\tau-1}}, \psi_{0}, \nu, T, B, N)$ 
        \STATE define $L^{u}_{\tau}(\theta) = \frac{1}{(N+1)B}\sum_{i=0}^{N}\sum_{j=1}^{B}\big\| \partial_{t} {  u_\theta}(X_{ij}, t_{i}) - \mathcal{D}_{u}[u_{\theta}, v_{\theta}, X_{ij}, t_{i}]  \big\|^2$
        \STATE define $L^{v}_{\tau}(\theta) = \frac{1}{(N+1)B} \sum_{i=0}^{N}\sum_{j=1}^{B}\big\| \partial_{t} v_\theta(X_{ij}, t_{i}) -  \mathcal{D}_{v}[u_{\theta}, v_{\theta}, X_{ij}, t_{i}]  \big\|^2$
        \STATE define $L^{0}_{\tau}(\theta) = \frac{1}{B}\sum_{j=1}^{B}\Big(\big\|u_{\theta}(X_{0j}, t_{0}) - u_0(X_{0j})\big\|^2 + \big\|v_{\theta}(X_{0j}, t_0) - v_{0}(X_{0j}, t_{0})\big\|^2\Big)$
        \STATE define $\mathcal{L}_{\tau}(\theta) = w_{u} L^{u}_{\tau}(\theta) + w_{v} L^{v}_{\tau}(\theta) + w_{0} L^{0}_{\tau}(\theta)$
        \vspace{2pt}
        \STATE $\theta_{\tau} = \mathrm{OptimizationStep}(\theta_{\tau-1}, \nabla_{\theta} \mathcal{L}_{\tau}(\theta_{\tau-1}), \eta)$
\ENDFOR
\STATE {\bfseries output }$u_{\theta_{M}}, v_{\theta_{M}}$ 
\end{algorithmic}
\end{algorithm}

\section{Experiment Setup Details}\label{sec:app_training}

\subsection{Non-Interacting System}\label{sec:app_noninter_nn_details}
In our experiments, we set $m=1$, $\hbar = 10^{-2}$\footnote{The value of the reduced Plank constant depends on the metric system that we use and, thus, for our evaluations we are free to choose any value.}, $\sigma^2=10^{-1}$. For the harmonic oscillator model, $N = 1000$ and the batch size $B=100$; for the singular initial condition problem, $N=100$ and $B=100$.  
For evaluation, our method samples 10000 points per time step, and the observables are estimated from these samples; we run the model this way ten times. 

\subsubsection{A Numerical Solution} 

\paragraph{1d harmonic oscillator with $S_0 (x) \equiv 0$} To evaluate our method's performance, we use a numerical solver that integrates the corresponding differential equation given the initial condition. We use \textsc{SciPy} library \citep{virtanen2020scipy}. The solution domain is $x \in [-2, 2]$ and  $t \in [0, 1]$, where  $x$ is split into 566 points and $t$ into 1001 time steps. This solution can be repeated $d$ times for the $d$-dimensional harmonic oscillator problem. 

\paragraph{1d harmonic oscillator with $S_0(x) = -5x$} We use the same numerical solver as for the $S_0(x)\equiv 0$ case. The solution domain is $x \in [-2, 2]$ and  $t \in [0, 1]$, where $x$ is split into 2829 points and $t$ is split into 1001 time steps.

\subsubsection{Architecture and Training Details}

A basic NN architecture for our approach and the PINN is a feed-forward NN with one hidden layer with $\tanh$ activation functions. We represent the velocities $u$ and $v$ using this NN architecture with 200 neurons in the case of the singular initial condition. The training process takes about 7 mins. For $d=1$, a harmonic oscillator with zero initial phase problem, there are 200 neurons for our method and 400 for the PINN; for $d=3$ and more dimensions, we use 400 neurons. This rule holds for the experiments measuring total training time in Section \ref{sec:res_complexity}. In a 1d harmonic oscillator with a non-zero initial phase problem, we use 300 hidden neurons in our models. In the experiments devoted to measuring time per epoch (from Section \ref{sec:res_complexity}), the number of hidden neurons is fixed to 200 for all dimensions. We use the Adam optimizer \citep{kingma2014adam} with a learning rate $10^{-4}$. In our experiments, we set $w_{u}=1, w_{v}=1, w_{0} =1$. For PINN evaluation, the test sets are the same as the grid for the numerical solver. In our experiments, we usually use a single NVIDIA A40 GPU. For the results reported in Section \ref{sec:res_complexity}, we use an NVIDIA A100 GPU.

\subsubsection{On Optimization}

We use Adam optimizer \citep{kingma2014adam} in our experiments. Since the operators in \cref{eq:diff_operators} are not linear, we may not be able to claim convergence to the global optima of such methods as SGD or Adam in the Neural Tangent Kernel (NTK) \citep{jacot2018neural} limit. Such proof exists for PINNs in \citet{wang2022and} due to the linearity of the Schr\"odinger \cref{eq:Schrodinger}. It is possible that non-linearity in the loss \cref{eq:L_full} requires non-convex methods to achieve theoretical guarantees on convergence to the global optima \citep{raginsky2017nonconvex, muzellec2020dimensionfree}. Further research into NTK and non-linear PDEs is needed \citep{wang2022and}.

The only noise source in our loss \cref{eq:L_full} comes from trajectory sampling. This fact contrasts sharply with generative diffusion models relying on score matching  \citep{yang2022diffusion}. In these models, the loss has $\mathcal{O}(\epsilon^{-1})$ variance as it implicitly attempts to numerically estimate the stochastic differential $\frac{X(t+\epsilon)-X(t)}{\epsilon}$ which leads to $\frac{1}{\sqrt{\epsilon}}$ contribution from increments of the Wiener process. In our loss, the stochastic differentials are evaluated analytically in \cref{eq:diff_operators} avoiding such contributions; for details, see \citet{NelsonOG, nelson2005mystery}. This leads to $\mathcal{O}(1)$ variance of the gradient and, thus, allows us to achieve fast convergence with smaller batches.

\subsection{Interacting System} \label{sec:app_inter_nn_details}

In our experiments, we set $m=1$, $\hbar = 10^{-1}$, $\sigma^2=10^{-1}$. The number of time steps is $N = 1000$, and the batch size $B=100$.

\paragraph{Numerical solution} As a numerical solver, we use the \textsc{qmsolve} library \footnote{https://github.com/quantum-visualizations/qmsolve}. The solution domain is $x \in [-1.5, 1.5]$ and  $t \in [0, 1]$, where  $x$ is split into 100 points and $t$ into 1001 time steps.

\subsubsection{Architecture and training details}

Instead of a multi-layer perceptron, we follow the design choice of \citet{jiang2022embed} to use residual connection blocks. In our experiments, we used the $\tanh$ as the activation function, set the hidden dimension to 300, and used the same architecture for both DSM and PINN. Empirically, we find out that this design choice leads to faster convergence in terms of training time. The PINN inputs are $N_0 = 10000$, $N_b = 1000$ data points, and  $N_f = 1000000$ collocation points. We use Adam optimizer \citep{kingma2014adam} with a learning rate $10^{-4}$ in our experiments.  We use loss weights $w_{u}=1, w_{v}=1, w_{0} =1$.

\paragraph{Permutation invariance} Since our system comprises two identical bosons, we enforce symmetry for both the DSM and PINN models. Specifically, we sort the neural network inputs $x$ to ensure the permutation invariance of the models. While this approach guarantees adherence to the physical symmetry property, it comes with a computational overhead from the sorting operation. For higher dimensional systems, avoiding such sorting may be preferable to reduce computational costs. However, for the two interacting particle system considered here, the performance difference between regular and permutation-invariant architectures is not significant.

\paragraph{t-VMC ansatz} To enable a fair comparison between our DSM approach and t-VMC, we initialize the t-VMC trial wave function with a complex-valued multi-layer perceptron architecture identical to the one employed in our DSM method. However, even after increasing the number of samples and reducing the time step, the t-VMC method exhibits poor performance with this neural network ansatz. This result suggests that, unlike our diffusion-based DSM approach, t-VMC struggles to achieve accurate results when utilizing simple off-the-shelf neural network architectures as the ansatz representation.

As an alternative ansatz, we employ a harmonic oscillator basis expansion, expressing the wave function as a linear combination of products of basis functions. This representation scales quadratically with the number of basis functions but forms a complete basis set for the two-particle problem. Using the same number of samples and time steps, this basis expansion approach achieves significantly better performance than our initial t-VMC baseline. However, it still does not match the accuracy levels attained by our proposed DSM method.  This approach does not scale well naively to larger systems but can be adapted to form a 2-body Jastrow factor \citep{carleo2017unitary}. We expect this to perform worse for larger systems due to the lack of higher-order interactions in the ansatz. In our t-VMC experiments, we use the \textsc{NetKet} library \citep{vicentini2022netket} for many-body quantum systems simulation.

\section{Experimental Results}\label{sec:app_res}
\subsection{Singular initial conditions}\label{app:singular_ic}
As a proof of concept, we consider a case of one particle $x \in \mathbb{R}^1$ with $V(x) \equiv 0$ and $\psi_{0} = \delta_{0}$, $t \in [0, 1]$. Since $\delta$-function is a generalized function, we must take a $\delta$-sequence for the training. The most straightforward approach is to take $\widetilde{\psi_{0}} = \frac{1}{(2\pi \alpha)^{\frac{1}{4}}}e^{-\frac{x^2}{4\alpha}}$ with $\alpha \rightarrow 0_+$. In our experiments we take $\alpha = \frac{\hbar^2}{m^2}$, yielding $v_{0} (x) \equiv 0$ and $u_{0}(x) = - \frac{\hbar x}{2m\alpha}$. Since $\psi_{0}$ is singular, we must set $\nu = 1$ during sampling. The analytical solution is given as $\psi (x, t) = \frac{1}{(2\pi t)^{\frac{1}{4}}}e^{-\frac{x^2}{4t}}$. So, we expect the standard deviation of $X(t)$ to grow as $\sqrt{t}$, and the mean value of $X(t)$ to be zero.

We do not compare our approach with PINNs since it is a simple proof of concept, and the analytical solution is known. \cref{fig:singular_example}  summarizes the results of our experiment. Specifically, the left panel of the figure shows the magnitude of the density obtained with our approach alongside the true density. The right panel of \cref{fig:singular_example} shows statistics of $X_t$, such as mean and variance, and the corresponding error bars. The resulting prediction errors are calculated against the truth data for this problem and are measured at $0.008 \pm 0.007$ in the $L_2$-norm for the averaged mean and $0.011 \pm 0.007$ in the relative $L_2$-norm for the averaged variance of $X_t$. Our approach can accurately capture the behavior of the Schr\"odinger equation in the singular initial condition case.

\begin{figure}[!ht]
    \centering
    \includegraphics[width=0.9\textwidth]{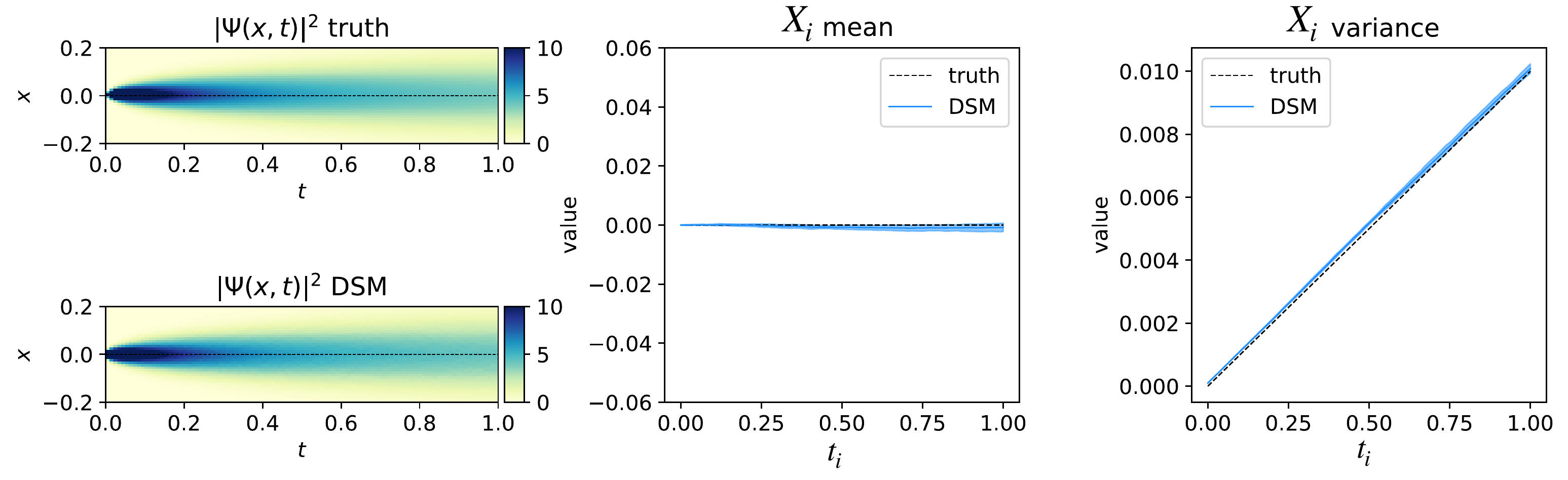}
    \caption{Results for the singular initial condition case. DSM corresponds to our method. }
    \label{fig:singular_example}
\end{figure}

\subsection{3D Harmonic Oscillator}\label{sec:app_res_3d}
We further explore our approach by considering the harmonic oscillator model with $S_0 (x) \equiv 0$ with three non-interacting particles. This setting can be viewed as a 3d problem, where the solution is a 1d solution repeated three times. Due to computational resource limitations, we are unable to execute the PINN model. The number of collocation points should grow exponentially with the problem dimension so that the PINN model converges. We have about 512 GB of memory but cannot store $60000^3$ points.   We conduct experiments comparing two versions of the proposed algorithm: the Nelsonian one
and our version.
Table \ref{table:osc_1d_comparison} provides the quantitative results of these experiments. Our version demonstrates slightly better performance compared to the Nelsonian version, although the difference is not statistically significant. Empirically, our version requires more steps to converge compared to the Nelsonian version: 7000 vs. 9000 epochs correspondingly. However, the training time of the Nelsonian approach is about 20 mins longer than our approach's time. 

\cref{fig:osc_3d_stats} demonstrates the obtained statistics with the proposed algorithm's two versions (Nelsonian and Gradient Divergence) for every dimension. \cref{fig:osc_3d_dens} compares the density function for every dimension for these two versions. \cref{table:osc_3d_comparison} summarizes the error rates per dimension. The results suggest no significant difference in the performance of these two versions of our algorithm. The Gradient Divergence version tends to require more steps to converge, but it has quadratic time complexity in contrast to the cubic complexity of the Nelsonian version.

\begin{figure}[ht]
    \centering
    \subfigure[The Nelsonian version.]{\includegraphics[width=0.44\textwidth]{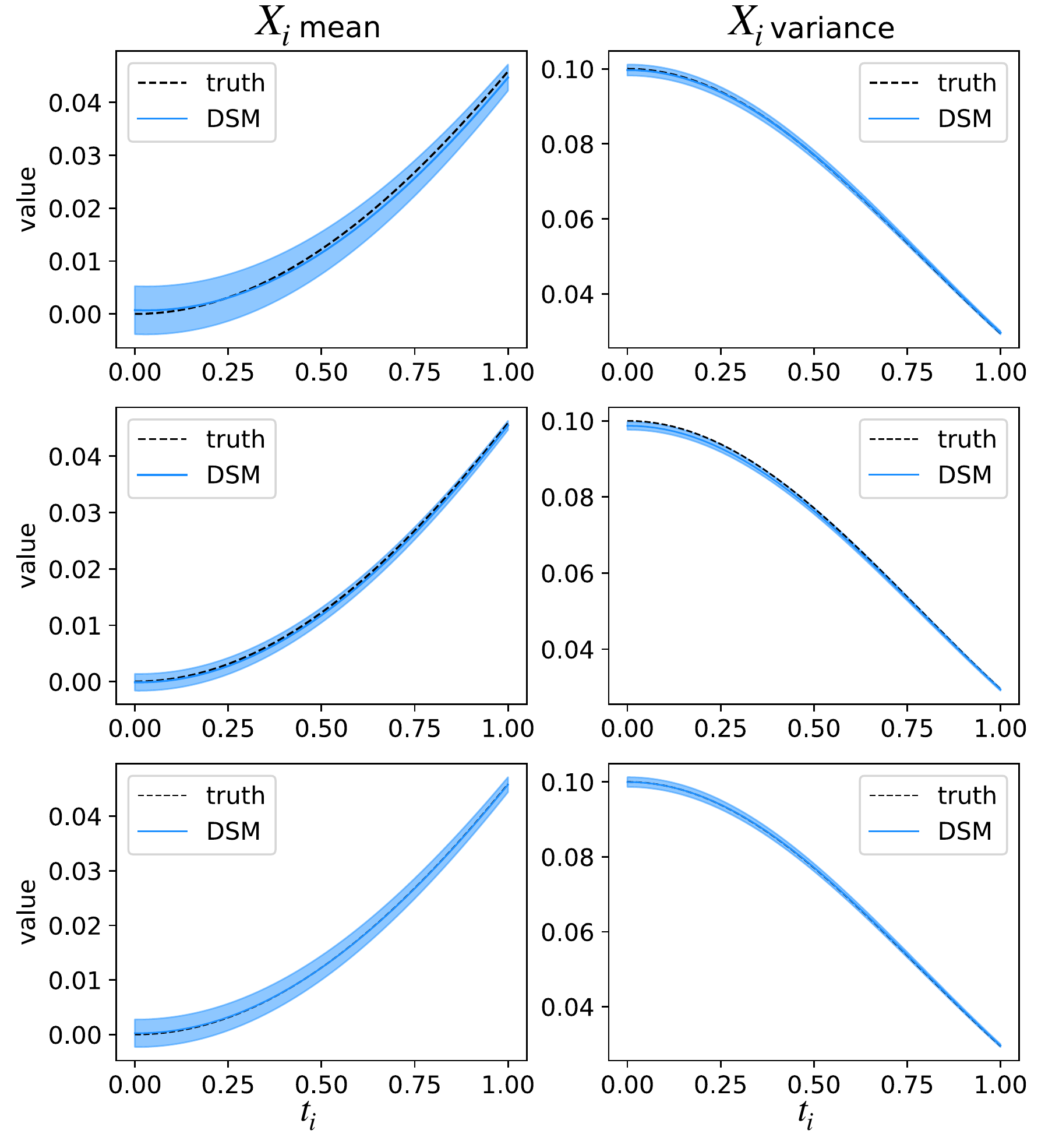}} 
    \subfigure[The Gradient Divergence version.] 
    {\includegraphics[width=0.44\textwidth]{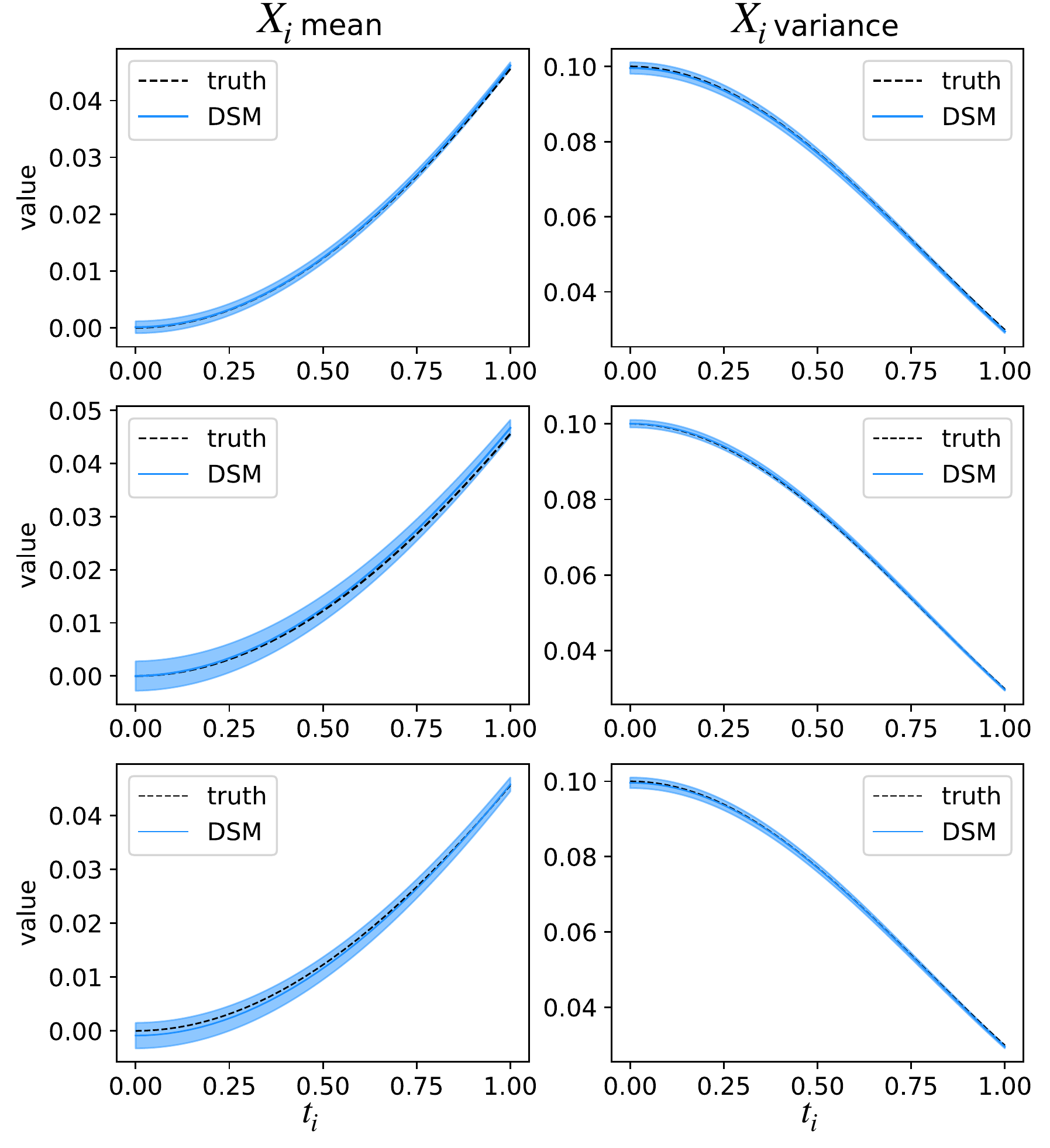}}
    \caption{The obtained statistics for 3d harmonic oscillator using two versions of the proposed approach.}
    \label{fig:osc_3d_stats}
\end{figure}

\begin{figure}[ht]
    \centering
    \subfigure[The Nelsonian version.]{\includegraphics[width=0.44\textwidth]{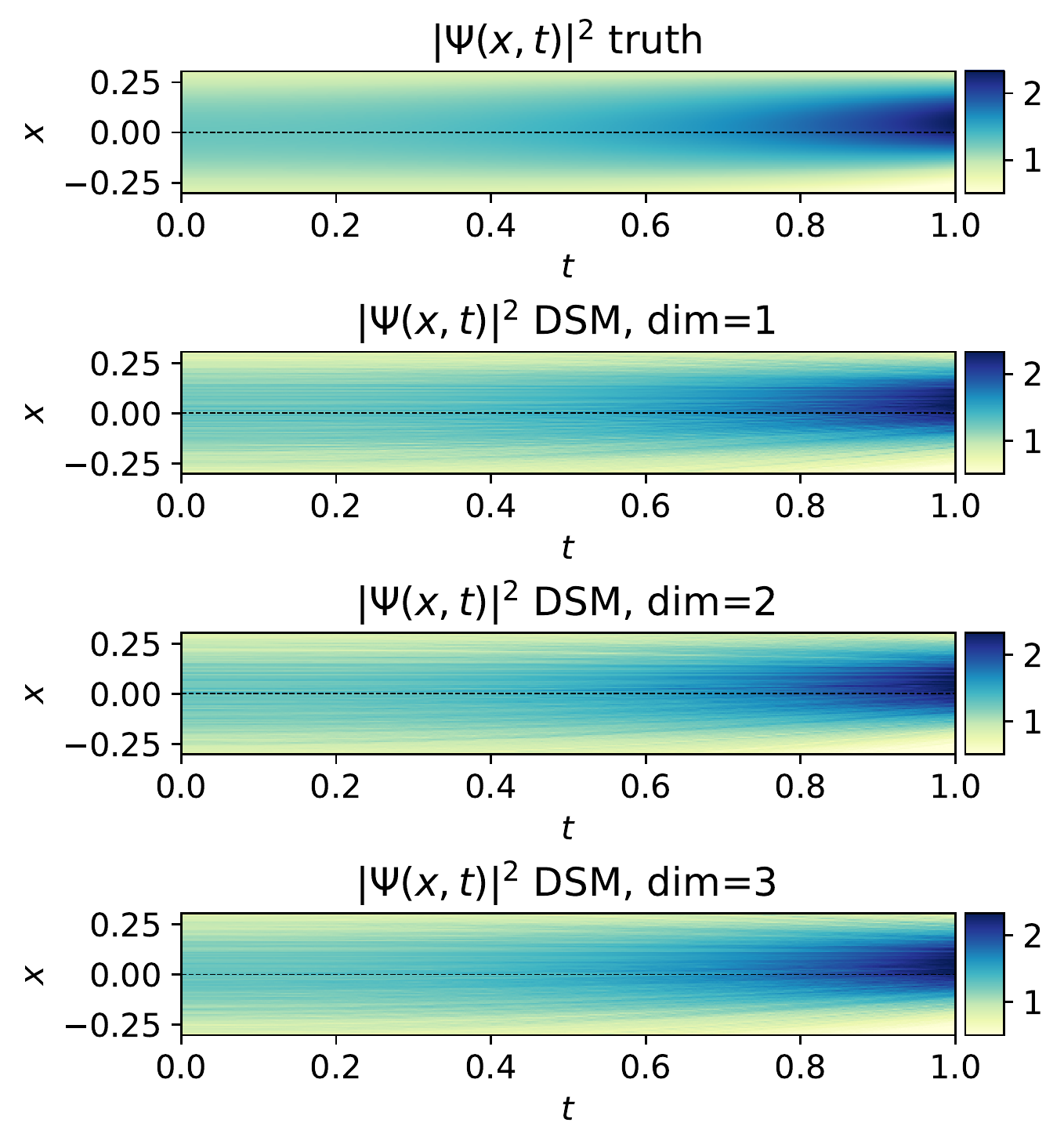}} 
    \subfigure[The Gradient Divergence version.]{\includegraphics[width=0.44\textwidth]{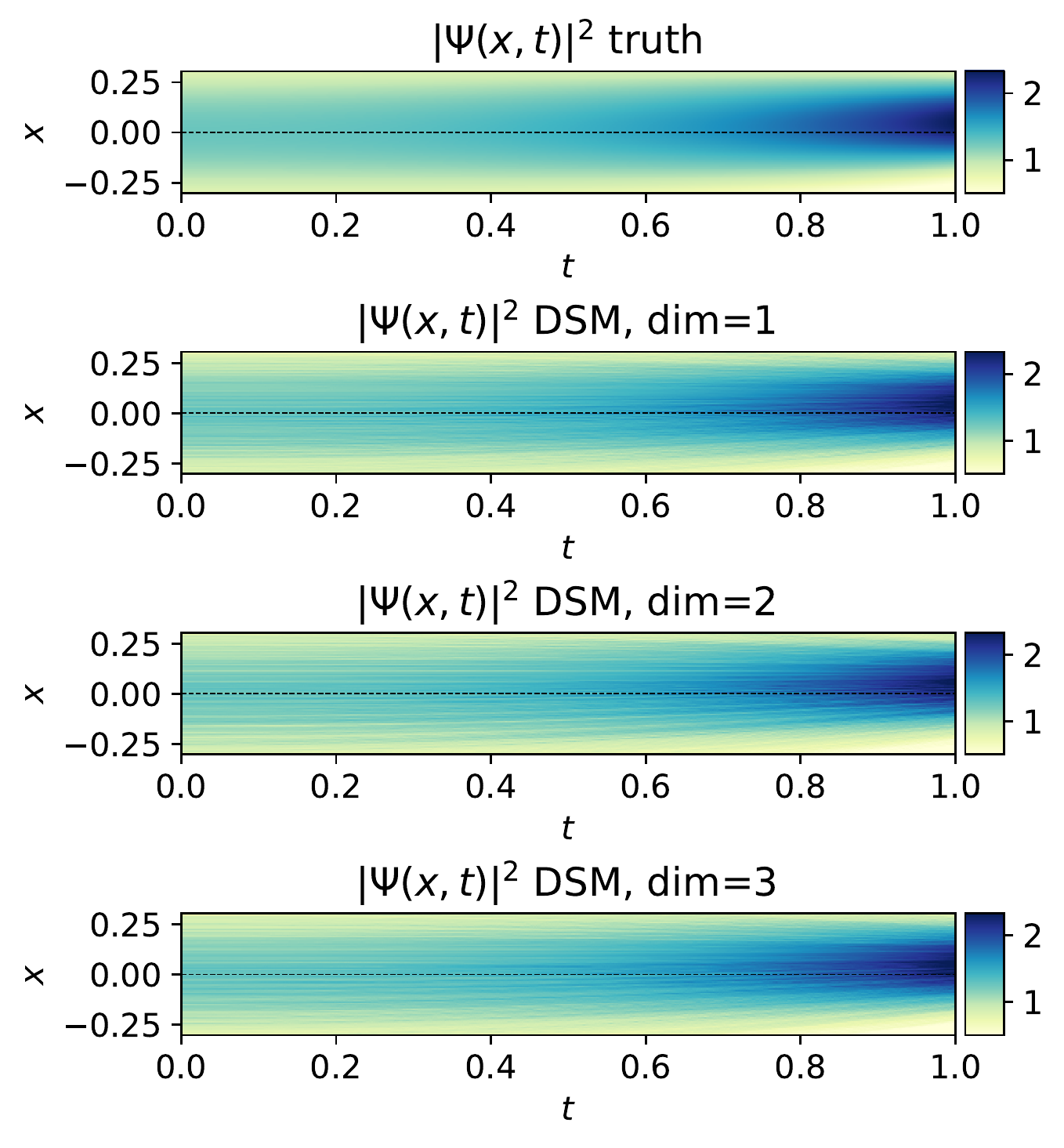}} 
    \caption{The density function for 3d harmonic oscillator using two versions of the proposed approach. }
    \label{fig:osc_3d_dens}
\end{figure}

\begin{table}[ht!]
\caption{The results for 3d harmonic oscillator with $S_0 (x) \equiv 0$ using two versions of the proposed approach: the Nelsonian one uses the Laplacian operator in the training loss, the Gradient Divergence version is our modification that replaces Laplacian with gradient of divergence.}
\label{table:osc_3d_comparison}
\centering
\small
\begin{tabular}{c c c c c} 
 \hline
 Model & $\mathcal{E}_m(X^{(1)}_i)$ $\downarrow$ & $\mathcal{E}_m (X^{(2)}_i)$ $\downarrow$ & $\mathcal{E}_m (X^{(3)}_i)$ $\downarrow$   & $\mathcal{E}_m (X_i)$ $\downarrow$ \\ [0.5ex] 
 \hline
 DSM (Nelsonian) &  0.170 $\pm$ 0.081  & 0.056 $\pm$ 0.030 & \bf 0.073 $\pm$ 0.072 &  0.100 $\pm$ 0.061 \\
 DSM (Gradient Divergence) &  \bf 0.038 $\pm$  0.023  & \bf 0.100 $\pm$ 0.060 & 0.082 $\pm$ 0.060 &  \bf 0.073 $\pm$ 0.048 \\[1ex] 
 \hline
 Model & $\mathcal{E}_v(X^{(1)}_i)$ $\downarrow$ & $\mathcal{E}_v(X^{(2)}_i)$ $\downarrow$ & $\mathcal{E}_v(X^{(3)}_i)$ $\downarrow$   & $\mathcal{E}_v(X_i)$ $\downarrow$ \\  [0.5ex] 
 \hline
 DSM (Nelsonian) & \bf 0.012 $\pm$ 0.009 &  0.012 $\pm$ 0.009 &  0.011  $\pm$ 0.008 & 0.012 $\pm$ 0.009 \\
 DSM (Gradient Divergence) & \bf 0.012 $\pm$ 0.010 &  \bf 0.009 $\pm$ 0.005 &  \bf 0.011  $\pm$ 0.010 & \bf 0.011  $\pm$ 0.008\\ [1ex] 
 \hline
 Model & $\mathcal{E}(v^{(1)})$ $\downarrow$ & $\mathcal{E}(v^{(2)})$ $\downarrow$ & $\mathcal{E}(v^{(3)})$ $\downarrow$   & $\mathcal{E}(v))$ $\downarrow$ \\ [0.5ex] 
 \hline
 DSM (Nelsonian) &  0.00013 &  0.00012 & 0.00012 & 0.00012  \\
 DSM (Gradient Divergence) & $\bf 4.346 \times 10^{-5}$ &   $\bf 4.401\times 10^{-5}$ &  $ \bf4.700 \times 10^{-5}$ & $ \bf 4.482 \times 10^{-5}$  \\[1ex] 
 \hline
 Model & $\mathcal{E}(u^{(1)})$ $\downarrow$ & $\mathcal{E}(v^{(2)})$ $\downarrow$ & $\mathcal{E}(v^{(3)})$ $\downarrow$   & $\mathcal{E}(v)$ $\downarrow$ \\ [0.5ex] 
 \hline
 DSM (Nelsonian) &  $\bf 4.441 \times 10^{-5}$ &  $ \bf2.721 \times 10^{-5}$ & $2.810 \times 10^{-5}$ &  $ \bf3.324 \times 10^{-5}$ \\ 
  DSM (Gradient Divergence) &  $6.648 \times 10^{-5}$ & $4.405 \times 10^{-5}$ & $ \bf 1.915 \times 10^{-5}$ & $4.333 \times 10^{-5}$ \\ [1ex] 
 \hline
\end{tabular}
\end{table}

\subsection{Naive Sampling}\label{app:naive_sampling}

\cref{fig:naive_sample_all} shows performance of Gaussian sampling approach applied to the harmonic oscillator and the singular initial condition setting.  \cref{table:sampling_compare} compares results of all methods. Our approach converges to the ground truth while naive sampling does not. \cref{fig:naive_sample_all} illustrates performance of Gaussian sampling.

\begin{table}[ht!]
\caption{Error rates for different problem settings using two sampling schemes: our (DSM) and Gaussian sampling. Gaussian sampling replaces all measures in the expectations with Gaussian noise in \cref{eq:L_full}.  The \textbf{best} result is in bold. These results demonstrate that our approach work better than the naïve sampling scheme.}
\label{table:sampling_compare}
\vskip 0.1in
\centering
\small
\scalebox{0.91}{
\begin{tabular}{c c c c c c c} 
 \hline
 Problem & Model & $\mathcal{E}_m(X_i)$ $\downarrow$ & $\mathcal{E}_v(X_i)$ $\downarrow$ & $\mathcal{E}(v)$ $\downarrow$ & $\mathcal{E}(u)$ $\downarrow$ \\ [0.5ex] 
 \hline
Singular IC &  Gaussian sampling & 0.043 $\pm$ 0.042 &0.146 $\pm$ 0.013 & 1.262 &   0.035 \\
& DSM & \bf 0.008 $\pm$ 0.007  & \bf 0.011 $\pm$ 0.007 &   $ \bf 0.524 $ &  $\bf 0.008$ \\ [1ex]
\hline 
\multirow{2}{*}{\begin{tabular}{@{}c@{}} Harm osc 1d, \\ $S_0(x)\equiv 0$ \end{tabular}} & Gaussian sampling &0.294 $\pm$ 0.152   & 0.488 $\pm$ 0.018 & 3.19762 & 1.18540 \\ 
 & DSM & \bf 0.077 $\pm$ 0.052  & \bf 0.011 $\pm$ 0.006 & \bf 0.00011  & $ \bf 2.811\times 10^{-5}$   \\[1ex] 
 \hline
 \multirow{2}{*}{\begin{tabular}{@{}c@{}} Harm osc 1d, \\ $S_0(x)=-5x$ \end{tabular}} & Gaussian sampling &0.836 $\pm$ 0.296  & 0.086 $\pm$ 0.007 & 77.57819 & 24.15156 \\ 
 & DSM & \bf 0.223 $\pm$ 0.207  & \bf 0.009 $\pm$ 0.008 & $  \bf 1.645\times 10^{-5}$  & $ \bf 2.168 \times 10^{-5}$  \\[1ex] 
 \hline
\multirow{2}{*}{\begin{tabular}{@{}c@{}} Harm osc 3d, \\ $S_0(x)\equiv 0$ \end{tabular}} & Gaussian sampling   & 0.459 $\pm$ 0.126  & 5.101 $\pm$ 0.201 & 13.453 & 5.063 \\
 &   DSM & \bf 0.073 $\pm$ 0.048 & \bf 0.011 $\pm$ 0.008 &   $ \bf 4.482\times 10^{-5}$ &   $\bf 4.333\times 10^{-5}$ \\ [1ex] 
 \hline
\end{tabular}}
\end{table}

\begin{figure}[!ht]
\centering
\includegraphics[width=0.9\linewidth]{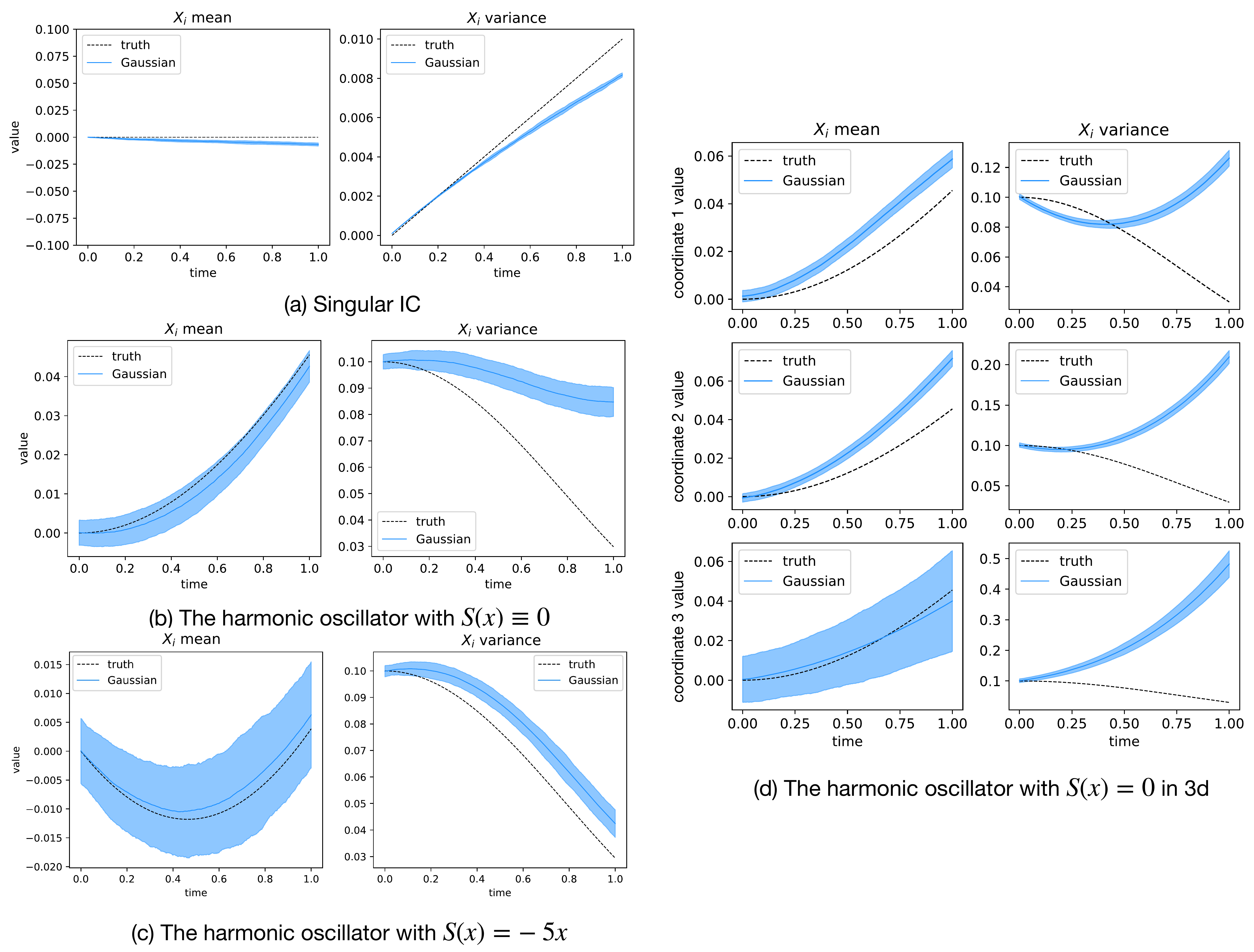}
\caption{\label{fig:naive_sample_all} An illustration of produced trajectories using the naïve Gaussian sampling scheme as a comparison with the proposed approach. The obtained trajectories do not match the solution, while the results in our paper suggest that the proposed DSM approach converges better. Compare with Figures \ref{fig:singular_example}, \ref{fig:boson_interact_2d_both_new},
\ref{fig:osc_3d_stats}.}
\end{figure}

\subsection{Scaling Experiments for Non-Interacting System} \label{app:scaling_noninteract}

We empirically estimate memory allocation on a GPU (NVIDIA A100) when training two versions of the proposed algorithm. In addition, we estimate the number of epochs until the training loss function is less than $10^{-2}$ for different problem dimensions. The results are visualized in Figure \ref{fig:complexity_memory_loss}(a) proves the memory usage of the Gradient Divergence version grows linearly with the dimension while it grows quadratically in the Nelsonian version. We also empirically access the convergence speed of two versions of our approach. Figure \ref{fig:complexity_memory_loss}(b) shows how many epochs are needed to make the training loss less than $1 \times 10^{-2}$. Usually, the Gradient Divergence version requires slightly more epochs to converge to this threshold than the Nelsonian one. The number of epochs is averaged across five runs. In both experiments, the setup is the same as we describe in Section \ref{sec:res_complexity}.

\begin{figure}[ht]
\vskip 0.1in
    \centering 
    \subfigure[GPU memory usage.]{\includegraphics[width=0.4\textwidth]{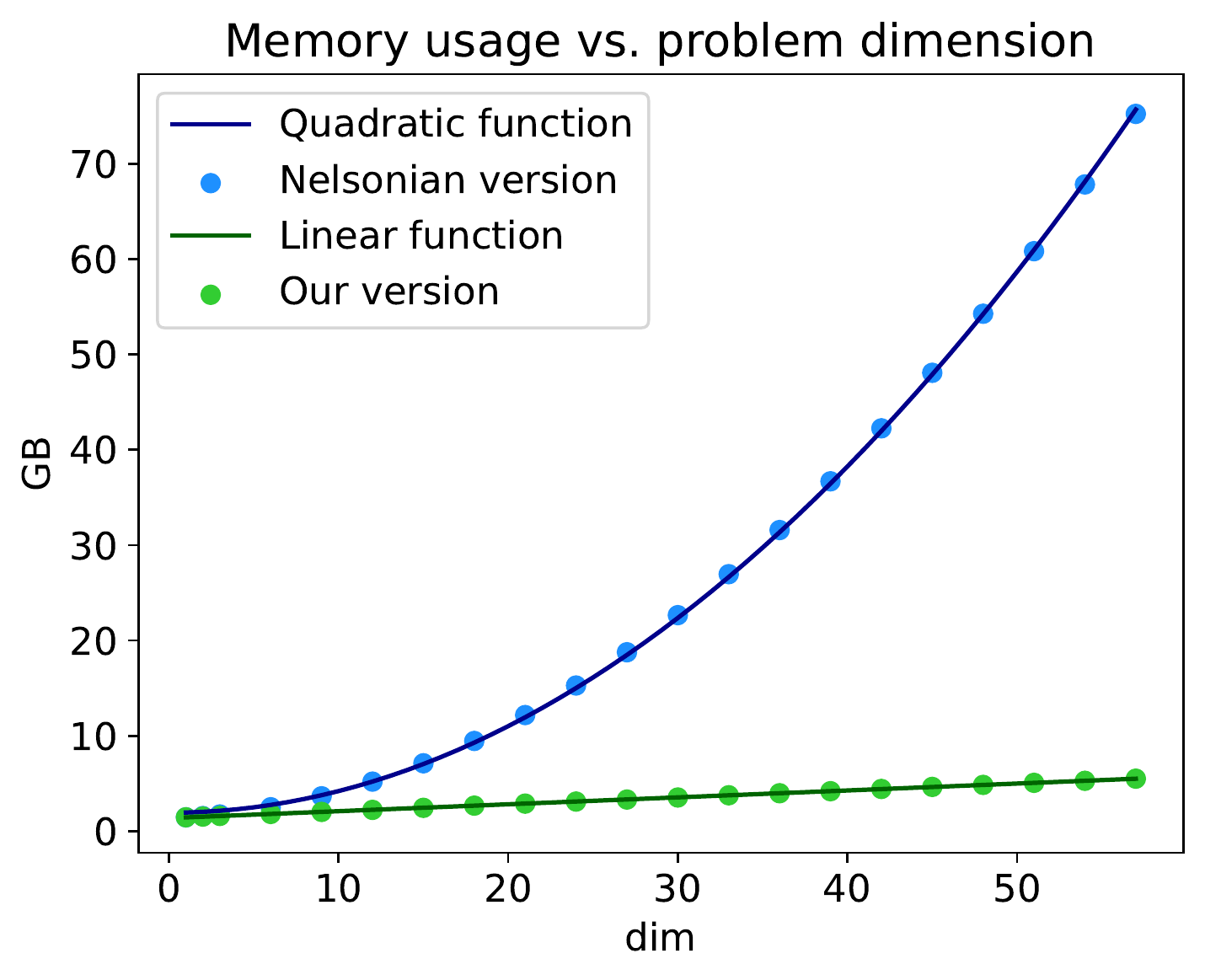}}
    \subfigure[Number of epochs until the training loss $<10^{-2}$.]{\includegraphics[width=0.4\textwidth]{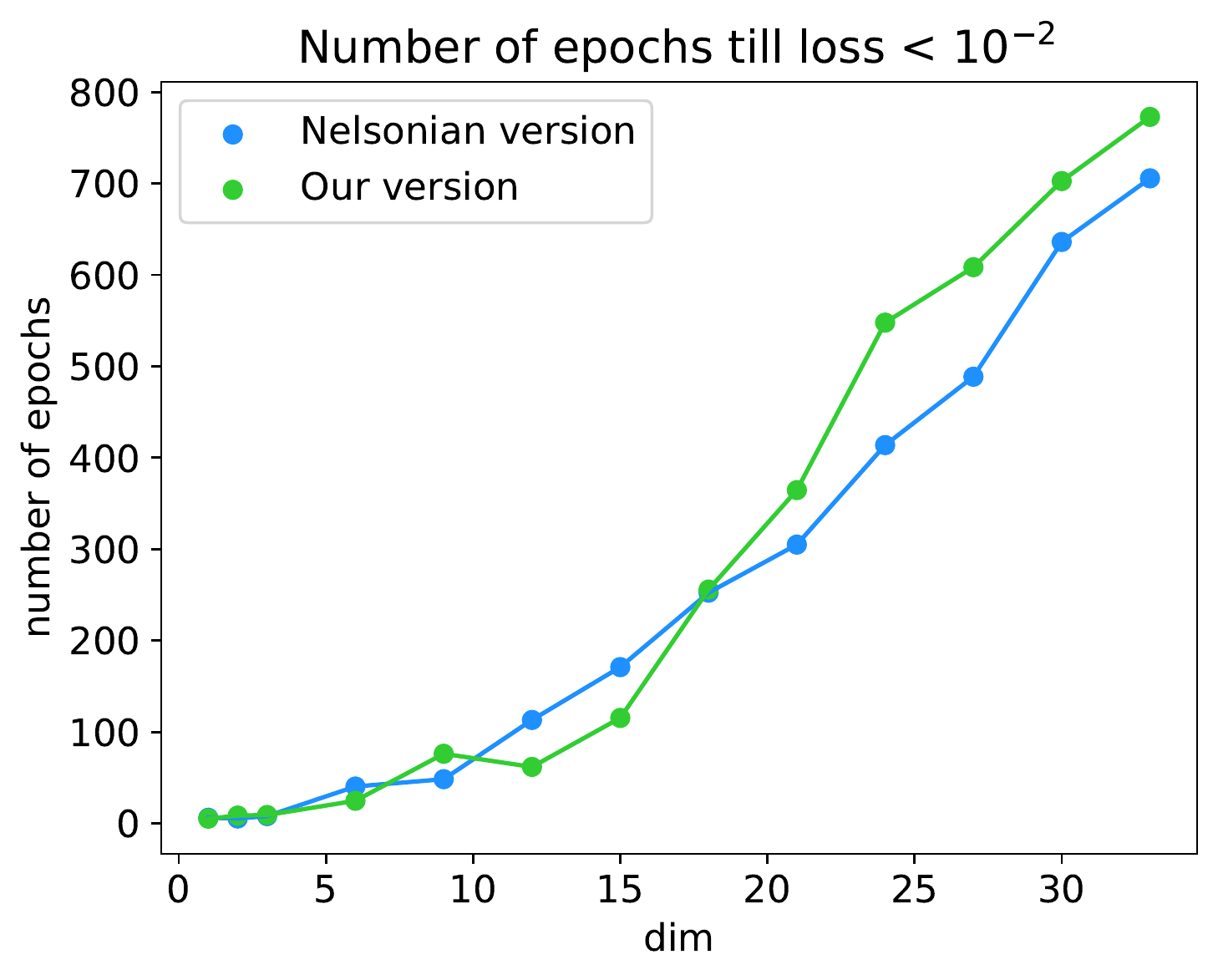}} 
    \caption{Empirical complexity evaluation of two versions of the proposed method: memory usage and the number of epochs until the loss is less than the threshold.}
    \label{fig:complexity_memory_loss}
    \vskip -0.1in
\end{figure}

Also, we provide more details on the experiment measuring the total training time per dimensions $d=1, 3, 5, 7, 9$. This experiment is described in Section \ref{sec:res_complexity}, and the training time grows linearly with the problem dimension. Table \ref{table:osc_d_train_time_err} presents the error rates and train time. The results show that the proposed approach can perform well for every dimension while the train time scales linearly with the problem dimension.

\begin{table}[ht!]
\caption{Training time and test errors for the harmonic oscillator model for different $d$.} \label{table:osc_d_train_time_err}
\vskip 0.15in
\centering
\begin{sc}
\begin{tabular}{c c c c c c} 
 \hline
 $d$ & $\mathcal{E}_m(X_i)$ $\downarrow$ & $\mathcal{E}_v(X_i)$ $\downarrow$ & $\mathcal{E}(v)$ $\downarrow$ & $\mathcal{E}(u)$ $\downarrow$ & Train time \\ [0.5ex] 
 \hline
 1 & 0.074 $\pm$ 0.052  & 0.009 $\pm$ 0.007 & 0.00012  & 2.809e-05 & 46m 20s \\ 
 3 & 0.073 $\pm$ 0.048  & 0.010 $\pm$ 0.008 & 4.479e-05  & 3.946e-05 & 2h 18m  \\
 5 & 0.081 $\pm$ 0.057  & 0.009 $\pm$ 0.008 & 4.956e-05  & 4.000e-05 & 3h 10m  \\
 7 & 0.085 $\pm$ 0.060  & 0.011 $\pm$ 0.009 & 5.877e-05  & 4.971e-05 & 3h 40m  \\
 9 & 0.096 $\pm$ 0.081  & 0.011 $\pm$ 0.009 & 7.011e-05  & 6.123e-05 & 4h 46m  \\[1ex] 
 \hline
\end{tabular}
\end{sc}
\vskip -0.1in
\end{table}

\subsection{Scaling Experiments for the Interacting System} \label{app:scaling_bosons_interact}

This section provides more details on experiments from \cref{sec:scaling_bosons_interact}, where we investigate the scaling of the DSM approach for the interacting bosons system. We compare the performance of our algorithm with a numerical solver based on the Crank–Nicolson method (we modified the \textsc{qmsolve} library to work for $d > 2$) and t-VMC method. Our method reveals favorable scaling capabilities in the problem dimension compared to the Crank–Nicolson method as shown in \cref{table:num_solver_scaling} and \cref{table:dsm_scaling}. 

\cref{fig:scaling_exps} shows generated density functions for our DSM method and t-VMC approach. The proposed DSM approach demonstrates robust performance, accurately following the ground truth and providing reasonable predictions for $d=3, 4, 5$ interacting bosons. In contrast, when utilizing the t-VMC in higher dimensions, we observe a deterioration in the quality of the results. This limitation is likely attributed to the inherent difficulty in accurately representing higher-order interactions with the ansatz employed in the t-VMC approach, as discussed in Section \ref{sec:interact_main}. Consequently, as the problem dimension grows, the lack of sufficient interaction terms in the ansatz and numerical instabilities in the solver become increasingly problematic, leading to artifacts in the density plots as time evolves. The relative error between the ground truth and predicted densities is 0.023 and 0.028 for the DSM and t-VMC approaches, respectively, in the 3d case. This trend persists in the 4d case, where the DSM's relative error is 0.073, compared to the t-VMC's higher relative error of 0.089. (when compared with a grid-based Crank-Nikolson solver with $N=60$ grid points in each dimension). While we do not have the baseline for $d=5$, we believe DSM predictions are still reasonable. Our findings indicate that the t-VMC method can perform reasonably for low-dimensional systems, but its performance degrades as the number of interacting particles increases. This highlights the need for a scalable and carefully designed ansatz representation capable of capturing the complex behavior of particles in high-dimensional quantum systems.

\begin{figure}[!ht]
\vskip 0.1in
    \centering
    \includegraphics[width=0.85\textwidth]{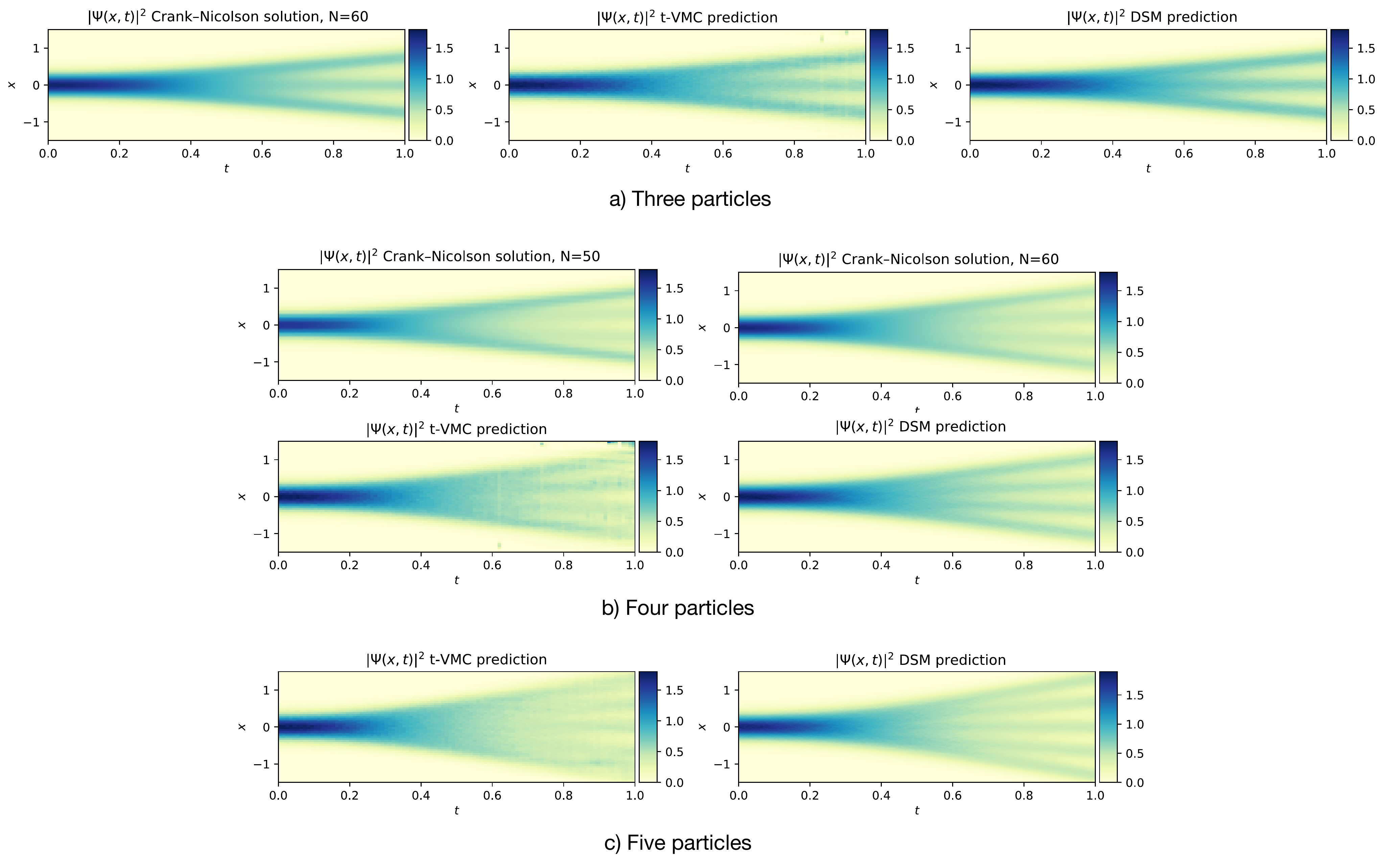}
    \caption{Probability density plots for different numbers of interacting particles $d$. For five particles, our computer system does not allow running the Crank-Nicolson solver.}
    \label{fig:scaling_exps}
    \vskip -0.1in
\end{figure}

As for the DSM implementation details, we fix hyperparameters and only change $d$: for example, the neural network size is 500, and the batch size is 100. We train our method until the average training loss becomes lower than a particular threshold (0.007). These numbers are reported for a GPU A40. The Crank-Nikolson method is run on the CPU.

\subsection{Sensitivity Analysis}

We investigate the impact of hyperparameters on the performance of our method for two systems: the 1d harmonic oscillator with $S_0(x) \equiv 0$ and two interacting bosons. Specifically, we explore different learning rates ($10^{-2}, 10^{-3}, 10^{-4}, 10^{-5}$) and hidden layer sizes (200, 300, 400, 500) for the neural network architectures detailed in Section \ref{sec:app_training}. All models are trained for an equal number of epochs across every hyper-parameter setting, and the results are presented in Figure \ref{fig:sens_exps}. For the two interacting bosons system, increasing the hidden layer size leads to lower error, although the difference between 300 and 500 neurons is marginal. In contrast, for the 1d harmonic oscillator, larger hidden dimensions result in slightly worse performance (which might be a sign of overfitting for this simple problem), but the degradation is not substantial. As for the learning rate, a higher value consistently yields poorer performance for both systems. A large learning rate can cause the weight updates to overshoot the optimal values, leading to instability and failure to converge to a good solution. Nevertheless, all models achieve reasonable performance, even with the highest learning rate of $10^{-2}$. Overall, according to the $\mathcal{E}_m(X_i)$ metric, our experiments demonstrate that our method exhibits robustness to varying hyper-parameter choices.

\begin{figure}[!ht]
\vskip 0.1in
    \centering
    \includegraphics[width=0.7\textwidth]{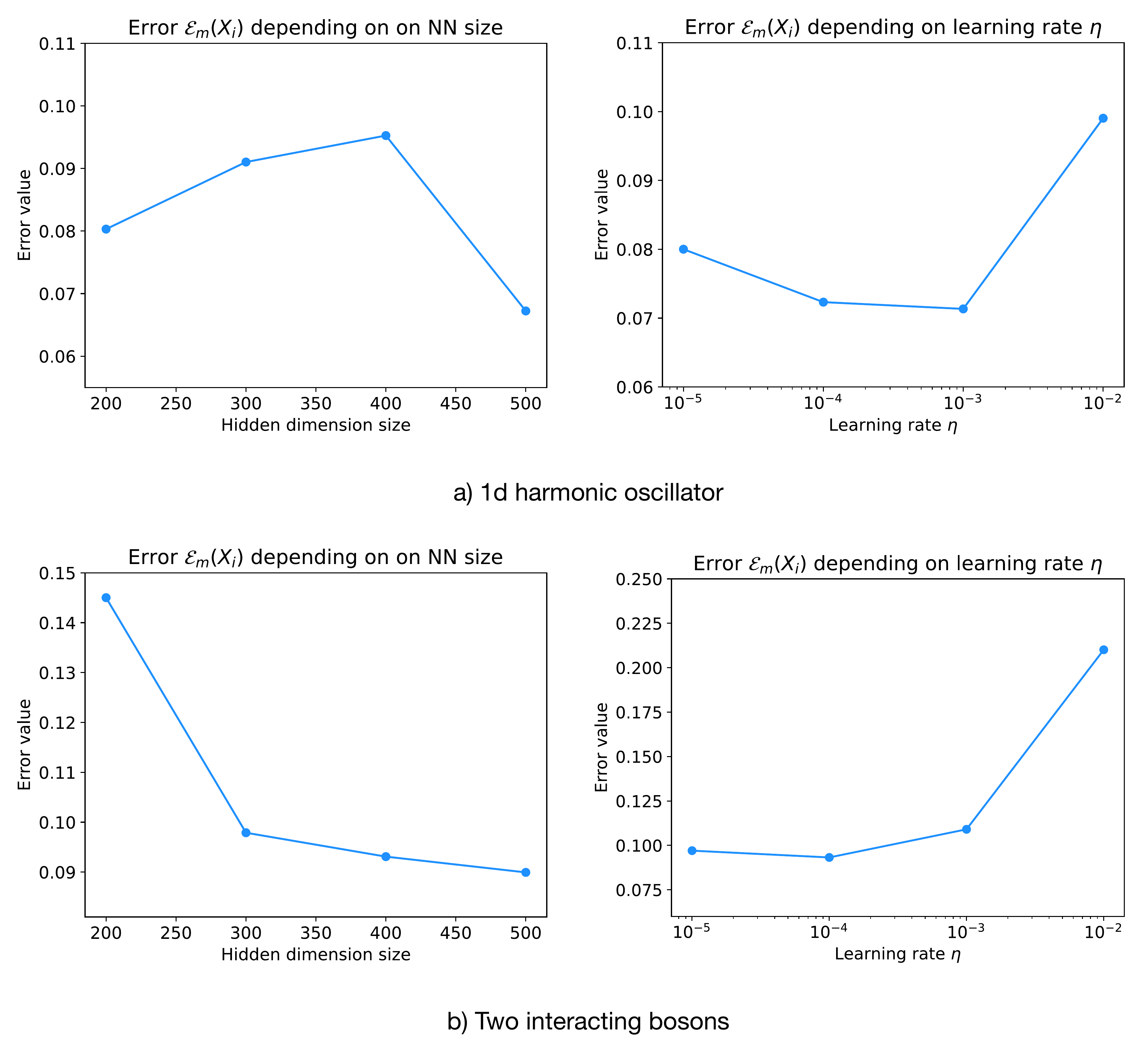}
    \caption{Sensitivity analysis of the neural network hyperparameters for the proposed method on two systems: (a) a 1D harmonic oscillator with $S_0(x) \equiv 0$, and (b) a system of two interacting bosons. The plots illustrate the impact of varying the hidden layer size and the learning rate on the model's performance, quantified by the $\mathcal{E}_m(X_i)$ error metric. }
    \label{fig:sens_exps}
    \vskip -0.1in
\end{figure}

\section{Stochastic Mechanics} \label{sec:stochastic_processes}
We show a derivation of the equations stochastic mechanics from the Schr\"odinger one. For full derivation and proof of equivalence, we refer the reader to the work of  \citet{NelsonOG}.

\subsection{Stochastic Mechanics Equations} \label{app:stochastic_mechanics}
Let's consider a polar decomposition of a wave function $\psi = \sqrt{\rho}e^{i S}$. Observe that for $\partial \in \{\partial_t, \partial_{x_i}\}$, we have
\begin{align*}
\partial \psi &= (\partial \sqrt{\rho}) e^{iS} + (i\partial S)\psi = \frac{\partial \rho}{2\sqrt{\rho}}e^{i S} + (i\partial S)\psi = \frac{1}{2}\frac{\partial \rho}{\rho} \sqrt{\rho} e^{iS} + (i\partial S)\psi = \big(\frac{1}{2}\partial \log \rho + i\partial S\big) \psi,\\
\partial^2 \psi &= \partial\Big(\big(\frac{1}{2}\partial \log \rho + i\partial S\big) \psi\Big) = \Big(\frac{1}{2}\partial^2 \log \rho + i\partial^2 S + \big(\frac{1}{2}\partial \log \rho + i\partial S\big)^2\Big) \psi
\end{align*}
\begin{align*}
\partial \psi &= (\partial \sqrt{\rho}) e^{iS} + (i\partial S)\psi = \frac{\partial \rho}{2\sqrt{\rho}}e^{i S} + (i\partial S)\psi = \frac{1}{2}\frac{\partial \rho}{\rho} \sqrt{\rho} e^{iS} + (i\partial S)\psi = \big(\frac{1}{2}\partial \log \rho + i\partial S\big) \psi,\\
\partial^2 \psi &= \partial\Big(\big(\frac{1}{2}\partial \log \rho + i\partial S\big) \psi\Big) = \Big(\frac{1}{2}\partial^2 \log \rho + i\partial^2 S + \big(\frac{1}{2}\partial \log \rho + i\partial S\big)^2\Big) \psi.
\end{align*}
Substituting it into the Schr\"odinger equation, we obtain the following:
\begin{align}
    i\hbar \big(\frac{1}{2}\partial_{t} \log \rho + i\partial_{t}S\big)\psi = -\frac{\hbar^2}{2m}\Big(\frac{1}{2}\Delta \log \rho + i\Delta S + \big\|\frac{1}{2}\nabla \log \rho + i\nabla S\big\|^2 \Big)\psi + V \psi. 
\end{align}
Dividing by $\psi$\footnote{We assume $\psi \ne 0$. Even though it may seem a restriction, we will solve the equations only for $X(t)$, which satisfy $\mathbb{P}\big(\psi(X(t), t) = 0\big) = 0$. So, we are allowed to assume this without loss of generality. The same cannot be said if we considered the PINN over a  grid to solve our equations.}, and separating real and imaginary parts, we obtain
\begin{align}
        -\hbar\partial_{t}S &= -\frac{\hbar^2}{2m}\Big(\frac{1}{2} \Delta \log \rho + \frac{1}{4}\| \log \rho\|^2 - \|\nabla S\|^2 \Big) + V, \\
        \frac{\hbar}{2} \partial_{t} \log \rho &= -\frac{\hbar^2}{2m}\big(\Delta S + \langle \log \rho,\nabla S\rangle \big).
\end{align}
Noting that $\Delta = \langle \nabla, \nabla\cdot\rangle$ and substituting $v = \frac{\hbar}{m}\nabla S, u = \frac{\hbar}{2m}  \log \rho$ to simplify, we obtain
\begin{align}\label{eq:phase_density_equations}
        m\frac{\hbar}{m}\partial_{t}S &= \frac{\hbar}{2m}\langle \nabla, u\rangle + \frac{1}{2}\|u\|^2 - \frac{1}{2}\|v\|^2 - V, \\
        \frac{\hbar}{2m} \partial_{t} \log \rho &= -\frac{\hbar}{2m}\langle \nabla, v\rangle - \langle u, v\rangle.
\end{align}
Finally, by taking $\nabla$ from both parts, noting that $\big[\nabla, \partial_t\big]= 0$ for scalar functions, and substituting $u, v$ again, we arrive at
\begin{align}\label{eq:nav-s}
    \partial_{t} v &= -\frac{1}{m} \nabla V + \langle u, \nabla\rangle u - \langle v, \nabla\rangle v + \frac{\hbar}{2m} \nabla\langle \nabla, u\rangle,\\
    \partial_{t} u &=  - \nabla \langle v, u\rangle - \frac{\hbar}{2m} \nabla \langle \nabla, v\rangle.
\end{align}
To get the initial conditions on the velocities of the process $v_{0} = v(x, 0)$ and $u_{0} = u(x, 0)$, we can refer to the equations that we used in the derivation
\begin{align}
    v(x, t) &= \frac{\hbar}{m}\nabla S(x, t), \\
    u(x, t) &= \frac{\hbar}{2m}  \nabla \log \rho(x, t)
\end{align}
So, we can get our initial conditions at $t=0$ on $v_{0}(x) = \frac{\hbar}{m}\nabla S(x, 0), u_0(x) = \nu \nabla \log \rho_{0}(x)$, where $\rho_{0}(x) = \rho(x, 0)$. 

For more detailed derivation and proof of equivalence of those two equations to the Schr\"odinger one, see \citet{NelsonOG, nelson2005mystery, guerra1995introduction}. Moreover, this equivalence holds for manifolds $\mathcal{M}$ with trivial second cohomology group as noted in \citet{about_line_bundles, inequavalency, about_single_valuedness}.

\subsection{Novel Equations of Stochastic Mechanics}\label{app:sm_differences}
We note that our equations differ from \citet{guerra1995introduction, NelsonOG}. In \citet{NelsonOG}, we see
\begin{subequations}\label{eq:nelson_eq}
    \begin{align} \label{eq:nelson_eq1}
    \partial_{t} v &= -\frac{1}{m} \nabla V + \langle u, \nabla\rangle u - \langle v, \nabla\rangle v + \frac{\hbar}{2m} \Delta u,
    \end{align}
\begin{align} \label{eq:nelson_eq2}
    \partial_{t} u &=  - \nabla \langle v, u\rangle - \frac{\hbar}{2m} \nabla \langle \nabla, v\rangle;
\end{align}
\end{subequations}
and in \citet{guerra1995introduction}, we see

\begin{subequations}\label{eq:guerra_eq}
\begin{align}\label{eq:guerra_eq1}
    \partial_{t} v &= -\frac{1}{m} \nabla V + \langle u, \nabla\rangle u - \langle v, \nabla\rangle v + \frac{\hbar}{2m} \Delta u,
\end{align}
\begin{align}\label{eq:guerra_eq2}
    \partial_{t} u &=  - \nabla \langle v, u\rangle - \frac{\hbar}{2m} \Delta v .
\end{align}
\end{subequations}
Note that our Equations \labelcref{eq:nav-s1}, \labelcref{eq:nav-s2} do not directly use the second-order Laplacian operator $\Delta$, as it appears for $u$ in \cref{eq:nelson_eq1} and $v$ in \cref{eq:guerra_eq2}. The discrepancy between Nelson's and Guerra's equations seems to occur because the work by \citet{nelson2005mystery} coversthe case of the multi-valued $S$, and thus does not assume that $\big[\Delta, \nabla\big] = 0$ to transform $\nabla \langle \nabla, v \rangle = \nabla \langle \nabla, \nabla S\rangle$ into $\Delta (\nabla S)$ to make the equations work for the case of a non-trivial cohomology group of $\mathcal{M}$. However, \citet{guerra1995introduction} does employ $\Delta (\nabla S)$ in their formulation. Naively computing the Laplacian $\Delta$ of $u$ or $v$ with autograd tools requires $\mathcal{O}(d^3)$ operations as it requires computing the full Hessian $\nabla^2$. To reduce the computational complexity, we treat $\log \rho$ as a potentially multi-valued function, aiming to achieve a lower computational time of $\mathcal{O}(d^2)$ in the dimension $d$.
Generally, we cannot swap $\Delta$ with $\nabla \langle \nabla, \cdot\rangle$ unless the solutions of the equation can be represented as full gradients of some function. This condition holds for stochastic mechanical equations but not for the Shr\"odinger one. 

We derive equations different from both works and provide insights into why there are four different equivalent sets of equations (by changing $\Delta$ with $\nabla \langle \nabla, \cdot\rangle$ in both equations independently). From a numerical perspective, it is more beneficial to avoid Laplacian calculations. However, we notice that inference using equations from \citet{NelsonOG} converges faster by iterations to the true $u,v$ compared to our version. It comes at the cost of a severe slowdown in each iteration for $d \gg 1$, which diminishes the benefit since the overall training time to get comparable results decreases significantly.

\subsection{Diffusion Processes of Stochastic Mechanics}

Let's consider an arbitrary Ito diffusion process:
\begin{align}\label{eq:Ito-diffusion}
    \mathrm{d}X(t) &= b(X(t), t)\mathrm{d}t + \sigma(X(t), t) \mathrm{d}\overset{\rightarrow}{W},\\
    X(0)&\sim \rho_{0},
\end{align}
where $W(t)\in\mathbb{R}^d$ is the standard Wiener process, $b:\mathbb{R}^d\times[0,T]\rightarrow \mathbb{R}^d$ is the drift function, and $\sigma:\mathbb{R}^d\times [0, T]\rightarrow \mathbb{R}^{d\times d}$ is a symmetric positive definite matrix-valued function called a diffusion coefficient. Essentially, $X(t)$ samples from $\rho_{X} = \mathrm{Law}(X(t))$ for each $t\in [0, T]$. Thus, we may wonder how to define $b$ and $\sigma$ to ensure $\rho_{X} = |\psi|^2$.  

There is the forward Kolmogorov equation for the density $\rho_{X}$ associated with this diffusion process:
\begin{align}
    \partial_{t} \rho_{X} = \langle \nabla, b\rho_{X}\rangle + \frac{1}{2}\mathrm{Tr}\big(\nabla^2\cdot(\sigma\sigma^T\rho_{X})\big).
\end{align}
Moreover, the diffusion process is time-reversible. This leads to the backward Kolmogorov equation:
\begin{align}
    \partial_{t} \rho_{X} = \langle \nabla, b^{*}\rho_{X}\rangle - \frac{1}{2}\mathrm{Tr}\big(\nabla^2\cdot(\sigma\sigma^T\rho_{X})\big),
\end{align}
where $b^{*}_i = b_i - \rho_{X}^{-1} \langle \nabla, \sigma\sigma^T e_{i} \rho_{X}\rangle $ with $e_{ij} = \delta_{ij}$ for $j\in \{1,\ldots, d\}$. Summing up those two equations, we obtain the following:
\begin{equation}\label{eq:continuity_diffusion}
    \partial_{t} \rho_{X} = \langle \nabla, v\rho_{X}\rangle,
\end{equation}
where $v = \displaystyle \frac{b + b^{*}}{2}$ is so called probability current. This is the continuity equation for the Ito diffusion process from \cref{eq:Ito-diffusion}. We refer to \citet{ANDERSON1982313} for details. We note that the same \cref{eq:continuity_diffusion} can be obtained with an arbitrary non-singular $\sigma(x, t)$ as long as $v=v(x, t)$ remains fixed.
\begin{proposition}
    Consider arbitrary $\nu > 0$, denote $\rho = |\psi|^2$ and consider decomposition $\psi = \sqrt{\rho}e^{iS}$. Then the following process $X(t)$:
    \begin{align}
        \mathrm{d}X(t) &= \big(\nabla S(X(t), t) + \frac{\nu \hbar}{2m} \nabla \log \rho(X(t), t)\big)\mathrm{d}t + \sqrt{\frac{\nu \hbar}{m}} \mathrm{d}\overset{\rightarrow}{W},\\
        X(0) &\sim |\psi_{0}|^2,
    \end{align}
satisfies $\mathrm{Law}(X(t)) = |\psi|^2$ for any $t > 0$.
\end{proposition}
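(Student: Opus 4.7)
The plan is to identify the evolution of $\rho_X(x,t) := \mathrm{Law}(X(t))$ by writing down its forward Kolmogorov (Fokker--Planck) equation, and then to compare that equation with the continuity equation satisfied by $\rho = |\psi|^2$. If both densities solve the same first-order transport equation with the same initial data $|\psi_0|^2$, uniqueness forces $\rho_X \equiv \rho$. All the necessary PDE ingredients are already present in the excerpt: the continuity equation for $\rho$ appears implicitly in the second half of \cref{eq:phase_density_equations}, and the general Fokker--Planck machinery is quoted around \cref{eq:continuity_diffusion}.

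Concretely, with drift $b = v + \nu u$ (using $v = \tfrac{\hbar}{m}\nabla S$, $u = \tfrac{\hbar}{2m}\nabla\log\rho$) and isotropic diffusion $\sigma\sigma^{T} = \tfrac{\nu\hbar}{m}I$, the Fokker--Planck equation reads
\begin{equation*}
    \partial_t \rho_X = -\langle \nabla, (v+\nu u)\rho_X\rangle + \tfrac{1}{2}\tfrac{\nu\hbar}{m}\Delta \rho_X.
\end{equation*}
The key algebraic observation is that $u\rho = \tfrac{\hbar}{2m}\nabla\rho$, so $\langle\nabla, u\rho\rangle = \tfrac{\hbar}{2m}\Delta\rho$. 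Thus the osmotic part of the drift \emph{exactly cancels} the Laplacian term coming from the noise (when evaluated on $\rho$): $\tfrac{1}{2}\tfrac{\nu\hbar}{m}\Delta\rho = \nu\langle\nabla, u\rho\rangle$. This cancellation is precisely why the stochastic mechanics construction with a free parameter $\nu$ produces the same marginal for every $\nu$.

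Next, I would verify that $\rho = |\psi|^2$ solves the purely first-order continuity equation $\partial_t\rho + \langle\nabla, v\rho\rangle = 0$. Starting from the imaginary-part equation derived in Appendix \ref{app:stochastic_mechanics}, $\tfrac{\hbar}{2m}\partial_t\log\rho = -\tfrac{\hbar}{2m}\langle\nabla, v\rangle - \langle u, v\rangle$, multiplying by $\tfrac{2m}{\hbar}\rho$ and using $\tfrac{2m}{\hbar}\rho\, u = \nabla\rho$ yields $\partial_t\rho = -\rho\langle\nabla, v\rangle - \langle v,\nabla\rho\rangle = -\langle\nabla, v\rho\rangle$, as desired. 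Combining this with the cancellation above gives that \emph{both} $\rho$ and $\rho_X$ satisfy
\begin{equation*}
    \partial_t f + \langle \nabla, v f\rangle = 0, \qquad f(\cdot,0) = |\psi_0|^2.
\end{equation*}

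The final step is uniqueness. Since $v$ is a smooth time-dependent vector field (under the standing regularity assumptions on $\psi$) and both candidates are probability densities, the transport equation above has a unique solution given by pushforward along the flow of $\partial_t \Phi_t(x) = v(\Phi_t(x),t)$; equivalently, one may test the difference $\rho - \rho_X$ against smooth compactly supported functions and run a Gr\"onwall argument using $\|\nabla v\|_\infty$. I expect this uniqueness/regularity step to be the only technically delicate point: the potentially singular set $\{\psi = 0\}$ makes $\log\rho$ and hence $u$ ill-defined there, so when $\nu > 0$ the Fokker--Planck computation is clean (one works with $\rho$ directly, never with $\log\rho$), but for $\nu = 0$ one must invoke the footnote assumption $|\psi_0|^2 > 0$ and propagate that positivity along the flow to keep $v$ Lipschitz in space. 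Modulo this regularity caveat, the identification $\rho_X = \rho$ is immediate.
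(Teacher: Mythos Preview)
Your strategy matches the paper's: derive the continuity equation $\partial_t\rho + \langle\nabla, v\rho\rangle = 0$ from the Schr\"odinger equation, then observe that adding the osmotic drift $\nu u$ together with the matching diffusion leaves $\rho$ a solution of the forward Kolmogorov equation for $X$, and conclude by uniqueness. The paper is in fact less explicit than you are about the uniqueness step.

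There is, however, one logical slip in your write-up. You claim that \emph{both} $\rho$ and $\rho_X$ satisfy the pure transport equation $\partial_t f + \langle\nabla, vf\rangle = 0$. The cancellation $\tfrac{\nu\hbar}{2m}\Delta f = \nu\langle\nabla, uf\rangle$ requires $u = \tfrac{\hbar}{2m}\nabla\log f$; this holds for $f = \rho$ by definition of $u$, but not a priori for $f = \rho_X$, since the drift $u$ appearing in the SDE is built from the \emph{true} density $\rho$, not from $\rho_X$. So you cannot reduce the Fokker--Planck equation for $\rho_X$ to the transport equation without already knowing $\rho_X = \rho$. The fix is immediate and keeps your argument intact: both $\rho$ and $\rho_X$ satisfy the same \emph{linear} Fokker--Planck equation $\partial_t f = -\langle\nabla, (v+\nu u)f\rangle + \tfrac{\nu\hbar}{2m}\Delta f$ (the former by your cancellation computation, the latter by definition), with the same initial data $|\psi_0|^2$, and uniqueness for that linear parabolic equation yields $\rho_X = \rho$ directly.
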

\begin{proof}
We want to show that by choosing appropriately $b, b_*$, we can ensure that $\rho_{X} = |\psi|^2$. Let's consider the Schr\"odinger equation once again:
\begin{align}
    i \hbar \partial_{t} \psi &= (-\frac{\hbar^2}{2m}\Delta + V)\psi,\\
    \psi(\cdot, 0) &= \psi_{0}
\end{align}
where $\Delta = \mathrm{Tr}(\nabla^2) = \sum_{i=1}^d \frac{\partial^2}{\partial x_{i}^2}$ is the Laplace operator. The second cohomology is trivial in this case. So, we can assume that $\psi = \sqrt{\rho}e^{iS}$ with $S(x, t)$ is a single-valued function. 

By defining the drift $v = \displaystyle \frac{\hbar}{m}\nabla S$, we can derive quantum mechanics continuity equation on density $\rho$:
\begin{align}
    \partial_{t} \rho = \langle \nabla, v\rho\rangle,\\
    \rho(\cdot, 0) = \big|\psi_{0}\big|^2.
\end{align}
This immediately tells us what should be initial distribution $\rho_{0}$ and $\frac{b+b^*}{2}$ for the Ito diffusion process from \cref{eq:Ito-diffusion}.

For now, the only missing parts for obtaining the diffusion process from the quantum mechanics continuity equation are to identify the term $\frac{b-b^{*}}{2}$ and the diffusion coefficient $\sigma$. Both of them should be related as $(b-b^{*})_{i} = \rho^{-1} \langle \nabla, \sigma\sigma^T e_{i} \rho\rangle$. Thus, we can pick $\sigma \propto I_{d}$ to simplify the equations. Nevertheless, our results can be extended to any non-trivial diffusion coefficient. Therefore, by defining $u(x, t) = \displaystyle \frac{\hbar}{2m} \nabla \log \rho(x, t)$ and using arbitrary $\nu > 0$ we derive
\begin{equation}
    \partial_{t} \rho =  \langle\nabla, (v+\nu u)\rho\rangle + \frac{\nu \hbar}{2m} \Delta \rho.
\end{equation}
Thus, we can sample from $\rho_{X}(x, t) \equiv \rho(x, t)$ using the diffusion process with $b(x,t) = v(x,t) + \nu u(x,t)$ and $\sigma(x,t) \equiv \frac{\nu \hbar}{m} I_{d}$:
\begin{align} 
    \mathrm{d}X(t) &= (v(X(t), t)+\nu u(X(t), t))\mathrm{d}t + \sqrt{\frac{\nu \hbar}{m} }\mathrm{d}\overset{\rightarrow}{W},\\
    X(0)&\sim \big|\psi_{0}\big|^2.
\end{align}
\end{proof}
To obtain numerical samples from the diffusion, one can use any numerical integrator, for example, the Euler-Maruyama integrator \citep{kloeden1992stochastic}:
\begin{align}
    X_{i+1} &= X_{i} + (v(X_{i}, t_{i})+\nu u(X_{i}, t_{i}))\epsilon + \sqrt{\frac{\nu \hbar}{m}  \epsilon} \mathcal{N}(0, I_{d}), \\
    X_{0}&\sim \big|\psi_{0}\big|^2,
\end{align}
where $\epsilon > 0$ is a step size, $0 \le i < \frac{T}{\epsilon}$. We consider this type of integrator in our work. However, integrators of higher order, e.g., Runge-Kutta family of integrators \citep{kloeden1992stochastic}, can achieve the same integration error with larger $\epsilon > 0$; this approach is out of the scope of our work.

\subsection{Interpolation between Bohmian and Nelsonian pictures}\label{sec:app_interpolation}
We also differ from \citet{NelsonOG} since we define $u$ without $\nu$. We bring it into the picture separately as a multiplicative factor:
\begin{align}
    \mathrm{d}X(t) &= (v(X(t), t)+\nu u(X(t), t))\mathrm{d}t + \sqrt{\frac{\nu \hbar}{m} }\mathrm{d}\overset{\rightarrow}{W},\\
    X(0)&\sim \big|\psi_{0}\big|^2
\end{align}
This trick allows us to recover Nelson's diffusion when $\nu = 1$:
\begin{align}
    \mathrm{d}X(t) &= (v(X(t), t)+u(X(t), t))\mathrm{d}t + \sqrt{\frac{\hbar}{m} }\mathrm{d}\overset{\rightarrow}{W},\\
    X(0)&\sim \big|\psi_{0}\big|^2
\end{align}

For cases of $|\psi_{0}|^2 > 0$ everywhere, e.g., if the initial conditions are Gaussian but not singular like $\delta_{x_{0}}$, we can actually set $\nu = 0$ to obtain a deterministic flow:
\begin{align}
    \mathrm{d}X(t) &= v(X(t), t)\mathrm{d}t,\\
    X(0)&\sim \big|\psi_{0}\big|^2.
\end{align}
This is the guiding equation in Bohr's pilot-wave theory \citep{PhysRev.85.166}. The major drawback of using Bohr's interpretation is that $\rho_{X}$ may not equal $\rho = |\psi|^2$, a phenomenon known as quantum non-equilibrium \citep{colin2010quantum}. Though, under certain mild conditions \citep{boffi2023probability} (one of which is $|\psi_{0}|^2 > 0$ everywhere) time marginals of such deterministic process $X(t)$ satisfy $\mathrm{Law}(X(t)) = \rho$ for each $t\in [0, T]$. As with the SDE case, it is unlikely that those trajectories are ``true'' trajectories. It only matters that their time marginals coincide with true quantum mechanical densities.  

\subsection{Computational Complexity}\label{sec:app_complexity}

\begin{proposition}[\cref{prop:alg}]
The algorithmic complexity w.r.t. $d$ of computing differential operators from Equations (\ref{eq:diff_operators}), (\ref{eq:diff_operators2}) for velocities $u, v$ is $\mathcal{O}(d^2)$.
\end{proposition}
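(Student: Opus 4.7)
The plan is to bound the arithmetic cost of each summand in $\mathcal{D}_u$ and $\mathcal{D}_v$ separately and then add. The essential tool is the ``cheap gradient'' principle of reverse-mode automatic differentiation: for any scalar-valued $f$ built from standard operations, $\nabla f(x)$ can be obtained in $\mathcal{O}(T_f)$ operations, where $T_f$ is the cost of one forward evaluation. I will take as the baseline that a single evaluation of the networks $u_\theta(\cdot,t), v_\theta(\cdot,t):\mathbb{R}^d\to\mathbb{R}^d$ costs $\mathcal{O}(d)$ (one linear input layer $d\to h$ and one linear output layer $h\to d$ with hidden width $h=\mathcal{O}(1)$), and that $\nabla V$ costs at most $\mathcal{O}(d^2)$ --- these are precisely the assumptions of the remark and its footnote.

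First I would dispatch the ``first-order'' summands: $\nabla V$, $\nabla\langle u,v\rangle$, $\nabla\|u\|^2$, $\nabla\|v\|^2$. Each scalar inside such a gradient is built by $\mathcal{O}(d)$ componentwise multiplications of outputs of $u_\theta$ or $v_\theta$ together with one $\mathcal{O}(d)$ summation, so its forward cost is $\mathcal{O}(d)$; by the cheap gradient principle, each outer $\nabla$ is likewise $\mathcal{O}(d)$. The potential gradient $\nabla V$ is absorbed into the overall $\mathcal{O}(d^2)$ bound by assumption.

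Next I would address the dominant summands $\nabla\langle\nabla, u\rangle$ and $\nabla\langle\nabla, v\rangle$. The key step is to view the scalar $\phi(x) := \langle\nabla, u_\theta(x,t)\rangle = \sum_{i=1}^d \partial_{x_i} u_{\theta,i}(x,t)$ as a single scalar function of $x$ and to apply reverse-mode AD to that scalar. I would first show that $\phi(x)$ itself can be assembled in $\mathcal{O}(d^2)$: each diagonal entry $\partial_{x_i} u_{\theta,i}$ is a single Jacobian entry, obtainable by one JVP with tangent $e_i$ (or one VJP with cotangent $e_i$) at cost $\mathcal{O}(d)$; summing over $i=1,\ldots,d$ costs $\mathcal{O}(d^2)$ in total. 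The cheap gradient principle then gives $\nabla\phi$ at the same asymptotic cost, namely $\mathcal{O}(d^2)$.

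The main subtlety, and where I expect most of the care to lie, is the bookkeeping of the AD graph: specifically, justifying that when $\phi$ is constructed as a trace of the Jacobian via $d$ separate JVP calls and then differentiated once more with reverse mode, the combined cost is still $T_\phi = \mathcal{O}(d^2)$ rather than blowing up by an extra factor of $d$. This is exactly where the improvement over the Nelsonian $\Delta u$ term lives: for $\Delta u_i = \mathrm{tr}(\nabla^2 u_{\theta,i})$ one needs a Hessian trace of each of the $d$ scalar components at cost $\mathcal{O}(d^2)$ each, giving $\mathcal{O}(d^3)$ overall. Using the identity $\Delta u = \nabla\langle\nabla, u\rangle$ (valid because $u$ is a full gradient) moves the outer $\nabla$ to the outside of a single scalar, so reverse-mode AD is applied once at scalar cost $\mathcal{O}(d^2)$ rather than $d$ times at scalar cost $\mathcal{O}(d^2)$. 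Summing the contributions of the finitely many terms in (\ref{eq:diff_operators}) and (\ref{eq:diff_operators2}) yields the claimed $\mathcal{O}(d^2)$ bound.
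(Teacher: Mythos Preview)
Your proposal is correct and takes essentially the same approach as the paper: split into first-order and second-order summands, cost the divergence $\langle\nabla,\cdot\rangle$ at $\mathcal{O}(d^2)$ via $d$ automatic-differentiation passes, and then argue that the outer gradient does not raise the order. You are more explicit than the paper in invoking the cheap-gradient principle for that outer $\nabla$ (the paper says only that ``the gradient operator $\nabla$ scales linearly with the problem dimension $d$''), and you handle the first-order terms as reverse-mode gradients of scalar inner products rather than as Jacobian--vector products, but these are cosmetic differences rather than a genuinely different route.
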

\begin{proof}
    Computing a forward pass of $u_{\theta}, v_{\theta}$ scales as $\mathcal{O}(d)$ by their design. What we need is to prove that Equations (\ref{eq:diff_operators}), (\ref{eq:diff_operators2}) can be computed in $\mathcal{O}(d^2)$. We have two kinds of operators there: $\langle \nabla \cdot, \cdot\rangle$ and $\nabla \langle \nabla, \cdot\rangle$.

    The first operator, $\langle \nabla \cdot, \cdot\rangle$, is a Jacobian-vector product. There exists an algorithm to estimate it with linear complexity, assuming the forward pass has linear complexity, as shown by \citet{doi:10.1137/1.9780898717761}.

    For the second operator, the gradient operator $\nabla$ scales linearly with the problem dimension $d$. To estimate the divergence operator $\langle \nabla, \cdot\rangle$, we need to run automatic differentiation $d$ times to obtain the full Jacobian and take its trace. This leads to a quadratic computational complexity of $\mathcal{O}(d^2)$ in the problem dimension. It is better than the naive computation of the Laplace operator $\Delta$, which has a complexity of $\mathcal{O}(d^3)$ due to computing the full Hessian for each component of $u_{\theta}$ or $v_{\theta}$.
\end{proof}

We assume that one of the dimensions when evaluating the $d$-dimensional functions involved in our method is parallelized by modern deep learning libraries. It means that empirically, we can see a linear $\mathcal{O}(d)$ scaling instead of the theoretical $\mathcal{O}(d^2)$ complexity.

\section{On Strong Convergence}\label{app:theory}
Let's consider a standard Wiener processes $\overset{\rightarrow}{W^{X}}, \overset{\rightarrow}{W^{Y}}$ in $\mathbb{R}^{d}$ and define $\overset{\rightarrow}{\mathcal{F}_{t}}$ as a filtration generated by $\Big\{\big(\overset{\rightarrow}{W^{X}}(t'), \overset{\rightarrow}{W^{Y}}(t)\big): t' \le t\Big\}$. Let $\overset{\leftarrow}{\mathcal{F}_{t}}$ be a filtration generated by all events $\Big\{\big(\overset{\rightarrow}{W^{X}}(t'), \overset{\rightarrow}{W^{Y}}(t)\big): t' \ge t\Big\}$. 

Assume that $ u, v, \widetilde{u}, \widetilde{v} \in C^{2, 1}(\mathbb{R}^d \times [0, T];\mathbb{R}^d) \cap C^{1, 0}_b(\mathbb{R}^d\times [0, T]; \mathbb{R}^d)$, where $C_{b}^{p, k}$ is a class of continuously differentiable functions with uniformly bounded $p$-th derivative in a coordinate $x$ and $k$-th continuously differentiable in $t$, $C^{p, k}$ analogously but without requiring bounded derivative. For $f:\mathbb{R}^d \times [0, T]\rightarrow \mathbb{R}^k$ define $\|f\|_{\infty} = \mathrm{ess\,sup}_{t\in [0, T], x\in \mathbb{R}^d} \|f(x, t)\|$ and $\|\nabla f\|_{\infty} = \mathrm{ess\,sup}_{t\in [0, T], x\in \mathbb{R}^d} \|\nabla f(x, t)\|_{op}$ where $\|\cdot\|_{op}$ denotes operator norm. 
\begin{align}
\mathrm{d}X(t) &= (\widetilde{v}(X(t), t) + \widetilde{u}(X(t), t)\big)\mathrm{d}t + \sqrt{\frac{\hbar}{m}}\mathrm{d}\overset{\rightarrow}{W^{X}}(t),\\
\mathrm{d}Y(t) &= (v(Y(t), t) + u(Y(t), t)\big)\mathrm{d}t + \sqrt{\frac{\hbar}{m}}\mathrm{d}\overset{\rightarrow}{W^{Y}}(t),\\
X(0)&\sim |\psi_{0}|^2,\\
Y(0)&=X(0),
\end{align}
where $u, v$ are true solutions to equations \labelcref{eq:nelson_eq}. We have that $p_{Y}(\cdot, t) = \big|\psi(\cdot, t)\big|^2$ $\forall t$ where $p_{Y}$ is density of the process $Y(t)$. We have not specified yet quadratic covariation of those two processes $\frac{\mathrm{d}\big[\overset{\rightarrow}{W^{X}}, \overset{\rightarrow}{W^{Y}}\big]_{t}}{\mathrm{d}t} = \lim_{\mathrm{d}t\rightarrow 0_+} \mathbb{E}\Big(\frac{\big(\overset{\rightarrow}{W^{X}}(t+\mathrm{d}t) - \overset{\rightarrow}{W^{X}}(t)\big)\big(\overset{\rightarrow}{W^{Y}}(t+\mathrm{d}t) - \overset{\rightarrow}{W^{Y}}(t)\big)}{\mathrm{d}t}\Big|\overset{\rightarrow}{\mathcal{F}_t}\Big)$. We will though specify it as $\mathrm{d}\big[\overset{\rightarrow}{W^{X}}, \overset{\rightarrow}{W^{Y}}\big]_{t} = I_d\mathrm{d}t$, and it allows to cancel some terms appearing in the equations. As for now, we will derive all results in the most general setting. 

Let's define our loss functions:
\begin{equation} 
L_{1} (\widetilde{v}, \widetilde{u}) =  \int_{0}^{T}\mathbb{E}^{X}\big\| \partial_{t} \widetilde{u}(X(t), t) -\mathcal{D}_{u}[\widetilde{v}, \widetilde{u}, x, t]\big\|^2\mathrm{d}t,
\end{equation}
\begin{equation}
L_{2} (\widetilde{v}, \widetilde{u}) =   \int_{0}^{T}\mathbb{E}^{X} \big\| \partial_{t} \widetilde{v}(X(t),t) -  \mathcal{D}_{v}[\widetilde{v}, \widetilde{u}, X(t), t] \big\|^2\mathrm{d}t,
\end{equation}
\begin{equation}
L_{3}(\widetilde{u}, \widetilde{v}) = \mathbb{E}^{X}\|\widetilde{u}(X(0), 0) - u(X(0), 0)\|^2 
\end{equation}
\begin{equation}
L_{4}(\widetilde{u}, \widetilde{v}) = \mathbb{E}^{X} \|\widetilde{v}(X(0), 0) - v(X(0), 0)\|^2
\end{equation}
Our goal is to show that for some constants $w_i > 0$, there is natural bound $\mathrm{sup}_{0\le t\le T}\mathbb{E}\|X(t) - Y(t)\|^2 \le \sum w_i L_i(\widetilde{v}, \widetilde{u})$.

\subsection{Stochastic Processes}
Consider a general Itô SDE defined using a drift process $F(t)$ and a covariance process $G(t)$, both predictable with respect to forward and backward flirtations $\overset{\leftarrow}{\mathcal{F}_{t}}$ and $\overset{\rightarrow}{\mathcal{F}_{t}}$:
\begin{align}\label{eq:main_sde_app}
\mathrm{d}Z(t) &= F(t)\mathrm{d}t + G(t)\mathrm{d}\overset{\rightarrow}{W},\\
Z(0)&\sim \rho_{0}. \nonumber
\end{align}
Moreover, assume $\displaystyle \big[Z(t), Z(t)\big]_{t} = \mathbb{E}\int_{0}^{t} G^{T}G(t)\mathrm{d}t < \infty$ , $ \displaystyle \mathbb{E}\int_{0}^{t} \|F(t)\|^2 \mathrm{d}t < \infty$.
We denote by $\mathbb{P}^{Z}_t = \mathbb{P}(Z(t) \in \cdot)$ a law of the process $Z(t)$. Let's define a (extended) forward generate of the process as the linear operator satisfying
\begin{align}
\overset{\rightarrow}{M^{f}}(t) = f(Z(t), t) - f(Z(0), 0) - \int_{0}^{t} \overset{\rightarrow}{\mathcal{L}^{X}} f(Z(t), t) \text{ is }\overset{\rightarrow}{\mathcal{F}_{t}}\text{-martingale}.
\end{align}
Such an operator is uniquely defined and is called a forward generator associated with the process $Z_{t}$. 
Similarly, we define a (extended) backward generator $\overset{\leftarrow}{\mathcal{L}^{X}}$ as linear operator satisfying:
\begin{align}
\overset{\leftarrow}{M^{f}}(t) = f(Z(t), t) - f(Z(0), 0) - \int_{0}^{t} \overset{\leftarrow}{\mathcal{L}^{X}} f(Z(t), t) \text{ is }\overset{\leftarrow}{\mathcal{F}_t}\text{-martingale}
\end{align}

For more information on the properties of generators, we refer to \citet{baldi2017stochastic}. 
\begin{lemma}(Itô Lemma, \citep[Theorem 8.1 and Remark 9.1]{baldi2017stochastic} ) \label{lemma:ito}
\begin{align}
\overset{\rightarrow}{\mathcal{L}^{Z}}f(x, t) = \partial_{t} f(x, t) + \langle \nabla f(x, t), F(t)\rangle + \frac{\hbar}{2m}\mathrm{Tr}\big(G^{T}(t)\nabla^2 f(x, s)G(t)\big).
\end{align}
\end{lemma}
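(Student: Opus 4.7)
The plan is to derive the claimed formula for $\overset{\rightarrow}{\mathcal{L}^{Z}}$ by applying the standard Itô formula to $f(Z(t), t)$ and then reading off the drift part, which must coincide with the generator by its defining martingale property. Concretely, since $f \in C^{2,1}$ and $Z$ is a continuous semimartingale with decomposition $\mathrm{d}Z(t) = F(t)\mathrm{d}t + G(t)\mathrm{d}\overset{\rightarrow}{W}$, Itô's formula (in its semimartingale form) yields
\begin{align}
f(Z(t), t) - f(Z(0), 0) &= \int_{0}^{t} \partial_{s} f(Z(s), s)\mathrm{d}s + \int_{0}^{t} \langle \nabla f(Z(s), s), \mathrm{d}Z(s)\rangle \notag \\
&\quad + \frac{1}{2}\int_{0}^{t} \mathrm{Tr}\bigl(\nabla^2 f(Z(s), s)\,\mathrm{d}[Z, Z]_{s}\bigr).
\end{align}

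Next I would substitute the decomposition of $\mathrm{d}Z$ and the quadratic covariation $\mathrm{d}[Z, Z]_{s} = G(s) G(s)^{T}\mathrm{d}s$ (using $[\overset{\rightarrow}{W}, \overset{\rightarrow}{W}]_{s} = s I_{d}$) to split the right-hand side into a finite-variation part and a stochastic-integral part:
\begin{align}
f(Z(t), t) - f(Z(0), 0) &= \int_{0}^{t} \Bigl[\partial_{s} f + \langle \nabla f, F(s)\rangle + \tfrac{1}{2}\mathrm{Tr}\bigl(G(s)^{T}\nabla^{2} f\, G(s)\bigr)\Bigr]\mathrm{d}s \notag \\
&\quad + \int_{0}^{t} \langle \nabla f(Z(s), s), G(s)\mathrm{d}\overset{\rightarrow}{W}(s)\rangle.
\end{align}
The constant $\tfrac{\hbar}{2m}$ appearing in the statement I would treat as an artifact of the specific diffusion scaling used elsewhere in the paper (if $G \equiv \sqrt{\hbar/m}\, I$ it is absorbed into the covariance factor).

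Then I would verify that the stochastic integral $\int_{0}^{t} \langle \nabla f, G\,\mathrm{d}\overset{\rightarrow}{W}\rangle$ is an $\overset{\rightarrow}{\mathcal{F}_{t}}$-martingale. The integrand is $\overset{\rightarrow}{\mathcal{F}_{t}}$-predictable since $f \in C^{1,0}_{b}$ (so $\nabla f$ is bounded) and $G$ is predictable with $\mathbb{E}\int_{0}^{t}\|G\|^{2}\mathrm{d}s < \infty$; hence $\mathbb{E}\int_{0}^{t}\|\nabla f\,G\|^{2}\mathrm{d}s \le \|\nabla f\|_{\infty}^{2}\mathbb{E}\int_{0}^{t} \mathrm{Tr}(G G^{T})\mathrm{d}s < \infty$, and the Itô isometry gives the required martingale property. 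Comparing with the defining identity for the extended forward generator
\begin{align}
f(Z(t), t) - f(Z(0), 0) - \int_{0}^{t} \overset{\rightarrow}{\mathcal{L}^{Z}} f(Z(s), s)\mathrm{d}s = \overset{\rightarrow}{M^{f}}(t),
\end{align}
uniqueness of the Doob--Meyer decomposition forces the integrand to coincide with the bracketed expression above, giving the claimed formula.

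The only genuine obstacle is the integrability/regularity check that legitimizes both the application of Itô's formula and the martingality of the stochastic integral; given the hypotheses $f \in C^{2,1} \cap C^{1,0}_{b}$ together with the stated bounds $\mathbb{E}\int_{0}^{t}\|F\|^{2}\mathrm{d}s < \infty$ and $\mathbb{E}\int_{0}^{t} \mathrm{Tr}(G^{T}G)\mathrm{d}s < \infty$, these conditions are satisfied and the argument closes cleanly. For the "extended" version (where the boundedness of $\nabla f$ might be relaxed), I would use a standard localization by stopping times $\tau_{n} = \inf\{t : \|Z(t)\| \ge n\}$ and pass to the limit $n \to \infty$ at the end.
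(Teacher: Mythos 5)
The paper does not prove this lemma; it is cited directly from Baldi's textbook (Theorem~8.1 and Remark~9.1), so there is no ``paper proof'' to compare against. Your derivation is the standard and correct one: apply the semimartingale Itô formula to $f(Z(t),t)$, substitute $\mathrm{d}Z = F\,\mathrm{d}t + G\,\mathrm{d}\overset{\rightarrow}{W}$ and $\mathrm{d}[Z,Z]_s = GG^T\,\mathrm{d}s$, check that the stochastic integral is an $\overset{\rightarrow}{\mathcal{F}_t}$-martingale (via Itô isometry, or by localization when $\nabla f$ is unbounded), and then invoke uniqueness of the Doob--Meyer decomposition to identify the finite-variation integrand with the extended forward generator. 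One remark you make deserves to be flagged explicitly: the coefficient $\frac{\hbar}{2m}$ printed in the lemma is not dimensionally consistent with keeping the explicit $G^T \nabla^2 f\, G$ sandwich for a general diffusion coefficient $G$. The generator of a generic Itô diffusion carries a bare $\frac{1}{2}$ there; the $\frac{\hbar}{2m}$ only emerges once one specializes to $G \equiv \sqrt{\hbar/m}\,I_d$, in which case $\frac{1}{2}\mathrm{Tr}(G^T\nabla^2 f\, G) = \frac{\hbar}{2m}\Delta f$ and the $G$'s disappear. As written, the formula effectively double-counts the diffusion scale, and you are right to treat the $\frac{\hbar}{2m}$ as a slip (it should read $\frac{1}{2}$ in the general statement). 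Apart from that correctly identified inconsistency, your argument is complete and closes cleanly.
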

\begin{lemma} \label{lemma:reverse_generator}
Let $p_{Z}(x, t) = \frac{\mathrm{d}\mathbb{P}^{Z}_t}{\mathrm{d} x}$ be the density of the process with respect to standard Lebesgue measure on $\mathbb{R}^d$. Then
\begin{align}
\overset{\leftarrow}{\mathcal{L}^{Z}}f(x, t) = \partial_{t} f(x, t) + \langle \nabla f(x, t), F(t) - \frac{\hbar}{m}\nabla \log p_{Z}(x, t)\rangle - \frac{1}{2}\mathrm{Tr}\big(G^T(t)\nabla^2 f(x, s)G(t)\big).
\end{align}
\end{lemma}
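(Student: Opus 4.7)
The plan is to derive the backward generator by invoking Anderson's time-reversal theorem (cited in the paper as \citet{ANDERSON1982313}) and then applying the forward Itô formula of Lemma \ref{lemma:ito} to the time-reversed process. Write $\bar{Z}(s) := Z(T-s)$ for $s \in [0,T]$. Under the regularity hypotheses on $F, G$ and the existence of a smooth positive density $p_{Z}(x,t)$, Anderson's theorem gives a decomposition
\begin{align*}
\mathrm{d}\bar{Z}(s) &= \bar{F}(\bar{Z}(s), T-s)\,\mathrm{d}s + G(T-s)\,\mathrm{d}\overset{\rightarrow}{\bar{W}}(s),\\
\bar{F}(x,t) &= -F(x,t) + \frac{1}{p_{Z}(x,t)}\nabla\cdot\bigl(G(t)G^{T}(t)\,p_{Z}(x,t)\bigr),
\end{align*}
where $\overset{\rightarrow}{\bar{W}}$ is a Wiener process adapted to the reversed filtration; in particular, a forward martingale for $\bar{Z}$ corresponds exactly to a backward $\overset{\leftarrow}{\mathcal{F}_t}$-martingale for $Z$.

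Next, I would apply Lemma \ref{lemma:ito} to the function $h(x,s) := f(x, T-s)$ along the diffusion $\bar{Z}$. Writing out the forward generator of $\bar{Z}$ acting on $h$, using $\partial_s h = -\partial_t f$, and then performing the change of variables $t = T - s$ produces
\begin{align*}
\overset{\leftarrow}{\mathcal{L}^{Z}}f(x,t) = \partial_{t}f(x,t) + \Bigl\langle \nabla f(x,t),\, F(t) - \tfrac{1}{p_{Z}(x,t)}\nabla\cdot\bigl(GG^{T}p_{Z}\bigr)(x,t)\Bigr\rangle - \tfrac{1}{2}\mathrm{Tr}\bigl(G^{T}(t)\nabla^{2}f(x,t)G(t)\bigr).
\end{align*}
The sign flip in front of the trace term is the direct consequence of $\mathrm{d}t = -\mathrm{d}s$ under the change of time, while the correction to the drift is the Anderson contribution.

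To match the stated formula, I would use the elementary identity $\tfrac{1}{p}\nabla\cdot(ap) = a\nabla\log p + \nabla\cdot a$ for the symmetric matrix $a = GG^{T}$. In the regime of the lemma the covariation is the constant multiple of identity $GG^{T} \equiv \tfrac{\hbar}{m}I_{d}$ (dictated by the SDEs preceding the lemma), so $\nabla\cdot a = 0$ and the drift correction collapses to $-\tfrac{\hbar}{m}\nabla\log p_{Z}$, giving exactly the expression in the statement. Uniqueness of the backward generator then follows from the usual martingale characterization used to define it.

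The main obstacle will be the analytic justification of Anderson's reversal step: one needs $p_{Z} > 0$ almost everywhere with $\nabla\log p_{Z}$ sufficiently integrable along trajectories so that $\bar{F}$ is a legitimate drift and the reversed SDE has a strong solution whose generator coincides with the right-hand side above. In our context this regularity can be inherited from the assumed smoothness of $u, v, \widetilde{u}, \widetilde{v}$ (which forces $p_{Z}$ to solve a uniformly parabolic Fokker--Planck equation with bounded coefficients) together with the classical parabolic estimates; on the complement of a null set where these conditions fail, the martingale decomposition extends by a standard localization argument. All remaining steps are routine chain rule and integration by parts.
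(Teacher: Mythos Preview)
Your argument is correct, but the paper takes a different and shorter route. Rather than invoking Anderson's time-reversal theorem to write down the reversed SDE and then reading off its generator via Lemma~\ref{lemma:ito}, the paper uses the operator identity
\[
\overset{\leftarrow}{\mathcal{L}^{Z}} = \bigl(\overset{\rightarrow}{\mathcal{L}^{Z}}\bigr)^{*} = p_{Z}^{-1}\bigl(\overset{\rightarrow}{\mathcal{L}^{Z}}\bigr)^{\dagger}p_{Z},
\]
where $^{*}$ denotes the adjoint in $L_2(\mathbb{P}^{Z}_t\otimes\mathrm{d}t)$ and $^{\dagger}$ the adjoint in $L_2(\mathrm{d}x\otimes\mathrm{d}t)$, and then computes this conjugated adjoint directly by integration by parts on the explicit form of $\overset{\rightarrow}{\mathcal{L}^{Z}}$ from Lemma~\ref{lemma:ito}. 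Your approach is more probabilistic and self-contained (it makes the backward drift visible as a concrete SDE coefficient, at the price of the analytic hypotheses you flag for Anderson's theorem), whereas the paper's is purely algebraic and avoids constructing the reversed process altogether once the adjoint identity is granted. Both routes require the same specialization $GG^{T}\equiv\tfrac{\hbar}{m}I_d$ to collapse the general drift correction $p_Z^{-1}\nabla\cdot(GG^T p_Z)$ to $\tfrac{\hbar}{m}\nabla\log p_Z$, which you correctly identify.
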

\begin{proof}
We have the following operator identities:
\begin{align*}
\overset{\leftarrow}{\mathcal{L}^{Z}} = \big(\overset{\rightarrow}{\mathcal{L}^{Z}}\big)^{*} = p_{Z}^{-1}\big(\overset{\rightarrow}{\mathcal{L}^{X}}\big)^{\dagger}p_{Z}
\end{align*}
where $\mathcal{A}^{*}$ is adjoint operator in $L_2(\mathbb{R}^d\times [0, T], \mathbb{P}^{Z}\otimes \mathrm{d}t )$ and $\mathcal{A}^{\dagger}$ is adjoint in $L_2(\mathbb{R}^d\times [0, T], \mathrm{d}x\otimes \mathrm{d}t)$. Using Itô lemma \ref{lemma:ito} and grouping all terms yields the statement.
\end{proof}
\begin{lemma}\label{lemma:commutative}
The following identity holds for any process $Z(t)$:
\begin{align}
\overset{\rightarrow}{\mathcal{L}^{Z}} \overset{\leftarrow}{\mathcal{L}^{Z}} x = \overset{\leftarrow}{\mathcal{L}^{Z}} \overset{\rightarrow}{\mathcal{L}^{Z}} x.
\end{align}
\end{lemma}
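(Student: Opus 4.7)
The plan is a direct computation using the explicit formulas in Lemmas \ref{lemma:ito} and \ref{lemma:reverse_generator}, combined with the forward Kolmogorov equation for $p_Z$ to eliminate a single time-derivative term.

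First, note that the identity function $f(x,t) = x_i$ satisfies $\partial_t x_i = 0$, $\nabla x_i = e_i$, and $\nabla^2 x_i = 0$. Applying the two lemmas directly gives the concise expressions $\overset{\rightarrow}{\mathcal{L}^{Z}} x_i = b_i(x,t)$ and $\overset{\leftarrow}{\mathcal{L}^{Z}} x_i = b_i^{*}(x,t)$, where $b$ denotes the (regular) drift of $Z$ and $b^{*} := b - \frac{\hbar}{m}\nabla \log p_Z$ is the reverse drift.

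Second, apply the remaining generator to each of $b_i$ and $b_i^{*}$, now treated as genuine functions of $(x,t)$. Writing $D = \frac{1}{2}GG^T$, Lemma \ref{lemma:ito} yields
\begin{align*}
\overset{\rightarrow}{\mathcal{L}^{Z}} b_i^{*} &= \partial_t b_i^{*} + \langle \nabla b_i^{*}, b\rangle + \mathrm{Tr}(D\nabla^2 b_i^{*}),
\end{align*}
and Lemma \ref{lemma:reverse_generator} yields
\begin{align*}
\overset{\leftarrow}{\mathcal{L}^{Z}} b_i &= \partial_t b_i + \langle \nabla b_i, b^{*}\rangle - \mathrm{Tr}(D\nabla^2 b_i).
\end{align*}
Substituting $b - b^{*} = 2D\nabla \log p_Z$ and forming the difference collapses the two expressions to a linear combination of $\partial_t \partial_i \log p_Z$, the transport terms $\langle b, \nabla \partial_i \log p_Z\rangle$ and $\langle \nabla b_i, \nabla \log p_Z\rangle$, and $\mathrm{Tr}(D\nabla^2 \cdot)$-type terms applied to $b_i$ and $\partial_i \log p_Z$.

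Third, eliminate the time-derivative piece via Fokker-Planck. Dividing $\partial_t p_Z + \nabla \cdot (b p_Z) - \mathrm{Tr}(D\nabla^2 p_Z) = 0$ by $p_Z$ and differentiating in $x_i$ produces an identity for $\partial_i \partial_t \log p_Z$ in exactly the same spatial derivatives appearing in the difference. Substituting, the remaining terms cancel after recognizing standard identities such as $\frac{1}{p_Z}\nabla \cdot (p_Z \nabla f) = \Delta f + \langle \nabla \log p_Z, \nabla f\rangle$ applied with $f = b_i$ and $f = \partial_i \log p_Z$.

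The main obstacle is the sheer bookkeeping: the substitution $b^{*} = b - \frac{\hbar}{m}\nabla \log p_Z$ spawns many terms with mixed first and second derivatives of both $b$ and $\log p_Z$, and one must align them precisely against the terms produced by differentiating Fokker-Planck in $x_i$. A cleaner-looking alternative would exploit the operator identity $\overset{\leftarrow}{\mathcal{L}^{Z}} = p_Z^{-1}(\overset{\rightarrow}{\mathcal{L}^{Z}})^{\dagger} p_Z$ hinted at in the proof of Lemma \ref{lemma:reverse_generator}, but since commutators of an operator with its conjugate-adjoint need not vanish, this shortcut does not immediately bypass the computation, and the direct route above is the most transparent.
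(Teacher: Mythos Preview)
Your proposal is correct and follows essentially the same route as the paper: the paper's one-line proof simply points to the continuity equation \eqref{eq:continuity_diffusion} (equivalently, the Fokker--Planck equation for $p_Z$) as the relation that forces the commutator to vanish, while you spell out the explicit computation that makes this work, using the generator formulas and the differentiated Fokker--Planck identity for $\partial_t\partial_i\log p_Z$. The key ingredient---the PDE satisfied by $p_Z$---is the same in both.
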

\begin{proof}
One needs to recognize that \cref{eq:continuity_diffusion} is the difference between two types of generators, we automatically have the following identity that holds for any process $Z$.
\end{proof}
\begin{lemma}(Nelson Lemma, \cite{nelson2020dynamical}) \label{lemma:nelson}
\begin{align}
\mathbb{E}^{Z}&\Big(f(Z(t), t)g(Z(t), t) - f(Z(0), t)g(Z(0), t)\Big) \\
&= \mathbb{E}^{Z}\int_{0}^{t}\Big(\overset{\rightarrow}{\mathcal{L}^{Z}}f(Z(s), t)g(Z(s), t) + f(Z(s), t)\overset{\leftarrow}{\mathcal{L}^{Z}}g(Z(s), s)\Big)\mathrm{ds}
\end{align}
\end{lemma}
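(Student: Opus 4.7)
The plan is to prove Nelson's lemma by a time-discretization argument that decomposes the total increment of the product $fg$ along the trajectory into two pieces: one for the $f$-increment, which will be paired with a value of $g$ measurable w.r.t.\ the past, and one for the $g$-increment, paired with a value of $f$ measurable w.r.t.\ the future. This asymmetric decomposition is what lets us use the forward generator on $f$ and the backward generator on $g$ (reading the displayed RHS with the implicit correction of the apparent typo $f(Z(s),t)\to f(Z(s),s)$).

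First, I fix a partition $0=t_0<t_1<\cdots<t_n=t$ with mesh $\delta_n\to 0$, write the LHS as a telescoping sum of increments $\Delta_k(fg)$, and apply the elementary algebraic identity
\begin{equation*}
\Delta_k(fg) \;=\; (\Delta_k f)\,g(Z(t_k),t_k) \;+\; f(Z(t_{k+1}),t_{k+1})\,(\Delta_k g).
\end{equation*}
The key observation is that $g(Z(t_k),t_k)$ is $\overset{\rightarrow}{\mathcal{F}_{t_k}}$-measurable, while $f(Z(t_{k+1}),t_{k+1})$ is $\overset{\leftarrow}{\mathcal{F}_{t_{k+1}}}$-measurable, so each piece has a natural ``coefficient'' adapted to the filtration against which the other factor behaves martingale-like.

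Next, I take expectation of each piece and use the tower property with the appropriate filtration. For the first piece, the defining martingale property of $\overset{\rightarrow}{\mathcal{L}^Z}$ yields
\begin{equation*}
\mathbb{E}^Z\!\left[(\Delta_k f)\,g(Z(t_k),t_k)\right] \;=\; \mathbb{E}^Z\!\left[g(Z(t_k),t_k)\int_{t_k}^{t_{k+1}}\overset{\rightarrow}{\mathcal{L}^Z}\! f(Z(s),s)\,\mathrm{d}s\right],
\end{equation*}
and analogously for the second piece, the defining property of $\overset{\leftarrow}{\mathcal{L}^Z}$ (conditioning on the larger backward $\sigma$-algebra $\overset{\leftarrow}{\mathcal{F}_{t_{k+1}}}$) gives
\begin{equation*}
\mathbb{E}^Z\!\left[f(Z(t_{k+1}),t_{k+1})\,(\Delta_k g)\right] \;=\; \mathbb{E}^Z\!\left[f(Z(t_{k+1}),t_{k+1})\int_{t_k}^{t_{k+1}}\overset{\leftarrow}{\mathcal{L}^Z}\! g(Z(s),s)\,\mathrm{d}s\right].
\end{equation*}
Summing over $k$ and passing to the limit $n\to\infty$, the piecewise-constant approximations $g(Z(t_k),t_k)$ and $f(Z(t_{k+1}),t_{k+1})$ converge pointwise (hence, by dominated convergence under the $C^{1,0}_b$ boundedness and the stated integrability of $F,G$) to $g(Z(s),s)$ and $f(Z(s),s)$ respectively, producing the claimed RHS.

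The main obstacle is the bookkeeping around the backward filtration: $\overset{\leftarrow}{\mathcal{F}_t}$ is decreasing in $t$, so ``backward martingale'' must be handled with the correct sign and normalization, and one must verify that the paper's convention (where $\overset{\leftarrow}{M^g}$ is still expressed as an integral from $0$ to $t$) gives exactly the incremental identity used above. This is a direct consequence of Lemma \ref{lemma:reverse_generator} combined with the operator identity $\overset{\leftarrow}{\mathcal{L}^Z}=p_Z^{-1}(\overset{\rightarrow}{\mathcal{L}^Z})^{\dagger}p_Z$; concretely, one checks that the covariation cross-term produced by applying Itô to $fg$ via $\overset{\rightarrow}{\mathcal{L}^Z}$ alone is exactly cancelled by replacing $\overset{\rightarrow}{\mathcal{L}^Z}g$ with $\overset{\leftarrow}{\mathcal{L}^Z}g$, after an integration by parts against the density $p_Z$. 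Once this cancellation is verified, all remaining steps (Riemann-sum convergence, interchange of expectation and integral) are standard under the $C^{2,1}\cap C^{1,0}_b$ regularity hypotheses already assumed on $f,g$.
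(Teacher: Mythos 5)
The paper does not prove Lemma~\ref{lemma:nelson}; it is stated with a citation to Nelson's work, so there is no in-paper proof to compare against. Your telescoping argument is, however, a correct and self-contained proof. The discrete product-rule identity $\Delta_k(fg)=(\Delta_k f)\,g(Z(t_k),t_k)+f(Z(t_{k+1}),t_{k+1})\,(\Delta_k g)$ is exact, and the asymmetric pairing (past $g$-value against the $f$-increment, future $f$-value against the $g$-increment) is precisely what lets you invoke the defining martingale property of $\overset{\rightarrow}{\mathcal{L}^Z}$ by conditioning on $\overset{\rightarrow}{\mathcal{F}_{t_k}}$ for the first piece and that of $\overset{\leftarrow}{\mathcal{L}^Z}$ by conditioning on the decreasing filtration $\overset{\leftarrow}{\mathcal{F}_{t_{k+1}}}$ for the second; the cross-variation term that would arise from an Itô expansion of $fg$ is absorbed automatically by this split, with no separate bookkeeping required. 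You are also right that the arguments $t$ in $f(Z(s),t)$ and $g(Z(s),t)$ on the right side (and the $t$ in the $Z(0)$ terms on the left) are typos for $s$ (resp. $0$). One small imprecision: you present the verification via Lemma~\ref{lemma:reverse_generator} and the operator adjoint identity plus integration by parts against $p_Z$ as the resolution of the ``main obstacle'' of backward-filtration bookkeeping, but those two things are logically distinct. The backward-martingale increment identity $\mathbb{E}[\Delta_k g\,|\,\overset{\leftarrow}{\mathcal{F}_{t_{k+1}}}]=\mathbb{E}[\int_{t_k}^{t_{k+1}}\overset{\leftarrow}{\mathcal{L}^Z}g\,\mathrm{d}s\,|\,\overset{\leftarrow}{\mathcal{F}_{t_{k+1}}}]$ follows directly from the paper's \emph{definition} of $\overset{\leftarrow}{\mathcal{L}^Z}$ (the defining martingale property, with the standard convention for martingales over a decreasing filtration); the explicit drift formula in Lemma~\ref{lemma:reverse_generator} is not needed for the telescoping argument. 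What you sketch in that last paragraph is really a second, computational proof of Nelson's lemma — expand $\overset{\rightarrow}{\mathcal{L}^Z}(fg)$ by Itô, observe that replacing $\overset{\rightarrow}{\mathcal{L}^Z}g$ with $\overset{\leftarrow}{\mathcal{L}^Z}g$ shifts the drift by $-\frac{\hbar}{m}\langle\nabla g,\nabla\log p_Z\rangle$ and flips the sign of the second-order term, which after integration against $p_Z\,\mathrm{d}x$ exactly cancels the quadratic-covariation cross term $\langle\nabla f,GG^{\!T}\nabla g\rangle$ — and that check is valid, but it is an alternative route, not a prerequisite for the first one.
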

\begin{lemma}\label{G5}
It holds that:
\begin{align}
\mathbb{E}^{Z}&\Big(\|Z(t)\|^2 - \|Z(0)\|^2\Big) \\
&= \int_{0}^{t}\mathbb{E}^{Z}\Big(2\langle\overset{\leftarrow}{\mathcal{L}^{Z}} Z(0), Z(s)\rangle + 2\int_{0}^{s} \langle\overset{\leftarrow}{\mathcal{L}^{Z}} \overset{\rightarrow}{\mathcal{L}^{Z}} Z(z), Z(s)\rangle\mathrm{d}z \Big)\mathrm{d}s + \big[Z(t), Z(t)\big]_{t}
\end{align}
\end{lemma}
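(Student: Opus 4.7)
The plan is to derive the identity by combining Itô's formula for $\|Z(s)\|^2$ with a reverse-time expansion of the forward drift, with commutativity Lemma B.5 providing the bridge between the two generators. First I would apply Itô's formula (Lemma B.2) to $f(x) = \|x\|^2$: since $\nabla f(x) = 2x$ and $\nabla^2 f = 2 I_d$, one has $\overset{\rightarrow}{\mathcal{L}^Z}f(Z(s),s) = 2\langle Z(s), F(s)\rangle + \operatorname{Tr}(G^T G)(s)$, and taking expectation of the forward-martingale decomposition yields the baseline identity
\begin{align*}
\mathbb{E}\|Z(t)\|^2 - \mathbb{E}\|Z(0)\|^2 = 2\int_0^t \mathbb{E}\langle Z(s), F(s)\rangle\,ds + [Z(t),Z(t)]_t,
\end{align*}
where $F(s) = \overset{\rightarrow}{\mathcal{L}^Z}x\big|_{x=Z(s)}$. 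Since the $[Z,Z]_t$ term already matches the target, it suffices to identify $\int_0^t 2\,\mathbb{E}\langle Z(s), F(s)\rangle\,ds$ with the iterated expression on the right-hand side of the claim.

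Next I would rewrite the drift as a reverse-time expansion. Viewing $\varphi(x,u) = \overset{\leftarrow}{\mathcal{L}^Z}x$ as a function of $(x,u)$ and applying Itô's formula forward, one gets
\begin{align*}
\overset{\leftarrow}{\mathcal{L}^Z}Z(s) = \overset{\leftarrow}{\mathcal{L}^Z}Z(0) + \int_0^s \overset{\rightarrow}{\mathcal{L}^Z}\overset{\leftarrow}{\mathcal{L}^Z}x\big|_{x=Z(z)}\,dz + \overset{\rightarrow}{N}(s),
\end{align*}
where $\overset{\rightarrow}{N}$ is a forward martingale starting at $0$. Commutativity (Lemma B.5) then replaces $\overset{\rightarrow}{\mathcal{L}^Z}\overset{\leftarrow}{\mathcal{L}^Z}x$ by $\overset{\leftarrow}{\mathcal{L}^Z}\overset{\rightarrow}{\mathcal{L}^Z}x$ inside the integrand, which is precisely the object appearing in the statement of the lemma.

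At this stage I would pair the above decomposition with $Z(s)$ and take expectations, using Nelson's Lemma B.4 (applied coordinate-wise with $f(x)=g(x)=x_i$ and summed over $i$) to rewrite $\mathbb{E}\langle F(s), Z(s)\rangle$ in terms of $\mathbb{E}\langle \overset{\leftarrow}{\mathcal{L}^Z}Z(s), Z(s)\rangle$. The difference $F - \overset{\leftarrow}{\mathcal{L}^Z}x = \tfrac{\hbar}{m}\nabla\log p_Z$ coming from Lemmas B.2--B.3 is handled by the integration-by-parts identity $\int \nabla\log p_Z(x,s)\cdot x\,p_Z(x,s)\,dx = -d$, while the martingale contribution $\mathbb{E}\langle \overset{\rightarrow}{N}(s), Z(s)\rangle$ is evaluated by Itô isometry against the diffusive part $\int_0^s G\,d\overset{\rightarrow}{W}$ of $Z(s)$, producing a trace term involving $G^T \nabla_x\varphi\, G$. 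A final accounting shows that these two contributions cancel and leave exactly the claimed iterated integral.

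The main obstacle will be the bookkeeping in the final step: verifying that the Itô-isometry cross term $\int_0^s \mathbb{E}\operatorname{Tr}(G^T \nabla_x \varphi\, G)(u)\,du$ and the osmotic correction $\tfrac{\hbar}{m}\mathbb{E}\langle \nabla\log p_Z(Z(s),s), Z(s)\rangle$ combine in exactly the correct way so that only the $[Z(t),Z(t)]_t$ produced in Step 1 survives, with no residual dimensional term. A secondary subtlety is that Lemma B.3 and the reverse-time Itô expansion require sufficient regularity and integrability of $p_Z$, which one would need to verify under the smoothness assumptions on $u,v,\widetilde u,\widetilde v$ and on the initial density so that unbounded coordinate functions $f(x)=x_i$ are admissible in Lemma B.4.
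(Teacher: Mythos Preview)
Your first step (It\^o's formula for $\|x\|^2$) matches the paper exactly. The divergence comes in the second step: the paper does \emph{not} expand $\overset{\leftarrow}{\mathcal{L}^{Z}}Z(s)$ forward in time and then invoke commutativity. Instead it expands the forward drift $F(s)=\overset{\rightarrow}{\mathcal{L}^{Z}}Z(s)$ directly via the \emph{backward}-martingale decomposition, obtaining $\overset{\rightarrow}{\mathcal{L}^{Z}}Z(s)=\overset{\rightarrow}{\mathcal{L}^{Z}}Z(0)+\int_0^s \overset{\leftarrow}{\mathcal{L}^{Z}}\overset{\rightarrow}{\mathcal{L}^{Z}}Z(z)\,\mathrm{d}z+\overset{\leftarrow}{M}(s)$. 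The point of this choice is that the remainder $\overset{\leftarrow}{M}(s)$ is an $\overset{\leftarrow}{\mathcal{F}}_{s}$-martingale with $\overset{\leftarrow}{M}(0)=0$, and since $Z(s)$ is $\overset{\leftarrow}{\mathcal{F}}_{s}$-measurable one gets $\mathbb{E}\langle \overset{\leftarrow}{M}(s),Z(s)\rangle=\mathbb{E}\langle \overset{\leftarrow}{M}(0),Z(s)\rangle=0$ for free. No osmotic correction, no It\^o isometry bookkeeping, no commutativity lemma are needed.

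Your route, by contrast, produces a \emph{forward} martingale $\overset{\rightarrow}{N}(s)$, and $\mathbb{E}\langle \overset{\rightarrow}{N}(s),Z(s)\rangle$ does not vanish: $Z(s)$ and $\overset{\rightarrow}{N}(s)$ are both $\overset{\rightarrow}{\mathcal{F}}_s$-measurable and genuinely correlated. Your accounting only tracks the pairing of $\overset{\rightarrow}{N}(s)$ with the diffusive part $\int_0^s G\,\mathrm{d}\overset{\rightarrow}{W}$ of $Z(s)$, but omits the pairing with the drift part $\int_0^s F(u)\,\mathrm{d}u$, which contributes $\int_0^s \mathbb{E}\langle \overset{\rightarrow}{N}(u),F(u)\rangle\,\mathrm{d}u$ and is nonzero in general. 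Even the piece you do keep, $\int_0^s \mathbb{E}\,\mathrm{Tr}(G^T\nabla_x\varphi\,G)\,\mathrm{d}u$, does not reduce to a constant $\tfrac{\hbar d}{m}$, so the claimed cancellation against the osmotic term $-\tfrac{\hbar d}{m}$ fails as stated (indeed, testing at $s=0$ gives $0\neq \tfrac{\hbar d}{m}$). The fix is precisely the paper's move: run the martingale decomposition backward, so that measurability of $Z(s)$ in the backward filtration kills the cross term outright.
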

\begin{proof}
By using Itô Lemma \ref{lemma:ito} for $f(x) = \|x\|^2$ and noting that $\overset{\rightarrow}{\mathcal{L}^{Z}}Z(t) = F(t)$ we immediately obtain:
\begin{align*}
\mathbb{E}^{Z}(\|Z(t)\|^2 - \|Z(0)\|^2) &= \int_{0}^{t} \mathbb{E}\Big(2\langle \overset{\rightarrow}{\mathcal{L}^{Z}} Z(s), Z(s)\rangle+\mathrm{Tr}\big(G^{T}G(t)\big)\Big)\mathrm{d}s 
\end{align*}
Let's deal with the term $\int_{0}^{t} \langle \overset{\rightarrow}{\mathcal{L}^{Z}}Z(s), Z(s)\rangle\mathrm{d}s$. We have the following observation: $\overset{\rightarrow}{M^{F}}(z) = \overset{\leftarrow}{\mathcal{L}^{Z}}Z(s) - \overset{\leftarrow}{\mathcal{L}^{Z}}Z(0) - \int_{0}^{s}\overset{\leftarrow}{\mathcal{L}^{Z}}\overset{\rightarrow}{\mathcal{L}^{Z}}Z(z)\mathrm{d}z$ is $\overset{\leftarrow}{\mathcal{F}}_{s}$-martingale, thus
\begin{align*}
\int_{0}^{t} \langle &\overset{\rightarrow}{\mathcal{L}^{Z}}Z(s), Z(s)\rangle\mathrm{d} s
=\int_{0}^{t} \langle \overset{\leftarrow}{\mathcal{L}^{Z}}Z(0) + \int_{0}^{s}\big(\overset{\leftarrow}{\mathcal{L}^{Z}}\overset{\rightarrow}{\mathcal{L}^{Z}}Z(z)
+ \ \overset{\leftarrow}{M^{F}}(z)\big)\mathrm{d}z, Z(s)\rangle\mathrm{d}s,
\end{align*}
The process $\overset{\rightarrow}{A}(s', s) = \int_{s'}^{s} \langle \overset{\leftarrow}{M^{F}}(z),Z(s)\rangle\mathrm{d}z$ is again $\mathcal{F}^{\leftarrow}_{s'}$-martingale for $s' \le s$, which implies that $\mathbb{E}^{Z}\overset{\rightarrow}{A}(0, s) = 0$.
Noting that $\mathbb{E}^{Z}\int_{0}^{t} \mathrm{Tr}\big(G^{T}(t)G(t)\big)\mathrm{d}t = \big[Z(t), Z(t)\big]_{t}$ yields the lemma.
\end{proof}
\subsection{Adjoint Processes}
Consider process $X'(t)$ defined through time-reversed SDE:
\begin{align}
    \mathrm{d}X'(t) &= (\widetilde{v}(X'(t), t) + \widetilde{u}(X'(t), t)\big)\mathrm{d}t + \sqrt{\frac{\hbar}{2m}}\mathrm{d}\overset{\leftarrow}{W^{X}}(t).
\end{align}
We call such process as adjoint to the process $X$. Lemma \ref{lemma:commutative} can be generalized to the pair of adjoint processes $(X, X')$ in the following way and will be instrumental in proving our results.
\begin{lemma}\label{lemma:magic} For any pair of processes $X(t), X'(t)$ such that the forward drift of $X$ is of form $\widetilde{v} + \widetilde{u}$ and backward drift of $X'$ is $\widetilde{v} - \widetilde{u}$:
\begin{align}
\overset{\rightarrow}{\mathcal{L}^{X}} \overset{\leftarrow}{\mathcal{L}^{X'}} x - \overset{\leftarrow}{\mathcal{L}^{X'}} \overset{\rightarrow}{\mathcal{L}^{X}} x =  \overset{\leftarrow}{\mathcal{L}^{X'}} \overset{\leftarrow}{\mathcal{L}^{X'}} x - \overset{\rightarrow}{\mathcal{L}^{X}} \overset{\rightarrow}{\mathcal{L}^{X}} x.
\end{align}
with both sides being equal to $0$ if and only if $X'$ is time reversal of $X$.
\end{lemma}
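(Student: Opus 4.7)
My plan is to expand both sides explicitly by applying the It\^{o} formula (\cref{lemma:ito}) and the backward-generator formula (\cref{lemma:reverse_generator}) twice. Setting $A := \overset{\rightarrow}{\mathcal{L}^{X}} x = \widetilde{v}+\widetilde{u}$ and $B := \overset{\leftarrow}{\mathcal{L}^{X'}} x = \widetilde{v}-\widetilde{u}$ (vector-valued), the second application of the generators to $A$ and $B$ is a componentwise scalar computation: the forward generator acts as $\partial_t + \langle A,\nabla\rangle + \tfrac{\hbar}{2m}\Delta$ and the backward generator as $\partial_t + \langle B,\nabla\rangle - \tfrac{\hbar}{2m}\Delta$, with opposite signs on the Laplacian. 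Substituting into the four compositions and taking $\mathrm{LHS}-\mathrm{RHS}$ causes the $\partial_t$ and the self-convective pieces to telescope, leaving $(\overset{\rightarrow}{\mathcal{L}^{X}}-\overset{\leftarrow}{\mathcal{L}^{X'}})(A+B)$. Using $A+B=2\widetilde{v}$ and $A-B=2\widetilde{u}$, this collapses to a pointwise relation between $\langle\widetilde{u},\nabla\rangle\widetilde{v}$ and $\Delta\widetilde{v}$ that I would close using the osmotic interpretation $\widetilde{u}=\tfrac{\hbar}{2m}\nabla\log p$ for the density $p$ shared by $X$ and $X'$ under the compatibility of their drifts — this is essentially a rewriting of the continuity equation derived in \cref{sec:stochastic_processes}.

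\paragraph{Vanishing characterizes time reversal} For the second claim I would invoke \cref{lemma:commutative}: for any single process $Z$, $\overset{\rightarrow}{\mathcal{L}^{Z}}\overset{\leftarrow}{\mathcal{L}^{Z}}x = \overset{\leftarrow}{\mathcal{L}^{Z}}\overset{\rightarrow}{\mathcal{L}^{Z}}x$. If $X'$ is the time reversal of $X$, then $X$ and $X'$ share their one-time marginals, so $\overset{\leftarrow}{\mathcal{L}^{X'}}$ coincides with $\overset{\leftarrow}{\mathcal{L}^{X}}$ up to the time identification, and both sides of \cref{lemma:magic} collapse to the Nelson commutator and hence vanish. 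Conversely, if both sides are zero, the vanishing of $\overset{\rightarrow}{\mathcal{L}^{X}}B-\overset{\leftarrow}{\mathcal{L}^{X'}}A$ (and of its companion) forces the matching of osmotic velocities $\tfrac{\hbar}{2m}\nabla\log p_X=\widetilde{u}=\tfrac{\hbar}{2m}\nabla\log p_{X'}$, which together with the common current velocity $\widetilde{v}$ and uniqueness of solutions to the Fokker--Planck equation yields $p_X=p_{X'}$ and identifies $X'$ as the time reversal of $X$.

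\paragraph{Main obstacle} The hardest step I anticipate is the sign bookkeeping when combining the four second-order contributions: forward and backward generators differ in the sign of the Laplacian piece, so the closing step is not formal and relies on the specific structure of $\widetilde{u}$ as a gradient of a log-density (rather than an arbitrary vector field) and of $\widetilde{v}$ as a current velocity. The converse direction of the time-reversal characterization is similarly delicate, since one must upgrade a pointwise second-order identity on the identity function $x$ to a full process-level identification; I would expect to use uniqueness of the forward Kolmogorov flow together with the matched initial data to complete that step.
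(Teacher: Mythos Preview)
Your overall strategy (direct substitution of the explicit generator formulas from \cref{lemma:ito} and \cref{lemma:reverse_generator}) matches the paper's one-line proof, and your telescoping observation
\[
\mathrm{LHS}-\mathrm{RHS}=\big(\overset{\rightarrow}{\mathcal{L}^{X}}-\overset{\leftarrow}{\mathcal{L}^{X'}}\big)(A+B)
= 2\langle A-B,\nabla\rangle\widetilde{v}+\tfrac{2\hbar}{m}\Delta\widetilde{v}
= 4\langle\widetilde{u},\nabla\rangle\widetilde{v}+\tfrac{2\hbar}{m}\Delta\widetilde{v}
\]
is correct algebra. The genuine gap is in how you propose to ``close'' this: you want to invoke $\widetilde{u}=\tfrac{\hbar}{2m}\nabla\log p$ for a density $p$ ``shared by $X$ and $X'$ under the compatibility of their drifts''. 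But the hypotheses of the lemma only prescribe the \emph{forward} drift of $X$ and the \emph{backward} drift of $X'$; nothing forces the two processes to share one-time marginals, and the identification $\widetilde{u}=\tfrac{\hbar}{2m}\nabla\log p_{X}$ is precisely the content of the ``if and only if $X'$ is the time reversal of $X$'' clause, not a standing assumption. Using it to establish the identity is circular. (Note also that even if you grant it, the residual becomes componentwise $\tfrac{2\hbar}{m}\,p^{-1}\langle\nabla,\,p\nabla\widetilde{v}_i\rangle$, a weighted Laplacian that does not vanish, so the plan would not close anyway.)

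The paper's route is different in emphasis: rather than forming $\mathrm{LHS}-\mathrm{RHS}$ and trying to kill it, it substitutes the explicit drifts into each side \emph{separately} and records that both reduce to (a multiple of) the residual of \cref{eq:nav-s2} in the pair $(\widetilde{u},\widetilde{v})$. Equality of the two sides is then a purely symbolic identity in $\widetilde{u},\widetilde{v}$, with no appeal to any score relation; the vanishing criterion $\widetilde{u}=\tfrac{\hbar}{2m}\nabla\log p_{X}$ comes only afterwards, from recognizing when that common residual is zero. So the fix is not to look for extra structure on $\widetilde{u}$, but to compute $\overset{\rightarrow}{\mathcal{L}^{X}}(\widetilde{v}-\widetilde{u})-\overset{\leftarrow}{\mathcal{L}^{X'}}(\widetilde{v}+\widetilde{u})$ and $\overset{\leftarrow}{\mathcal{L}^{X'}}(\widetilde{v}-\widetilde{u})-\overset{\rightarrow}{\mathcal{L}^{X}}(\widetilde{v}+\widetilde{u})$ directly and match them term-by-term to the $u$-equation.
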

\begin{proof}
Manual substitution of explicit forms of generators and drifts yields \cref{eq:nav-s2} for both cases. This equation is zero only if $\widetilde{u} = \frac{\hbar}{2m}\nabla \log p_{X}$
\end{proof}
\begin{lemma}\label{lemma:bound-density-diff}
The following bound holds:
\begin{align}
\Big\|\big(\overset{\rightarrow}{\mathcal{L}^{X}}+\overset{\leftarrow}{\mathcal{L}^{X}}\big) (\widetilde{u} - \frac{\hbar}{2m}\nabla \log p_{X})\| \le \Big\|\overset{\rightarrow}{\mathcal{L}^{X}} \overset{\leftarrow}{\mathcal{L}^{X'}} x - \overset{\leftarrow}{\mathcal{L}^{X'}} \overset{\rightarrow}{\mathcal{L}^{X}} x\Big\| + 2 \|\nabla \widetilde{v}\|_{\infty}\big\|\widetilde{u} - \frac{\hbar}{2m} \nabla \log p_{X}\big\|.
\end{align}
\end{lemma}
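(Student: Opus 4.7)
The plan is to reveal how the operator $\overset{\rightarrow}{\mathcal{L}^X}+\overset{\leftarrow}{\mathcal{L}^X}$ applied to $w := \widetilde{u} - \frac{\hbar}{2m}\nabla\log p_X$ decomposes as the Lemma~\ref{lemma:magic} commutator plus a clean first-order correction $2(w\cdot\nabla)\widetilde{v}$, whereupon the stated inequality follows by the triangle inequality and the pointwise bound $\|(w\cdot\nabla)\widetilde{v}\|\le \|\nabla\widetilde{v}\|_{\infty}\|w\|$. The starting identity is elementary: reading off the backward drifts $\widetilde{v}+\widetilde{u}-\frac{\hbar}{m}\nabla\log p_X$ and $\widetilde{v}-\widetilde{u}$ of $X$ and $X'$ respectively gives $\overset{\leftarrow}{\mathcal{L}^X}x - \overset{\leftarrow}{\mathcal{L}^{X'}}x = 2w$, and since the second-order parts of the two backward generators coincide, one in fact has the operator-level refinement $\overset{\leftarrow}{\mathcal{L}^X} - \overset{\leftarrow}{\mathcal{L}^{X'}} = 2(w\cdot\nabla)$ as first-order differential operators.

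Applying $\overset{\rightarrow}{\mathcal{L}^X}$ to both sides of $\overset{\leftarrow}{\mathcal{L}^X}x - \overset{\leftarrow}{\mathcal{L}^{X'}}x = 2w$ and using Lemma~\ref{lemma:commutative} to swap $\overset{\rightarrow}{\mathcal{L}^X}\overset{\leftarrow}{\mathcal{L}^X}x$ with $\overset{\leftarrow}{\mathcal{L}^X}\overset{\rightarrow}{\mathcal{L}^X}x$, then inserting the first-order identity above, yields
\begin{align*}
2\overset{\rightarrow}{\mathcal{L}^X}w = -\bigl(\overset{\rightarrow}{\mathcal{L}^X}\overset{\leftarrow}{\mathcal{L}^{X'}}x - \overset{\leftarrow}{\mathcal{L}^{X'}}\overset{\rightarrow}{\mathcal{L}^X}x\bigr) + 2(w\cdot\nabla)(\widetilde{v}+\widetilde{u}).
\end{align*}
Independently, a direct expansion of the forward and backward generators (Lemmas~\ref{lemma:ito} and~\ref{lemma:reverse_generator}) shows that the Laplacian terms cancel in the sum, yielding $(\overset{\rightarrow}{\mathcal{L}^X}+\overset{\leftarrow}{\mathcal{L}^X})w = 2\partial_t w + 2(\widetilde{v}\cdot\nabla)w + 2(w\cdot\nabla)w$ after using $2\widetilde{u}-\frac{\hbar}{m}\nabla\log p_X = 2w$. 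Writing the combined generator as $2\overset{\rightarrow}{\mathcal{L}^X}w + (\overset{\leftarrow}{\mathcal{L}^X}-\overset{\rightarrow}{\mathcal{L}^X})w$, substituting the one-sided identity, and handling the $\partial_t\nabla\log p_X$ contribution via the Fokker–Planck equation $\partial_t p_X = -\nabla\cdot((\widetilde{v}+\widetilde{u})p_X) + \frac{\hbar}{2m}\Delta p_X$ differentiated in $x$, the pieces proportional to $\widetilde{u}$ and to $\nabla\log p_X$ telescope, leaving
\begin{align*}
(\overset{\rightarrow}{\mathcal{L}^X}+\overset{\leftarrow}{\mathcal{L}^X})w = -\bigl(\overset{\rightarrow}{\mathcal{L}^X}\overset{\leftarrow}{\mathcal{L}^{X'}}x - \overset{\leftarrow}{\mathcal{L}^{X'}}\overset{\rightarrow}{\mathcal{L}^X}x\bigr) + 2(w\cdot\nabla)\widetilde{v}.
\end{align*}
Taking pointwise Euclidean norms and applying the triangle inequality closes the argument.

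The delicate step is the final cancellation that isolates the clean correction $2(w\cdot\nabla)\widetilde{v}$. After invoking Fokker–Planck to express $\partial_t\nabla\log p_X$, one must check that the residual Lie-bracket-type expression $(w\cdot\nabla)\nabla\log p_X - (\nabla\log p_X\cdot\nabla)w$ vanishes using the symmetry of $\nabla^2\log p_X$, and that the analogous mixed $\widetilde{u}$-pieces collapse similarly. If $\widetilde{u},\widetilde{v}$ are not exact gradient fields, any curl-type residual would instead have to be absorbed into the $\|\nabla\widetilde{v}\|_{\infty}\|w\|$ bound, possibly at the cost of a mildly larger constant on the right-hand side.
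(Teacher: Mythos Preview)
Your first displayed identity
\[
2\overset{\rightarrow}{\mathcal{L}^X}w = -\bigl(\overset{\rightarrow}{\mathcal{L}^{X}}\overset{\leftarrow}{\mathcal{L}^{X'}}x - \overset{\leftarrow}{\mathcal{L}^{X'}}\overset{\rightarrow}{\mathcal{L}^{X}}x\bigr) + 2(w\cdot\nabla)(\widetilde{v}+\widetilde{u})
\]
is correct and is exactly what the paper obtains in its first chain of equivalences. The divergence is in how you handle the backward half. You propose to compute $(\overset{\leftarrow}{\mathcal{L}^{X}}-\overset{\rightarrow}{\mathcal{L}^{X}})w$ directly, feeding in Fokker--Planck for $\partial_t\nabla\log p_X$ and hoping the Lie-bracket terms telescope. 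Your own closing paragraph already flags the problem: the cancellation $(w\cdot\nabla)\nabla\log p_X - (\nabla\log p_X\cdot\nabla)w = 0$ does \emph{not} follow from symmetry of the Hessian of $\log p_X$ alone (in one dimension this is $wq'-qw'$, which is nonzero generically), and the analogous $\widetilde{u}$-pieces do not collapse without assuming $\widetilde{u},\widetilde{v}$ are gradient fields. The paper makes no such assumption, so this route has a genuine gap.

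The paper avoids Fokker--Planck entirely. Instead of computing $(\overset{\leftarrow}{\mathcal{L}^{X}}-\overset{\rightarrow}{\mathcal{L}^{X}})w$, it derives a \emph{second} one-sided identity for $2\overset{\leftarrow}{\mathcal{L}^{X}}w$, by starting from the companion commutation relation $\overset{\leftarrow}{\mathcal{L}^{X}}\overset{\leftarrow}{\mathcal{L}^{X}}x = \overset{\rightarrow}{\mathcal{L}^{X}}\overset{\rightarrow}{\mathcal{L}^{X}}x$ and running the same substitutions (inserting $\overset{\leftarrow}{\mathcal{L}^{X'}}$, using $(\overset{\leftarrow}{\mathcal{L}^{X}}-\overset{\leftarrow}{\mathcal{L}^{X'}})=2(w\cdot\nabla)$, and invoking Lemma~\ref{lemma:magic} to convert $\overset{\leftarrow}{\mathcal{L}^{X'}}\overset{\leftarrow}{\mathcal{L}^{X'}}x - \overset{\rightarrow}{\mathcal{L}^{X}}\overset{\rightarrow}{\mathcal{L}^{X}}x$ back into the commutator). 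This yields
\[
2\overset{\leftarrow}{\mathcal{L}^{X}}w + \bigl(\overset{\rightarrow}{\mathcal{L}^{X}}\overset{\leftarrow}{\mathcal{L}^{X'}}x - \overset{\leftarrow}{\mathcal{L}^{X'}}\overset{\rightarrow}{\mathcal{L}^{X}}x\bigr) + 2(w\cdot\nabla)(\widetilde{v}-\widetilde{u}) = 0.
\]
Summing this with your forward identity, the $(w\cdot\nabla)\widetilde{u}$ contributions cancel \emph{algebraically} and only $(w\cdot\nabla)\widetilde{v}$ survives, with no need to touch $\partial_t\nabla\log p_X$ or assume anything about curl. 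You should replace your Fokker--Planck paragraph with this symmetric derivation; it is both shorter and assumption-free.
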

\begin{proof}
First, using Lemma \ref{lemma:magic} we obtain:
\begin{align}
\overset{\rightarrow}{\mathcal{L}^{X}}& \overset{\leftarrow}{\mathcal{L}^{X}} x - \overset{\leftarrow}{\mathcal{L}^{X}}\overset{\rightarrow}{\mathcal{L}^{X}} x = 0\\
\iff \overset{\rightarrow}{\mathcal{L}^{X}}& \big(\widetilde{v}+\widetilde{u} - \frac{\hbar}{m}\nabla \log p_{X}\big) - \overset{\leftarrow}{\mathcal{L}^{X}}\big(\widetilde{v} + \widetilde{u}\big) = 0\\
\iff \overset{\rightarrow}{\mathcal{L}^{X}}& \big((\widetilde{v}-\widetilde{u}) + (2\widetilde{u} - \frac{\hbar}{m}\nabla \log p_{X})\big) - \overset{\leftarrow}{\mathcal{L}^{X}}\big(\widetilde{v} + \widetilde{u}\big) = 0\\
\iff \overset{\rightarrow}{\mathcal{L}^{X}}& \big((\widetilde{v}-\widetilde{u}) + (2\widetilde{u} - \frac{\hbar}{m}\nabla \log p_{X})\big) - \overset{\leftarrow}{\mathcal{L}^{X'}}\big(\widetilde{v} + \widetilde{u}\big) + \Big(\overset{\leftarrow}{\mathcal{L}^{X'}}\big(\widetilde{v} + \widetilde{u}\big) - \overset{\leftarrow}{\mathcal{L}^{X}}\big(\widetilde{v} + \widetilde{u}\big)\Big) = 0\\
\iff \overset{\rightarrow}{\mathcal{L}^{X}}&\big(2\widetilde{u} - \frac{\hbar}{m}\nabla \log p_{X}\big)+\overset{\rightarrow}{\mathcal{L}^{X}}(\widetilde{v}-\widetilde{u}) - \overset{\leftarrow}{\mathcal{L}^{X'}}\big(\widetilde{v} + \widetilde{u}\big) + \Big(\overset{\leftarrow}{\mathcal{L}^{X'}}\big(\widetilde{v} + \widetilde{u}\big) - \overset{\leftarrow}{\mathcal{L}^{X}}\big(\widetilde{v} + \widetilde{u}\big)\Big) = 0.
\end{align}
Then, we note that:
\begin{align}\label{eq74}
\overset{\leftarrow}{\mathcal{L}^{X'}}\big(\widetilde{v} + \widetilde{u}\big) - \overset{\leftarrow}{\mathcal{L}^{X}}\big(\widetilde{v} + \widetilde{u}\big) = \langle \frac{\hbar}{m}\nabla \log p_{X} - 2 \widetilde{u}, \nabla(\widetilde{v} + \widetilde{u})\rangle.
\end{align}
This leads us to the following identity:
\begin{align*}
\overset{\rightarrow}{\mathcal{L}^{X}}&\big(2\widetilde{u} - \frac{\hbar}{m}\nabla \log p_{X}\big)+\overset{\rightarrow}{\mathcal{L}^{X}}(\widetilde{v}-\widetilde{u}) - \overset{\leftarrow}{\mathcal{L}^{X'}}\big(\widetilde{v} + \widetilde{u}\big)  + \langle \frac{\hbar}{m}\nabla \log p_{X} - 2 \widetilde{u}, \nabla(\widetilde{v} + \widetilde{u})\rangle  = 0\\
\iff \overset{\rightarrow}{\mathcal{L}^{X}}&\big(2\widetilde{u} - \frac{\hbar}{m}\nabla \log p_{X}\big)+\overset{\rightarrow}{\mathcal{L}^{X}}\overset{\leftarrow}{\mathcal{L}^{X'}}x - \overset{\leftarrow}{\mathcal{L}^{X'}}\overset{\rightarrow}{\mathcal{L}^{X}}x  + \langle \frac{\hbar}{m}\nabla \log p_{X} - 2 \widetilde{u}, \nabla(\widetilde{v} + \widetilde{u})\rangle  = 0.
\end{align*}    
Again by using Lemma \ref{lemma:magic} to time-reversal $X'$ we obtain:
\begin{align}
\overset{\leftarrow}{\mathcal{L}^{X}}& \overset{\leftarrow}{\mathcal{L}^{X}} x - \overset{\rightarrow}{\mathcal{L}^{X}}\overset{\rightarrow}{\mathcal{L}^{X}} x = 0\\
\iff \overset{\leftarrow}{\mathcal{L}^{X}}& \big(\widetilde{v}+\widetilde{u} - \frac{\hbar}{m}\nabla \log p_{X}\big) - \overset{\rightarrow}{\mathcal{L}^{X}}\big(\widetilde{v} + \widetilde{u}\big) = 0\\
\iff \overset{\leftarrow}{\mathcal{L}^{X}}& \big((\widetilde{v}-\widetilde{u}) + (2\widetilde{u} - \frac{\hbar}{m}\nabla \log p_{X})\big) - \overset{\rightarrow}{\mathcal{L}^{X}}\big(\widetilde{v} + \widetilde{u}\big) = 0\\
\iff \overset{\leftarrow}{\mathcal{L}^{X'}}&\big(\widetilde{v} - \widetilde{u}\big) + \overset{\leftarrow}{\mathcal{L}^{X}} \big(2\widetilde{u} - \frac{\hbar}{m}\nabla \log p_{X}\big) - \overset{\rightarrow}{\mathcal{L}^{X}}\big(\widetilde{v} + \widetilde{u}\big) + \Big(\overset{\leftarrow}{\mathcal{L}^{X}}\big(\widetilde{v} - \widetilde{u}\big) - \overset{\leftarrow}{\mathcal{L}^{X'}}\big(\widetilde{v} - \widetilde{u}\big)\Big) = 0\\
\iff \overset{\leftarrow}{\mathcal{L}^{X}}& \big(2\widetilde{u} - \frac{\hbar}{m}\nabla \log p_{X}\big) + \overset{\leftarrow}{\mathcal{L}^{X'}}\big(\widetilde{v} - \widetilde{u}\big) - \overset{\rightarrow}{\mathcal{L}^{X}}\big(\widetilde{v} + \widetilde{u}\big) - \langle \frac{\hbar}{m}\nabla \log p_{X} - 2 \widetilde{u}, \nabla(\widetilde{v} - \widetilde{u})\rangle = 0\\
\iff \overset{\leftarrow}{\mathcal{L}^{X}}& \big(2\widetilde{u} - \frac{\hbar}{m}\nabla \log p_{X}\big) + \overset{\leftarrow}{\mathcal{L}^{X'}}\overset{\leftarrow}{\mathcal{L}^{X'}}x - \overset{\rightarrow}{\mathcal{L}^{X}}\overset{\rightarrow}{\mathcal{L}^{X}} x - \langle \frac{\hbar}{m}\nabla \log p_{X} - 2 \widetilde{u}, \nabla(\widetilde{v} - \widetilde{u})\rangle = 0.
\end{align}
By using Lemma \ref{lemma:magic} we thus derive:
\begin{align}
\overset{\leftarrow}{\mathcal{L}^{X}}& \big(2\widetilde{u} - \frac{\hbar}{m}\nabla \log p_{X}\big) + \overset{\rightarrow}{\mathcal{L}^{X}}\overset{\leftarrow}{\mathcal{L}^{X'}}x - \overset{\leftarrow}{\mathcal{L}^{X'}}\overset{\rightarrow}{\mathcal{L}^{X}}x  - \langle \frac{\hbar}{m}\nabla \log p_{X} - 2 \widetilde{u}, \nabla(\widetilde{v} - \widetilde{u})\rangle = 0.
\end{align}
Summing up both identities, therefore, yields:    
\begin{align}
\Big(\overset{\leftarrow}{\mathcal{L}^{X}}+\overset{\rightarrow}{\mathcal{L}^{X}}\Big)& \big(\widetilde{u} - \frac{\hbar}{2m}\nabla \log p_{X}\big) + \overset{\rightarrow}{\mathcal{L}^{X}}\overset{\leftarrow}{\mathcal{L}^{X'}}x - \overset{\leftarrow}{\mathcal{L}^{X'}}\overset{\rightarrow}{\mathcal{L}^{X}}x  + 2\langle \widetilde{u} - \frac{\hbar}{2m}\nabla \log p_{X} , \nabla \widetilde{v}\rangle = 0.
\end{align}
\end{proof}
\begin{theorem}\label{thmG8}
The following bound holds:
\begin{align} \sup_{0\le t\le T}\mathbb{E}^{X}\big\|\widetilde{u}(X(t), t) - \frac{\hbar}{2m}\nabla \log p_{X}(X(t), t)\big\|^2 \le e^{\big(\frac{1}{2}+4\|\nabla \widetilde{v}\|_{\infty}\big)T}\big(L_{3}(\widetilde{v}, \widetilde{u}) + L_{2}(\widetilde{v}, \widetilde{u})\big).
\end{align}
\end{theorem}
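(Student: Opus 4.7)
Let me set $\phi(x,t) := \widetilde{u}(x,t) - \frac{\hbar}{2m}\nabla\log p_{X}(x,t)$, so the goal is to bound $\sup_{t\le T}\mathbb{E}^X\|\phi(X(t),t)\|^2$. The natural starting point is to view $\|\phi\|^2$ as a mixed forward/backward functional of the process $X$ and differentiate it in time. Applying Nelson's lemma (Lemma \ref{lemma:nelson}) componentwise with $f=g=\phi_i$ and summing over $i$ produces the clean identity
\begin{align*}
\frac{d}{dt}\mathbb{E}\|\phi(X(t),t)\|^2 = \mathbb{E}\,\bigl\langle(\overset{\rightarrow}{\mathcal{L}^X}+\overset{\leftarrow}{\mathcal{L}^X})\phi(X(t),t),\,\phi(X(t),t)\bigr\rangle.
\end{align*}
The proof of Lemma \ref{lemma:bound-density-diff} in fact produces not only an inequality but an \emph{equality}:
$(\overset{\rightarrow}{\mathcal{L}^X}+\overset{\leftarrow}{\mathcal{L}^X})\phi = -R - 2\langle\phi,\nabla\widetilde{v}\rangle$, where $R := \overset{\rightarrow}{\mathcal{L}^X}\overset{\leftarrow}{\mathcal{L}^{X'}}x - \overset{\leftarrow}{\mathcal{L}^{X'}}\overset{\rightarrow}{\mathcal{L}^X}x$ is the commutator on coordinate functions. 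Substituting this into the Nelson-lemma identity converts the problem into controlling $\mathbb{E}\langle R,\phi\rangle$ and a zeroth-order term involving $\nabla\widetilde{v}$.

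The key structural step is to relate $R$ to one of the training losses. The plan is to expand $R$ by substituting the explicit Itô formula (Lemma \ref{lemma:ito}) for $\overset{\rightarrow}{\mathcal{L}^X}$ acting on $\overset{\leftarrow}{\mathcal{L}^{X'}}x = \widetilde{v}-\widetilde{u}$ and the backward-generator formula (Lemma \ref{lemma:reverse_generator}) for $\overset{\leftarrow}{\mathcal{L}^{X'}}$ acting on $\overset{\rightarrow}{\mathcal{L}^X}x = \widetilde{v}+\widetilde{u}$. The Laplacian contributions combine constructively into $\frac{\hbar}{m}\Delta\widetilde{v}$, and the drift contributions collapse to $-2\langle\widetilde{v},\nabla\rangle\widetilde{u}+2\langle\widetilde{u},\nabla\rangle\widetilde{v}$, leaving $R$ as an explicit first-order differential expression in $\widetilde{u},\widetilde{v}$. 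Using the curl-free structure of the velocities ($\widetilde{u}$, $\widetilde{v}$ are gradients, so $\Delta\widetilde{v}=\nabla\langle\nabla,\widetilde{v}\rangle$ and $\nabla\langle\widetilde{u},\widetilde{v}\rangle=\langle\widetilde{u},\nabla\rangle\widetilde{v}+\langle\widetilde{v},\nabla\rangle\widetilde{u}$), I would rearrange $R$ into a constant multiple of the pointwise $v$-equation residual $\partial_{t}\widetilde{v}-\mathcal{D}_{v}[\widetilde{v},\widetilde{u}]$. This is the identification that brings $L_{2}$ into the bound: integrating in time yields $\int_{0}^{T}\mathbb{E}\|R(X(s),s)\|^2\,ds \lesssim L_{2}(\widetilde{v},\widetilde{u})$.

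Armed with this identification, Cauchy–Schwarz combined with Young's inequality $|\langle R,\phi\rangle|\le \tfrac{1}{2}\|R\|^2+\tfrac{1}{2}\|\phi\|^2$ together with the crude bound $|\langle(\nabla\widetilde{v})^{T}\phi,\phi\rangle|\le\|\nabla\widetilde{v}\|_{\infty}\|\phi\|^2$ gives the differential inequality
\begin{align*}
\frac{d}{dt}\mathbb{E}\|\phi(X(t),t)\|^2 \le \tfrac{1}{2}\mathbb{E}\|R(X(t),t)\|^2 + \bigl(\tfrac{1}{2}+4\|\nabla\widetilde{v}\|_{\infty}\bigr)\mathbb{E}\|\phi(X(t),t)\|^2,
\end{align*}
once the Young constants are chosen to collect the Jacobian contribution into a single coefficient in front of $\|\phi\|^2$. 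For the initial value we observe that $p_{X}(\cdot,0)=|\psi_{0}|^{2}$ by construction, so $\tfrac{\hbar}{2m}\nabla\log p_{X}(X(0),0)=u(X(0),0)$, which makes $\mathbb{E}\|\phi(X(0),0)\|^{2}=L_{3}(\widetilde{u},\widetilde{v})$. Grönwall's lemma applied to the integrated inequality then yields the stated bound $\sup_{t\le T}\mathbb{E}\|\phi(X(t),t)\|^{2}\le e^{(1/2+4\|\nabla\widetilde{v}\|_{\infty})T}\bigl(L_{3}+L_{2}\bigr)$.

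The main obstacle I expect is Step~3: the symbolic identification of $R$ with the $v$-equation residual. The two generators differ not only in the sign of their second-order terms but also, once the backward drift of $X'$ is written out, carry contributions that only cancel after repeated use of the curl-free product rule. It is conceptually satisfying — and perhaps initially surprising — that the commutator probes the dynamical $v$-equation rather than the continuity-type $u$-equation, and this is exactly what produces $L_{2}$ (not $L_{1}$) as the relevant loss on the right-hand side; the other losses $L_{1}, L_{4}$ are controlled separately elsewhere in the strong-convergence argument.
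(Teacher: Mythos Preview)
Your route is the paper's route: Nelson's lemma for $\|\phi\|^2$, the identity from the proof of Lemma~\ref{lemma:bound-density-diff} expressing $(\overset{\rightarrow}{\mathcal{L}^X}+\overset{\leftarrow}{\mathcal{L}^X})\phi$ in terms of the commutator $R$ and a $\nabla\widetilde{v}$ correction, then Young's inequality and Gr\"onwall. The only cosmetic difference is that the paper first applies $\langle a,b\rangle\le\tfrac12(\|a\|^2+\|b\|^2)$ to $\langle\phi,(\overset{\rightarrow}{\mathcal{L}^X}+\overset{\leftarrow}{\mathcal{L}^X})\phi\rangle$ and afterwards bounds the second factor via the triangle-inequality form of Lemma~\ref{lemma:bound-density-diff}, whereas you substitute the equality form directly into the inner product.

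One step will not close as you describe it. The $\partial_t$-contribution to $R=\overset{\rightarrow}{\mathcal{L}^X}(\widetilde{v}-\widetilde{u})-\overset{\leftarrow}{\mathcal{L}^{X'}}(\widetilde{v}+\widetilde{u})$ is $\partial_t(\widetilde{v}-\widetilde{u})-\partial_t(\widetilde{v}+\widetilde{u})=-2\,\partial_t\widetilde{u}$, so $R$ is tied to the \emph{$u$-equation} residual, not the $v$-equation residual; this is exactly what the paper's Lemma~\ref{lemma:magic} records (``yields \cref{eq:nav-s2}''). Your drift and Laplacian computations are fine, but the promised rearrangement into $\partial_t\widetilde{v}-\mathcal{D}_v$ cannot succeed. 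That the theorem is nonetheless stated with $L_2$ reflects a labeling inconsistency in the paper itself (compare the rewriting of $L_1,L_2$ in the unnamed lemma preceding Lemma~\ref{lemma:G10} with the definitions at the start of Appendix~\ref{app:theory}). The fix is only a label swap, so your proof structure survives. A secondary point: the curl-free identities you invoke for $\widetilde{u},\widetilde{v}$ are unwarranted, since these are generic network outputs rather than gradients; the paper does not use them here and simply asserts the identification of $\int\|R\|^2$ with the relevant loss.
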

\begin{proof}
We consider process $Z(t) = \widetilde{u}u(X(t), t) - \frac{\hbar}{2m}\nabla \log p_{X}(X(t), t)$. From Nelson's lemma \ref{lemma:nelson}, we have the following identity:
\begin{align}
\mathbb{E}^{X}&\|\widetilde{u}(X(t), t) - \frac{\hbar}{2m}\nabla \log p_{X}(X(t), t)\|^2 - \mathbb{E}^{X}\|\widetilde{u}(X(0), 0) - \frac{\hbar}{2m}\nabla \log p_{X}(X(0), 0)\|^2 \\
=& \mathbb{E}^{X} \int_{0}^{t}\langle u(X(s), s) - \frac{\hbar}{2m}\nabla \log p_{X}(X(s), s), \\
&\qquad \big(\overset{\rightarrow}{\mathcal{L}^{X}}+\overset{\leftarrow}{\mathcal{L}^{X}}\big)\big(u(X(s), s) - \frac{\hbar}{2m}\nabla \log p_{X}(X(s), s)\big) \rangle \mathrm{d}s.
\end{align}    
Note that $u\equiv \frac{\hbar}{2m}\nabla \log p_{X}(X(t), t)$. Thus, $\mathbb{E}^{X}\|\widetilde{u}(X(0), 0) - \frac{\hbar}{2m}\nabla \log p_{X}(X(0), 0)\|^2 = L_{3}(\widetilde{v}, \widetilde{u})$. Using inequality $\langle a, b\rangle \le \frac{1}{2} \big(\|a\|^2 + \|b\|^2\big)$ we obtain:
\begin{align}
\mathbb{E}^{X}&\|u(X(t), t) - \frac{\hbar}{2m}\nabla \log p_{X}(X(t), t)\|^2 -L_{3}(\widetilde{v}, \widetilde{u})\\
\le& \int_{0}^{t}\Big(\frac{1}{2}\mathbb{E}^{X} \|u(X(s), s) - \frac{\hbar}{2m}\nabla \log p_{X}(X(s), s)\|^2 \\
&+
\frac{1}{2}\mathbb{E}^{X}\Big\| \big(\overset{\rightarrow}{\mathcal{L}^{X}}+\overset{\leftarrow}{\mathcal{L}^{X}}\big)\big(u(X(s), s) - \frac{\hbar}{2m}\nabla \log p_{X}(X(s), s)\big)\Big\|^2\Big)\mathrm{d}s
\end{align}
Using Lemma \ref{lemma:bound-density-diff}, we obtain:
\begin{align}
\mathbb{E}^{X}&\|u(X(t), t) - \frac{\hbar}{2m}\nabla \log p_{X}(X(t), t)\|^2 - L_{3}(\widetilde{v},\widetilde{u})\\
\le& \int_{0}^{t}\Big(\frac{1}{2}\mathbb{E}^{X} \|u(X(s), s) - \frac{\hbar}{2m}\nabla \log p_{X}(X(s), s)\|^2 \\
&+ \Big\|\overset{\rightarrow}{\mathcal{L}^{X}} \overset{\leftarrow}{\mathcal{L}^{X'}} x - \overset{\leftarrow}{\mathcal{L}^{X'}} \overset{\rightarrow}{\mathcal{L}^{X}} x\Big\|^2 + 4 \|\nabla \widetilde{v}\|_{\infty}^2\big\|\widetilde{u} - \frac{\hbar}{2m} \nabla \log p_{X}\big\|^2\Big)\mathrm{d}s
\end{align}
Observe that $\int_{0}^{t} \mathbb{E}^{X} \Big\|\overset{\rightarrow}{\mathcal{L}^{X}} \overset{\leftarrow}{\mathcal{L}^{X'}} x - \overset{\leftarrow}{\mathcal{L}^{X'}} \overset{\rightarrow}{\mathcal{L}^{X}} x\Big\|^2\mathrm{d}t \le L_{2}(\widetilde{v}, \widetilde{u})$, in fact, at $t=T$ it is equality as this is the definition of the loss $L_{2}$. Thus, we have:    
\begin{align}
\mathbb{E}^{X}&\|u(X(t), t) - \frac{\hbar}{2m}\nabla \log p_{X}(X(t), t)\|^2\\
&\le L_{3}(\widetilde{v}, \widetilde{u}) + L_{2}(\widetilde{v}, \widetilde{u}) + \int_{0}^{t}\big(\frac{1}{2}+4\|\nabla \widetilde{v}\|_{\infty}\big)\mathbb{E}^{X} \|u(X(s), s) - \frac{\hbar}{2m}\nabla \log p_{X}(X(s), s)\|^2\mathrm{d}s.
\end{align}
Using integral Gr\"onwall's inequality \citep{gronwall}  yields the bound: $\mathbb{E}^{X}\|u(X(t), t) - \frac{\hbar}{2m}\nabla \log p_{X}(X(t), t)\|^2 \le e^{\big(\frac{1}{2}+4\|\nabla \widetilde{v}\|_{\infty}\big)t}\big(L_{3}(\widetilde{v}, \widetilde{u}) + L_{2}(\widetilde{v}, \widetilde{u})\big).$
\end{proof}
\subsection{Nelsonian Processes}
Considering those two operators, we can rewrite the equations \labelcref{eq:nelson_eq} alternatively as:
\begin{align}\label{eq:OG}
\frac{1}{2}\Big(\overset{\rightarrow}{\mathcal{L}^{Y}} \overset{\leftarrow}{\mathcal{L}^{Y}} x + \overset{\leftarrow}{\mathcal{L}^{Y}} \overset{\rightarrow}{\mathcal{L}^{Y}} x\Big) &= -\frac{1}{m} \nabla V(x),\\
\frac{1}{2}\Big(\overset{\rightarrow}{\mathcal{L}^{Y}} \overset{\leftarrow}{\mathcal{L}^{Y}} x - \overset{\leftarrow}{\mathcal{L}^{Y}}\overset{\rightarrow}{\mathcal{L}^{Y}} x\Big) &= 0.
\end{align}
This leads us to the identity:
\begin{align}\label{eq:grad}
\overset{\rightarrow}{\mathcal{L}^{Y}} \overset{\leftarrow}{\mathcal{L}^{Y}} x &= -\frac{1}{m} \nabla V(x).
\end{align}
\begin{lemma}
We have the following bound:
\begin{align*}
\int_{0}^{t}\mathbb{E}^{X}\Big\|\overset{\rightarrow}{\mathcal{L}^{X'}} \overset{\leftarrow}{\mathcal{L}^{X}} X(t) + \frac{1}{m} \nabla V(X(t))\Big\|^2\mathrm{d}t \le 2 L_1(\widetilde{v}, \widetilde{u}) + 2 L_2 (\widetilde{v}, \widetilde{u}).
\end{align*}
\end{lemma}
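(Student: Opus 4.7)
This lemma says that the residual of Nelson's second-order law $\overset{\rightarrow}{\mathcal{L}^Y}\overset{\leftarrow}{\mathcal{L}^Y} x = -\frac{1}{m}\nabla V$ (cf. \cref{eq:grad}), evaluated on the approximate $(X, X')$ pair, is controlled by $L_1 + L_2$. Reading the statement with the operator ordering that matches Lemma \ref{lemma:bound-density-diff} (namely, $\overset{\rightarrow}{\mathcal{L}^X}\overset{\leftarrow}{\mathcal{L}^{X'}}$ applied to the identity), the plan is a direct computation followed by an elementary $L^2$ inequality.

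First, I would unfold the operator via Itô's formula (Lemma \ref{lemma:ito}). Since $\overset{\leftarrow}{\mathcal{L}^{X'}} x = \widetilde v - \widetilde u$ by construction of the adjoint process, applying the forward generator of $X$ component-wise yields
\begin{align*}
\overset{\rightarrow}{\mathcal{L}^X}\overset{\leftarrow}{\mathcal{L}^{X'}} x
= \partial_t(\widetilde v - \widetilde u) + \langle \widetilde v + \widetilde u, \nabla\rangle(\widetilde v - \widetilde u) + \frac{\hbar}{2m}\Delta(\widetilde v - \widetilde u).
\end{align*}

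Second, exploiting that $\widetilde u, \widetilde v$ are parameterized as gradient fields in the DSM formulation (see Appendix \ref{app:sm_differences}), apply the identities $\langle f, \nabla\rangle f = \tfrac12 \nabla \|f\|^2$, $\Delta f = \nabla\langle\nabla, f\rangle$, and $\nabla\langle \widetilde u, \widetilde v\rangle = \langle \widetilde u, \nabla\rangle \widetilde v + \langle \widetilde v, \nabla\rangle \widetilde u$ to reorganize the expression. Then split as
\begin{align*}
\overset{\rightarrow}{\mathcal{L}^X}\overset{\leftarrow}{\mathcal{L}^{X'}} x = \tfrac12\bigl(\overset{\rightarrow}{\mathcal{L}^X}\overset{\leftarrow}{\mathcal{L}^{X'}} + \overset{\leftarrow}{\mathcal{L}^{X'}}\overset{\rightarrow}{\mathcal{L}^X}\bigr) x + \tfrac12\bigl(\overset{\rightarrow}{\mathcal{L}^X}\overset{\leftarrow}{\mathcal{L}^{X'}} - \overset{\leftarrow}{\mathcal{L}^{X'}}\overset{\rightarrow}{\mathcal{L}^X}\bigr) x
\end{align*}
and identify each half with a PDE residual. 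A short algebraic check (in which the $\langle \widetilde v + \widetilde u, \nabla\rangle(\widetilde v - \widetilde u)$ and mirror terms collapse to $\langle \widetilde v, \nabla\rangle \widetilde v - \langle \widetilde u, \nabla\rangle \widetilde u$ plus a cross-term) matches the symmetric part to $R_v - \tfrac{1}{m}\nabla V$, where $R_v := \partial_t \widetilde v - \mathcal{D}_v[\widetilde v, \widetilde u]$ is the residual of \cref{eq:nav-s1}; and by Lemma \ref{lemma:magic} the antisymmetric part equals $-R_u$, where $R_u := \partial_t \widetilde u - \mathcal{D}_u[\widetilde v, \widetilde u]$ is the residual of \cref{eq:nav-s2}. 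Combining,
\begin{align*}
\overset{\rightarrow}{\mathcal{L}^X}\overset{\leftarrow}{\mathcal{L}^{X'}} X(s) + \tfrac{1}{m}\nabla V(X(s)) = R_v(X(s), s) - R_u(X(s), s).
\end{align*}

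Third, apply the elementary inequality $\|a - b\|^2 \le 2\|a\|^2 + 2\|b\|^2$ pointwise, take $\mathbb{E}^X$, and integrate over $s \in [0, t]$. Since by definition $L_2 = \int_0^T \mathbb{E}^X \|R_v\|^2\,\mathrm{d}s$ and $L_1 = \int_0^T \mathbb{E}^X \|R_u\|^2\,\mathrm{d}s$, the claimed bound $2 L_1 + 2 L_2$ follows.

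The main obstacle is the antisymmetric identification in the second step. The raw Itô expansion produces a Lie-bracket-type term $\langle \widetilde u, \nabla\rangle \widetilde v - \langle \widetilde v, \nabla\rangle \widetilde u$ together with $\frac{\hbar}{2m}\Delta \widetilde v$; neither piece is individually a gradient, and the gradient identity for $\nabla\langle \widetilde u, \widetilde v\rangle$ only gives the symmetric combination. Recovering $-R_u$ in clean form therefore requires using the gradient-field structure of $\widetilde v$ (to rewrite $\Delta \widetilde v$ as $\nabla\langle\nabla, \widetilde v\rangle$) together with the operator identity underlying Lemma \ref{lemma:magic}, which is precisely the bookkeeping that equates the commutator $\overset{\rightarrow}{\mathcal{L}^X}\overset{\leftarrow}{\mathcal{L}^{X'}} x - \overset{\leftarrow}{\mathcal{L}^{X'}}\overset{\rightarrow}{\mathcal{L}^X} x$ with $-2 R_u$. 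Once this identification is in place, the rest of the proof is routine.
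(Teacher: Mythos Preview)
Your proposal is correct and follows essentially the same route as the paper: decompose $\overset{\rightarrow}{\mathcal{L}^{X}}\overset{\leftarrow}{\mathcal{L}^{X'}} x + \tfrac{1}{m}\nabla V$ into its symmetric and antisymmetric halves, identify these with the PDE residuals $R_v$ and $R_u$ underlying $L_1$ and $L_2$, and finish with the elementary inequality $\|a+b\|^2 \le 2\|a\|^2 + 2\|b\|^2$. The paper's own proof simply asserts the rewriting of $L_1, L_2$ in operator form and invokes the triangle inequality, whereas you spell out the It\^o expansion and the gradient-field identities needed to justify those rewritings; this extra detail is useful and does not change the argument.
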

\begin{proof}
Consider rewriting losses as:
\begin{align}
L_{1}(\widetilde{v}, \widetilde{u}) &= \int_{0}^{t}\mathbb{E}_{t\sim U[0, T]}\mathbb{E}^{X}\Big\|\frac{1}{2}\big(\overset{\rightarrow}{\mathcal{L}^{X}} \overset{\leftarrow}{\mathcal{L}^{X'}} X(t) + \overset{\rightarrow}{\mathcal{L}^{X}} \overset{\leftarrow}{\mathcal{L}^{X'}} X(t)\big) + \frac{1}{m} \nabla V(X(t))\Big\|^2\mathrm{d}t,\\
L_{2}(\widetilde{v}, \widetilde{u}) &=  \frac{1}{4}\int_{0}^{t}\mathbb{E}_{t\sim U[0, T]}\mathbb{E}^{X}\Big\|\overset{\rightarrow}{\mathcal{L}^{X}} \overset{\leftarrow}{\mathcal{L}^{X'}} X(t) - \overset{\rightarrow}{\mathcal{L}^{X'}} \overset{\leftarrow}{\mathcal{L}^{X}} X(t)\Big\|^2\mathrm{d}t.
\end{align}
Using the triangle inequality yields the statement.
\end{proof}
\begin{lemma}\label{lemma:G10}
We have the following bound: 
\begin{align*}
\int_{0}^{t}&\mathbb{E}^{X}\Big\|\overset{\leftarrow}{\mathcal{L}^{X}} \overset{\rightarrow}{\mathcal{L}^{X}} X(t) + \frac{1}{m} \nabla V(X(t))\Big\|^2\mathrm{d}t\\
&\le 2 T\big(\|\nabla \widetilde{u}\|_{\infty}+\|\nabla\widetilde{v}\|_{\infty}\big)^2 e^{\big(\frac{1}{2}+4\|\nabla \widetilde{v}\|_{\infty}\big)T}\big(L_{3}(\widetilde{v}, \widetilde{u}) + L_{2}(\widetilde{v}, \widetilde{u})\big) +  4 L_1(\widetilde{v}, \widetilde{u}) + 4 L_2 (\widetilde{v}, \widetilde{u}).
\end{align*}
\end{lemma}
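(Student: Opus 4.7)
The plan is to reduce the new quantity $\overset{\leftarrow}{\mathcal{L}^X}\overset{\rightarrow}{\mathcal{L}^X} x + \tfrac{1}{m}\nabla V$ to the one already bounded in the preceding lemma, and then handle the correction term using \cref{thmG8}. The key algebraic input is an identity between the backward generators of $X$ and $X'$: since they share the same diffusion coefficient and their backward drifts differ by exactly $2w := 2\big(\widetilde u - \tfrac{\hbar}{2m}\nabla \log p_X\big)$, the explicit formulas in \cref{lemma:reverse_generator} give $(\overset{\leftarrow}{\mathcal{L}^X} - \overset{\leftarrow}{\mathcal{L}^{X'}}) f = 2\langle \nabla f, w\rangle$ for every smooth $f$.

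Applying this identity with $f = \overset{\rightarrow}{\mathcal{L}^X} x = \widetilde v + \widetilde u$ yields the clean decomposition
\begin{align*}
\overset{\leftarrow}{\mathcal{L}^X}\overset{\rightarrow}{\mathcal{L}^X} x + \tfrac{1}{m}\nabla V = \big(\overset{\leftarrow}{\mathcal{L}^{X'}}\overset{\rightarrow}{\mathcal{L}^X} x + \tfrac{1}{m}\nabla V\big) + 2\langle \nabla(\widetilde v + \widetilde u), w\rangle.
\end{align*}
Writing $A = \overset{\rightarrow}{\mathcal{L}^X}\overset{\leftarrow}{\mathcal{L}^{X'}} x$ and $B = \overset{\leftarrow}{\mathcal{L}^{X'}}\overset{\rightarrow}{\mathcal{L}^X} x$, the first bracket is exactly what the preceding lemma controls by $2L_1 + 2L_2$: a direct substitution reveals that $L_2$ equals $\int \big\|(A+B)/2 + \tfrac{1}{m}\nabla V\big\|^2\mathrm{d}t$ (the $v$-equation residual) and $L_1$ equals $\tfrac14\int\|A-B\|^2\mathrm{d}t$ (the $u$-equation residual, tied to the sym/antisym split via \cref{lemma:magic}), so the elementary Young inequality $\|B+\tfrac{1}{m}\nabla V\|^2 \le 2\|(A+B)/2+\tfrac{1}{m}\nabla V\|^2 + 2\|(A-B)/2\|^2$ gives $\int \mathbb{E}^X\|B+\tfrac{1}{m}\nabla V\|^2\mathrm{d}t \le 2L_1 + 2L_2$. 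The second term of the decomposition is pointwise bounded by $2(\|\nabla \widetilde u\|_\infty + \|\nabla \widetilde v\|_\infty)\|w\|$.

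Squaring the decomposition, applying $\|a+b\|^2 \le 2\|a\|^2 + 2\|b\|^2$, taking expectations and integrating over $[0,T]$, using $\int_0^T \mathbb{E}^X\|w(X(s),s)\|^2\mathrm{d}s \le T \sup_{s\le T}\mathbb{E}^X\|w\|^2$, and finally invoking \cref{thmG8} to bound $\sup_{s\le T}\mathbb{E}^X\|w\|^2 \le e^{(\tfrac{1}{2}+4\|\nabla \widetilde v\|_\infty)T}(L_3 + L_2)$ assembles into the claimed bound. The main obstacle is matching the exact numerical constants: a naive Young split produces a prefactor of $8T$ rather than the advertised $2T$ on the \cref{thmG8} piece, so either a sharper choice of splitting parameter or a more direct expansion of $\big\|(B+\tfrac{1}{m}\nabla V) + 2\langle \nabla(\widetilde v+\widetilde u), w\rangle\big\|^2$ (using orthogonality between the deterministic and martingale-remainder parts under $\mathbb{E}^X$) is needed to recover the tight prefactor.
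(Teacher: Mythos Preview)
Your approach is essentially identical to the paper's: the same decomposition via the identity $(\overset{\leftarrow}{\mathcal{L}^{X}}-\overset{\leftarrow}{\mathcal{L}^{X'}})(\widetilde v+\widetilde u)=2\langle\nabla(\widetilde v+\widetilde u),w\rangle$ (which is exactly \eqref{eq74}), the same appeal to the preceding lemma for the $\overset{\leftarrow}{\mathcal{L}^{X'}}\overset{\rightarrow}{\mathcal{L}^{X}}x+\tfrac{1}{m}\nabla V$ piece, and the same invocation of \cref{thmG8} for the $w$-piece.

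Your worry about the prefactor is justified, and you should not expect to find a sharper argument in the paper: the paper also uses the naive Young split $\|a+b\|^2\le 2\|a\|^2+2\|b\|^2$, and the stated $2T$ appears to be an arithmetic slip rather than the output of any orthogonality trick. Concretely, since $\tfrac{\hbar}{m}\nabla\log p_X-2\widetilde u=-2w$, the correction term satisfies $\|2\langle\nabla(\widetilde v+\widetilde u),w\rangle\|^2\le 4(\|\nabla\widetilde u\|_\infty+\|\nabla\widetilde v\|_\infty)^2\|w\|^2$, and after the Young factor of $2$ one gets $8T$ exactly as you computed; the paper's intermediate line seems to conflate $\|\tfrac{\hbar}{m}\nabla\log p_X-2\widetilde u\|$ with $\|w\|$ and thereby loses a factor of $4$. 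This does not affect the qualitative result (a bound linear in the $L_i$'s with explicit constants), so your argument is complete up to this harmless constant.

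One cosmetic remark: your labeling of $L_1$ and $L_2$ is swapped relative to the paper's convention (the paper assigns the $\nabla V$-residual to $L_1$ and the commutator residual to $L_2$), but since the preceding lemma's bound $2L_1+2L_2$ is symmetric this is immaterial.
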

\begin{proof}
From \eqref{eq74} we have:
\begin{align}
\overset{\leftarrow}{\mathcal{L}^{X}} \overset{\rightarrow}{\mathcal{L}^{X}} X(t) = \overset{\leftarrow}{\mathcal{L}^{X'}} \overset{\rightarrow}{\mathcal{L}^{X}} X(t) + \langle \frac{\hbar}{m}\nabla \log p_{X} - 2 \widetilde{u}, \nabla(\widetilde{v} + \widetilde{u})\rangle .
\end{align}
Noting that $\langle \frac{\hbar}{m}\nabla \log p_{X} - 2 \widetilde{u}, \nabla(\widetilde{v} + \widetilde{u})\rangle  \le \big(\|\nabla\widetilde{u}\|_{\infty}+\|\nabla\widetilde{v}\|_{\infty}\big)\Big\|\frac{\hbar}{m}\nabla \log p_{X} - 2 \widetilde{u}\Big\| $ and using triangle inequality we obtain the bound:
\begin{align}
\int_{0}^{t}&\mathbb{E}^{X}\Big\|\overset{\leftarrow}{\mathcal{L}^{X}} \overset{\rightarrow}{\mathcal{L}^{X}} X(t) + \frac{1}{m} \nabla V(X(t))\Big\|^2\mathrm{d}t\\
&\le 2\big(\|\widetilde{u}\|_{\infty}+\|\widetilde{v}\|_{\infty}\big)^2\int_{0}^{t}\mathbb{E}^{X}\Big\|u(X(t), t) - \frac{\hbar}{2m}\log p_{X}(X(t), t)\Big\|^2\mathrm{d}t +  4 L_1(\widetilde{v}, \widetilde{u}) + 4 L_2 (\widetilde{v}, \widetilde{u}).
\end{align}
Using Theorem \ref{thmG8} concludes the proof.
\end{proof}
\begin{lemma}
Denote $Z(t) = (X(t), Y(t))$ as compound process. For functions $h(x, y, t) = f(x, t) + g(y, t)$ we have the following identity:
\begin{align}
\overset{\rightarrow}{\mathcal{L}^{Z}} h = \overset{\rightarrow}{\mathcal{L}^{X}} f + \overset{\rightarrow}{\mathcal{L}^{Y}} g 
\end{align}
\end{lemma}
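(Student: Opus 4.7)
The plan is to invoke Itô's Lemma (Lemma~\ref{lemma:ito}) for the compound diffusion $Z(t) = (X(t), Y(t)) \in \mathbb{R}^{2d}$ and exploit the additive structure of $h$ so that all cross-terms between the $X$- and $Y$-coordinates collapse. Writing the SDE for $Z$ in block form, its drift is $F_Z(t) = (F_X(t), F_Y(t))$, where $F_X = \widetilde{v}(X,t)+\widetilde{u}(X,t)$ and $F_Y = v(Y,t)+u(Y,t)$, and its diffusion coefficient $G_Z$ is built from $\sqrt{\hbar/m}\, I_d$ acting on the two Wiener blocks $\overset{\rightarrow}{W^X},\overset{\rightarrow}{W^Y}$, with the prescribed quadratic covariation $\mathrm{d}[\overset{\rightarrow}{W^X}, \overset{\rightarrow}{W^Y}]_t = I_d\,\mathrm{d}t$.

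First I would record the partial derivatives of $h(x,y,t)=f(x,t)+g(y,t)$: namely $\partial_t h = \partial_t f + \partial_t g$, $\nabla_z h = (\nabla_x f,\,\nabla_y g)$, and crucially the Hessian $\nabla_z^2 h$ is block-diagonal with blocks $\nabla_x^2 f$ and $\nabla_y^2 g$, since $\partial_{x_i}\partial_{y_j} h \equiv 0$ for an additively separable function. Substituting into the explicit formula from Lemma~\ref{lemma:ito},
\begin{equation*}
\overset{\rightarrow}{\mathcal{L}^{Z}} h = \partial_t h + \langle \nabla_z h, F_Z\rangle + \tfrac{1}{2}\mathrm{Tr}\bigl(G_Z^T\, \nabla_z^2 h \, G_Z\bigr),
\end{equation*}
the inner product immediately splits as $\langle \nabla_x f, F_X\rangle + \langle \nabla_y g, F_Y\rangle$. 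For the trace term, even though $G_Z$ mixes the two Wiener increments through their nonzero cross-covariation, the block-diagonal structure of $\nabla_z^2 h$ forces every contribution from off-diagonal blocks of $G_Z G_Z^T$ to be multiplied by a zero block of the Hessian, leaving $\mathrm{Tr}(G_Z^T \nabla_z^2 h\, G_Z) = \mathrm{Tr}(G_X^T \nabla_x^2 f\, G_X) + \mathrm{Tr}(G_Y^T \nabla_y^2 g\, G_Y)$.

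Finally I would group the pieces and recognize the two halves as the forward generators of $X$ and $Y$ applied to $f$ and $g$ respectively, again via Lemma~\ref{lemma:ito} on each marginal process. This yields the claimed identity $\overset{\rightarrow}{\mathcal{L}^Z} h = \overset{\rightarrow}{\mathcal{L}^X} f + \overset{\rightarrow}{\mathcal{L}^Y} g$. There is no real obstacle here beyond carefully checking that the covariation structure between $\overset{\rightarrow}{W^X}$ and $\overset{\rightarrow}{W^Y}$ does not leak into the answer; the only mild subtlety is that one must use the \emph{extended} generator formulation so that integrability of $f,g$ against $\mathbb{P}^Z$ is inherited from integrability against the marginals $\mathbb{P}^X$ and $\mathbb{P}^Y$, which follows from the standing regularity hypotheses on $u,v,\widetilde{u},\widetilde{v}$.
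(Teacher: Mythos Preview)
Your proof is correct but takes a different route from the paper. The paper argues abstractly: the generator is linear, so $\overset{\rightarrow}{\mathcal{L}^{Z}} h = \overset{\rightarrow}{\mathcal{L}^{Z}} f + \overset{\rightarrow}{\mathcal{L}^{Z}} g$, and then observes that since the forward filtration $\overset{\rightarrow}{\mathcal{F}_t}$ is already generated by \emph{both} Wiener processes $\overset{\rightarrow}{W^X}$ and $\overset{\rightarrow}{W^Y}$, the martingale problem defining $\overset{\rightarrow}{\mathcal{L}^{Z}} f$ for a function $f$ depending only on $x$ is literally the same martingale problem defining $\overset{\rightarrow}{\mathcal{L}^{X}} f$ --- hence a tautology. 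You instead expand the generator explicitly via It\^o's Lemma and kill the cross terms by noting that $\nabla_z^2 h$ is block-diagonal, so the off-diagonal blocks of $G_Z G_Z^T$ (coming from the nonzero covariation $\mathrm{d}[\overset{\rightarrow}{W^X},\overset{\rightarrow}{W^Y}]_t = I_d\,\mathrm{d}t$) never contribute to the trace. Your argument is more hands-on and makes visible precisely \emph{why} the cross-covariation is irrelevant for separable $h$; the paper's argument is slicker and filtration-level, avoiding any computation with $G_Z$ at all. Either is perfectly adequate here.
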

\begin{proof}
A generator is a linear operator by very definition. Thus, it remains to prove only
\begin{align}
\overset{\rightarrow}{\mathcal{L}^{Z}} f = \overset{\rightarrow}{\mathcal{L}^{X}} f 
\end{align}
Since the definition of $\overset{\rightarrow}{\mathcal{F}_t}$ already contains all past events for both processes $X(t), Y(t)$, we see that this is a tautology. 
\end{proof}
As a direct application of this Lemma, we obtain the following Corollary (by applying it twice):
\begin{corollary}\label{cor:G12}
We have the following identity:
\begin{align*}
\overset{\leftarrow}{\mathcal{L}^{Z}} \overset{\rightarrow}{\mathcal{L}^{Z}} \big(X(t)-Y(t)\big) = \overset{\leftarrow}{\mathcal{L}^{X}} \overset{\rightarrow}{\mathcal{L}^{X}} X(t) - \overset{\leftarrow}{\mathcal{L}^{Y}} \overset{\rightarrow}{\mathcal{L}^{Y}} Y(t).
\end{align*}
\end{corollary}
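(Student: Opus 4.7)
The plan is to derive the corollary by two iterated applications of the preceding lemma on additively separable functions of the compound process $Z(t)=(X(t),Y(t))$, first forward and then backward. I would first observe that the coordinate map $(x,y)\mapsto x-y$ is of the separable form $h(x,y,t)=f(x,t)+g(y,t)$ with $f(x,t)=x$ and $g(y,t)=-y$, so the preceding lemma applies directly to give
\begin{align*}
\overset{\rightarrow}{\mathcal{L}^{Z}}\bigl(X(t)-Y(t)\bigr) \;=\; \overset{\rightarrow}{\mathcal{L}^{X}} X(t) - \overset{\rightarrow}{\mathcal{L}^{Y}} Y(t).
\end{align*}

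Next I would note that $\overset{\rightarrow}{\mathcal{L}^{X}}X(t)$ is simply the forward drift of $X$ at $(X(t),t)$, which depends only on the $x$-coordinate of $Z(t)$, and similarly $\overset{\rightarrow}{\mathcal{L}^{Y}}Y(t)$ depends only on the $y$-coordinate. Hence the right-hand side is again of the additively separable form $\tilde f(x,t)+\tilde g(y,t)$ with $\tilde f(x,t)=\overset{\rightarrow}{\mathcal{L}^{X}}x$ and $\tilde g(y,t)=-\overset{\rightarrow}{\mathcal{L}^{Y}}y$, so the separability is preserved after one application of the generator.

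To apply the lemma a second time I need its backward analogue: for $h(x,y,t)=f(x,t)+g(y,t)$ we also have $\overset{\leftarrow}{\mathcal{L}^{Z}} h=\overset{\leftarrow}{\mathcal{L}^{X}} f+\overset{\leftarrow}{\mathcal{L}^{Y}} g$. This is obtained by the same argument as the forward version: $\overset{\leftarrow}{\mathcal{L}^{Z}}$ is linear, and on a function depending only on $x$ the defining $\overset{\leftarrow}{\mathcal{F}_t}$-martingale property of the lemma for $Z$ reduces to the corresponding backward martingale property with respect to the marginal backward filtration of $X$, which uniquely characterises $\overset{\leftarrow}{\mathcal{L}^{X}} f$ (and analogously for $g$). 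Combining this with the forward identity above yields exactly
\begin{align*}
\overset{\leftarrow}{\mathcal{L}^{Z}}\overset{\rightarrow}{\mathcal{L}^{Z}}\bigl(X(t)-Y(t)\bigr) \;=\; \overset{\leftarrow}{\mathcal{L}^{X}}\overset{\rightarrow}{\mathcal{L}^{X}} X(t) - \overset{\leftarrow}{\mathcal{L}^{Y}}\overset{\rightarrow}{\mathcal{L}^{Y}} Y(t).
\end{align*}

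The only genuinely non-routine point, and therefore the main obstacle, is justifying the backward analogue of the preceding lemma. Once one grants that the forward and backward generators of the compound process act on separable functions summand-by-summand, which in turn requires checking that the joint backward filtration $\overset{\leftarrow}{\mathcal{F}_t}$ refines the marginal backward filtrations of $X$ and $Y$ in the sense needed for the martingale characterisations to transfer, the corollary is a purely algebraic consequence of applying the lemma twice. Everything else is linearity of the generator and the tautological observation that $\overset{\rightarrow}{\mathcal{L}^{X}}X(t)$ depends only on $X(t)$.
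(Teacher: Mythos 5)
Your proof is correct and takes the same route as the paper: apply the separability lemma to the inner $\overset{\rightarrow}{\mathcal{L}^{Z}}$ with $f(x,t)=x$, $g(y,t)=-y$, observe that the result $\overset{\rightarrow}{\mathcal{L}^{X}}x-\overset{\rightarrow}{\mathcal{L}^{Y}}y$ is again additively separable (each summand being the respective forward drift, a function of one coordinate and $t$), then apply a backward analogue of the lemma. The paper compresses this to ``by applying it twice,'' and you correctly identify that the second application requires a backward version of the lemma that is not literally stated.

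One small refinement to your justification of that backward version: in the paper's framework, all backward generators---$\overset{\leftarrow}{\mathcal{L}^{X}}$, $\overset{\leftarrow}{\mathcal{L}^{Y}}$, $\overset{\leftarrow}{\mathcal{L}^{Z}}$---are defined via the same joint backward filtration $\overset{\leftarrow}{\mathcal{F}_{t}}$ (generated by the future of both Wiener processes together), exactly mirroring how $\overset{\rightarrow}{\mathcal{F}_{t}}$ is shared on the forward side. So one does not need to pass through the \emph{marginal} backward filtration of $X$ and argue that the joint one refines it; the tautology argument in the paper's proof of the forward lemma (``the defining martingale is the same, hence the generator is the same by uniqueness'') transfers verbatim with $\overset{\rightarrow}{\mathcal{F}_{t}}$ replaced by $\overset{\leftarrow}{\mathcal{F}_{t}}$. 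Working through marginal backward filtrations would in general raise a genuine issue (the backward drift of $X$ alone involves $\nabla\log p_{X}$, while the compound backward drift involves $\nabla\log p_{Z}$, and these differ when $X$ and $Y$ are dependent), but the paper sidesteps it by its choice of a common filtration---and moreover here $\mathrm{d}\big[\overset{\rightarrow}{W^{X}},\overset{\rightarrow}{W^{Y}}\big]_{t}=I_{d}\,\mathrm{d}t$, so the two filtrations coincide anyway. With that simplification noted, your argument is sound.
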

\begin{theorem}\label{thm:main_theorem} (Strong Convergence) Let the loss be defined as $\mathcal{L}(\widetilde{v}, \widetilde{u}) = \sum_{i=1}^{4} w_i L_{i}(\widetilde{v}, \widetilde{u})$ for some arbitrary constants $w_i > 0$. Then we have the following bound between processes $X$ and $Y$:
\begin{align}
\sup_{t\le T} \mathbb{E}\|X(t) - Y(t)\|^2 \le C_T \mathcal{L}(\widetilde{v}, \widetilde{u})
\end{align}
where $C_T = \max_{i} \frac{w_i'}{w_i}$, $w_1' = 4 e^{T(T+1)}$, $w_{2}' = e^{T(T+1)}\Big(2 T\big(\|\nabla \widetilde{u}\|_{\infty}+\|\nabla\widetilde{v}\|_{\infty}\big)^2 e^{\big(\frac{1}{2}+4\|\nabla \widetilde{v}\|_{\infty}\big)T} + 4\Big)$, $w_3' = 2T e^{T(T+1)} \Big(1 + \big(\|\nabla \widetilde{u}\|_{\infty}+\|\nabla\widetilde{v}\|_{\infty}\big)^2 e^{\big(\frac{1}{2}+4\|\nabla \widetilde{v}\|_{\infty}\big)T}\Big)$, $w_{4}' = 2 T e^{T(T+1)}$.
\end{theorem}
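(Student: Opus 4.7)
The plan is to exploit the coupling $d[\overset{\rightarrow}{W^X},\overset{\rightarrow}{W^Y}]_t = I_d\,dt$ that the setup already specifies, which makes $Z(t):=X(t)-Y(t)$ a drift-only process with $Z(0)=0$ (same initial sample) and $[Z,Z]_t \equiv 0$ (Brownian parts cancel). Applying Lemma~\ref{G5} to $Z$ under the compound filtration generated by $(X,Y)$ then yields
\begin{equation*}
\mathbb{E}\|Z(t)\|^2 \;=\; 2\int_0^t \mathbb{E}\langle \overset{\leftarrow}{\mathcal{L}^{Z}}Z(0),\,Z(s)\rangle\,ds \;+\; 2\int_0^t\!\int_0^s \mathbb{E}\langle \overset{\leftarrow}{\mathcal{L}^{Z}}\overset{\rightarrow}{\mathcal{L}^{Z}} Z(z),\,Z(s)\rangle\,dz\,ds,
\end{equation*}
so the task reduces to bounding the two generator-driven integrands by loss terms plus an $\mathbb{E}\|Z\|^2$ remainder.

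Next, I would split the generator actions via Corollary~\ref{cor:G12}: $\overset{\leftarrow}{\mathcal{L}^{Z}}Z=\overset{\leftarrow}{\mathcal{L}^{X}}X-\overset{\leftarrow}{\mathcal{L}^{Y}}Y$ and $\overset{\leftarrow}{\mathcal{L}^{Z}}\overset{\rightarrow}{\mathcal{L}^{Z}}Z=\overset{\leftarrow}{\mathcal{L}^{X}}\overset{\rightarrow}{\mathcal{L}^{X}}X-\overset{\leftarrow}{\mathcal{L}^{Y}}\overset{\rightarrow}{\mathcal{L}^{Y}}Y$. Using Lemma~\ref{lemma:reverse_generator} to evaluate $\overset{\leftarrow}{\mathcal{L}^{X}}X = \widetilde{v}+\widetilde{u}-(\hbar/m)\nabla\log p_X$ and the fact that $(\hbar/m)\nabla\log p_X(\cdot,0)=(\hbar/m)\nabla\log|\psi_0|^2=2u_0$, while $\overset{\leftarrow}{\mathcal{L}^{Y}}Y = v-u$, the first-order integrand collapses at $t=0$ to
\begin{equation*}
\overset{\leftarrow}{\mathcal{L}^{Z}}Z(0) \;=\; \bigl(\widetilde{v}(X_0,0)-v_0(X_0)\bigr) \;+\; \bigl(\widetilde{u}(X_0,0)-u_0(X_0)\bigr),
\end{equation*}
whose squared $L^2$-norm is controlled by $L_3+L_4$. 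For the second-order integrand, Equation~\labelcref{eq:grad} gives $\overset{\leftarrow}{\mathcal{L}^{Y}}\overset{\rightarrow}{\mathcal{L}^{Y}}Y = -(1/m)\nabla V(Y)$, so I decompose
\begin{equation*}
\overset{\leftarrow}{\mathcal{L}^{Z}}\overset{\rightarrow}{\mathcal{L}^{Z}}Z \;=\; \bigl(\overset{\leftarrow}{\mathcal{L}^{X}}\overset{\rightarrow}{\mathcal{L}^{X}}X + \tfrac{1}{m}\nabla V(X)\bigr) \;-\; \tfrac{1}{m}\bigl(\nabla V(X)-\nabla V(Y)\bigr),
\end{equation*}
where the first bracket's time-integrated squared norm is bounded by $L_1, L_2, L_3$ through Lemma~\ref{lemma:G10} (which itself calls Theorem~\ref{thmG8}), and the second is $\mathcal{O}(\|Z\|)$ by smoothness of $V$.

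To close the estimate, I would apply $2\langle a,b\rangle \le \|a\|^2 + \|b\|^2$ inside both integrals, separating loss-controlled pieces from $\mathbb{E}\|Z(s)\|^2$ remainders and producing an integral inequality $\mathbb{E}\|Z(t)\|^2 \le A(\widetilde{v},\widetilde{u}) + \int_0^t B\,\mathbb{E}\|Z(s)\|^2\,ds$ where $A$ is a linear combination of $L_1,\ldots,L_4$ with coefficients from Lemma~\ref{lemma:G10} and Theorem~\ref{thmG8}, and $B$ collects the Lipschitz data of $\widetilde{v},\widetilde{u}$ and $\nabla V$. Gronwall's inequality then produces the exponential factor $e^{T(T+1)}$, and since the right-hand side is monotone in $t$, taking $\sup_{t\le T}$ passes through trivially to give the stated $C_T=\max_i w_i'/w_i$. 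The hard part will be the three-layered bookkeeping: Theorem~\ref{thmG8} already carries an $e^{(\frac12+4\|\nabla \widetilde{v}\|_\infty)T}$ prefactor, Lemma~\ref{lemma:G10} stacks a $(\|\nabla \widetilde{u}\|_\infty+\|\nabla \widetilde{v}\|_\infty)^2$ factor on top of it, and the outer Gronwall multiplies by another exponential; the explicit weights $w_i'$ only fall out after carefully tracking the AM-GM constants at each stage, and the Lipschitz constant of $\nabla V$ must be absorbed silently into $C_T$ as part of the smoothness assumption on the potential.
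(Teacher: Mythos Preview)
Your approach is essentially identical to the paper's: apply Lemma~\ref{G5} to $Z=X-Y$, split generators via Corollary~\ref{cor:G12}, reduce the initial term to $L_3+L_4$, bound the second-order term through Lemma~\ref{lemma:G10} (and hence Theorem~\ref{thmG8}), and close with Gr\"onwall. The one substantive difference is that you keep the correction $-\tfrac{1}{m}\bigl(\nabla V(X)-\nabla V(Y)\bigr)$, whereas the paper writes $\overset{\leftarrow}{\mathcal{L}^{Z}}\overset{\rightarrow}{\mathcal{L}^{Z}}(X-Y)=\overset{\leftarrow}{\mathcal{L}^{X}}\overset{\rightarrow}{\mathcal{L}^{X}}X+\tfrac{1}{m}\nabla V(X)$ with $X$ rather than $Y$ inside the potential and drops this discrepancy; consequently the paper's Gr\"onwall rate is $(1+T)$ with no $\nabla V$-Lipschitz contribution, and the stated $w_i'$ contain no such constant. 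Your version is the more careful one, and your remark that this constant must be ``absorbed silently'' is exactly the gap in the paper's displayed constants.
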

\begin{proof}
We are going to prove the bound:
\begin{align}
\sup_{t\le T} \mathbb{E}\|X(t) - Y(t)\|^2 \le \sum_{i=1}^4 w_i' L_i(\widetilde{v}, \widetilde{u}) 
\end{align}
for constants that we obtain from the Lemmas above. Then we will use the following trick to get the bound with arbitrary weights:
\begin{align}
\sum_{i=1}^4 w_i' L_i(\widetilde{v}, \widetilde{u}) \le \sum_{i=1}^4 \frac{w_i'}{w_i} w_i L_i(\widetilde{v}, \widetilde{u}) \le \big(\max_i \frac{w_i'}{w_i}\big) \sum_{i=1}^4  w_i L_i(\widetilde{v}, \widetilde{u}) = C_T \mathcal{L}(\widetilde{v}, \widetilde{u})
\end{align}
First, we apply Lemma \ref{G5} to $Z = X - Y$ by noting that $\big[X(t)-Y(t), X(t)-Y(t)\big]_{t} \equiv 0$ and $\|X(0)-Y(0)\|^2 = 0$ almost surely:
\begin{align}
\mathbb{E}^{Z}&\|X(t)-Y(t)\|^2 \\
=& \int_{0}^{t}\mathbb{E}^{Z}\Big(2\langle\overset{\leftarrow}{\mathcal{L}^{Z}} (X(0)-Y(0)), X(s)-Y(s)\rangle\\
&+ 2\int_{0}^{s} \langle\overset{\leftarrow}{\mathcal{L}^{Z}} \overset{\rightarrow}{\mathcal{L}^{Z}} (X(z)-Y(z)), X(s)-Y(s)\rangle\mathrm{d}z \Big)\mathrm{d}s \\
\le & \int_{0}^{t}\mathbb{E}^{Z}\Big(\big\|\overset{\leftarrow}{\mathcal{L}^{Z}} (X(0)-Y(0))\big\|^2 + \|X(s)-Y(s)\|^2 \\
&+ \int_{0}^{s} \Big(\big\|\overset{\leftarrow}{\mathcal{L}^{Z}} \overset{\rightarrow}{\mathcal{L}^{Z}} (X(z)-Y(z))\big\|^2 + \|X(s)-Y(s)\|^2 \mathrm{d}z \Big)\Big)\mathrm{d}s \\
\le & \int_{0}^{t}\mathbb{E}^{Z}\Big(\big\|\overset{\leftarrow}{\mathcal{L}^{Z}} (X(0)-Y(0))\big\|^2 + (1+T)\|X(s)-Y(s)\|^2 \\
&+ \int_{0}^{s} \big\|\overset{\leftarrow}{\mathcal{L}^{Z}} \overset{\rightarrow}{\mathcal{L}^{Z}} (X(z)-Y(z))\big\|^2\mathrm{d}z \Big)\mathrm{d}s.
\end{align}
Then, using Corollary \ref{cor:G12}, \eqref{eq:grad} and then Lemma \ref{lemma:G10} we obtain that 
\begin{align} 
\int_{0}^{s}& \big\|\overset{\leftarrow}{\mathcal{L}^{Z}} \overset{\rightarrow}{\mathcal{L}^{Z}} (X(z)-Y(z))\big\|^2\mathrm{d}z = \int_{0}^{s} \big\|\overset{\leftarrow}{\mathcal{L}^{Z}} \overset{\rightarrow}{\mathcal{L}^{Z}} X(z)+\frac{1}{m}\nabla V(X(z))\big\|^2\mathrm{d}z\\
&\le 2 T\big(\|\nabla \widetilde{u}\|_{\infty}+\|\nabla\widetilde{v}\|_{\infty}\big)^2 e^{\big(\frac{1}{2}+4\|\nabla \widetilde{v}\|_{\infty}\big)T}\big(L_{3}(\widetilde{v}, \widetilde{u}) + L_{2}(\widetilde{v}, \widetilde{u})\big) +  4 L_1(\widetilde{v}, \widetilde{u}) + 4 L_2 (\widetilde{v}, \widetilde{u}). 
\end{align}
To deal with the remaining term involving $X(0) - Y(0)$ we observe that:
\begin{align}
\int_{0}^{t}\mathbb{E}^{Z}\Big(\big\|\overset{\leftarrow}{\mathcal{L}^{Z}} (X(0)-Y(0))\big\|^2 \le 2T L_{3}(\widetilde{v}, \widetilde{u}) + 2 T L_{4}(\widetilde{v}, \widetilde{u}),
\end{align}
where we used triangle inequality. Combining obtained bounds yields:
\begin{align}
\mathbb{E}^{Z}&\|X(t)-Y(t)\|^2 \\
\le & \int_{0}^{t} (1+T)\|X(s)-Y(s)\|^2\mathrm{d}s \\
& + 2T L_{3}(\widetilde{v}, \widetilde{u}) + 2 T L_{4}(\widetilde{v}, \widetilde{u}) \\
& + 2 T\big(\|\nabla \widetilde{u}\|_{\infty}+\|\nabla\widetilde{v}\|_{\infty}\big)^2 e^{\big(\frac{1}{2}+4\|\nabla \widetilde{v}\|_{\infty}\big)T}\big(L_{3}(\widetilde{v}, \widetilde{u}) + L_{2}(\widetilde{v}, \widetilde{u})\big) \\
& +  4 L_1(\widetilde{v}, \widetilde{u}) + 4 L_2 (\widetilde{v}, \widetilde{u})\\
= & \int_{0}^{t} (1+T)\|X(s)-Y(s)\|^2\mathrm{d}s \\
& + 4 L_1(\widetilde{v}, \widetilde{u})+ \Big(2 T\big(\|\nabla \widetilde{u}\|_{\infty}+\|\nabla\widetilde{v}\|_{\infty}\big)^2 e^{\big(\frac{1}{2}+4\|\nabla \widetilde{v}\|_{\infty}\big)T} + 4\Big) L_{2}(\widetilde{v}, \widetilde{u})\\ 
& + 2T \Big(1 + \big(\|\nabla \widetilde{u}\|_{\infty}+\|\nabla\widetilde{v}\|_{\infty}\big)^2 e^{\big(\frac{1}{2}+4\|\nabla \widetilde{v}\|_{\infty}\big)T}\Big) L_{3}(\widetilde{v}, \widetilde{u}) + 2 T L_{4}(\widetilde{v}, \widetilde{u}).
\end{align}
Finally, using integral Gr\"onwall's inequality \citet{gronwall}, we have:
\begin{align}
\mathbb{E}^{Z}&\|X(t)-Y(t)\|^2 \\
&\le 4 e^{T(T+1)} L_1(\widetilde{v}, \widetilde{u})+ e^{T(T+1)}\Big(2 T\big(\|\nabla \widetilde{u}\|_{\infty}+\|\nabla\widetilde{v}\|_{\infty}\big)^2 e^{\big(\frac{1}{2}+4\|\nabla \widetilde{v}\|_{\infty}\big)T} + 4\Big) L_{2}(\widetilde{v}, \widetilde{u})\\ 
& + 2T e^{T(T+1)} \Big(1 + \big(\|\nabla \widetilde{u}\|_{\infty}+\|\nabla\widetilde{v}\|_{\infty}\big)^2 e^{\big(\frac{1}{2}+4\|\nabla \widetilde{v}\|_{\infty}\big)T}\Big) L_{3}(\widetilde{v}, \widetilde{u}) + 2 T e^{T(T+1)} L_{4}(\widetilde{v}, \widetilde{u}) 
\end{align}
\end{proof}

\section{Applications}\label{sec:applications_full}

\subsection{Bounded Domain \texorpdfstring{$\mathcal{M}$}{M}}
Our approach assumes that the manifold $\mathcal{M}$ is flat or curved. For bounded domains $\mathcal{M}$, e.g., like it is assumed in PINN or any other grid-based methods, our approach can be applied if we embed $\mathcal{M} \subset \mathbb{R}^d$ and define a new family of smooth non-singular potentials $V_{\alpha}$ on entire $\mathbb{R}^d$ such that $V_{\alpha} \rightarrow V$ when restricted to $\mathcal{M}$ and $V_{\alpha}\rightarrow +\infty$ on $\partial(\mathcal{M}, \mathbb{R}^d)$ (boundary of the manifold in embedded space) as $\alpha\rightarrow 0_+$. 

\subsection{Singular Initial Conditions}\label{sec:singular_potential}
It is possible to apply \cref{alg:train} to $\psi_{0} = \delta_{x_{0}}e^{iS_{0} (x)}$ for some $x_{0}\in\mathcal{M}$. We need to augment the initial conditions with a parameter $\alpha > 0$ as $\psi_{0} = \sqrt{\frac{1}{\sqrt{2\pi \alpha^2}}e^{-\frac{(x-x_{0})^2}{2\alpha^2}}}$ for small enough $\alpha > 0$. In that case, $u_{0}(x) = -\frac{\hbar}{2m}\frac{(x-x_{0})}{\alpha}$. We must be careful with choosing $\alpha$ to avoid numerical instability. It makes sense to try $\alpha \propto \frac{\hbar^2}{m^2}$ as $\frac{X(0) - x_{0}}{\alpha} = \mathcal{O}(\sqrt{\alpha})$. We evaluated such a setup in \cref{app:singular_ic}.

\subsection{Singular Potential}
We must augment the potential to apply our method for simulations of the atomic nucleus with Bohr-Oppenheimer approximation \citep{woolley1977molecular}. A potential arising in this case has components of form $\frac{a_{ij}}{\|x_{i}-x_{j}\|}$. Basically, it has singularities when $x_{i} = x_{j}$. In case when $x_{j}$ is fixed, our manifold is $\mathcal{M}\backslash \{x_{j}\}$, which has a non-trivial cohomology group.

When such potential arises we suggest to augment the potential $V_{\alpha}$ (e.g., replace all $\frac{a_{ij}}{\|x_{i}-x_{j}\|}$ with $\frac{a_{ij}}{\sqrt{\|x_{i}-x_{j}\|^2 + \alpha}}$) so that $V_{\alpha}$ is smooth and non-singular everywhere on $\mathcal{M}$. In that case we have that $V_{\alpha}\rightarrow V$ as $\alpha\rightarrow 0$. With the augmented potential $V_{\alpha}$, we can apply stochastic mechanics to obtain an equivalent to quantum mechanics theory. Of course, augmentation will produce bias, but it will be asymptotically negligent as $\alpha \rightarrow 0$.

\subsection{Measurement} \label{sec:measurement}
Even though we have entire trajectories and know positions for each moment, we should carefully interpret them. This is because they are not the result of the measurement process. Instead, they represent hidden variables (and $u,v$ represent global hidden variables -- what saves us from the Bells inequalities as stochastic mechanics is non-local \citep{NelsonOG}).

For a fixed $t\in [0, T]$, the distribution of $X(t)$ coincides with the distribution $\mathbf{X}(t)$ for $\mathbf{X}$ being position operator in quantum mechanics. Unfortunately, a compound distribution $(X(t), X(t'))$ for $t\ne t'$ may not correspond to the compound distribution of $(\mathbf{X}(t), \mathbf{X}(t'))$; for details see \citet{nelson2005mystery}. This is because each $\mathbf{X}(t)$ is a result of the \textit{measurement process}, which causes the wave function to collapse \citep{derakhshani2022multitime}. 

Trajectories $X_i$ are as if we could measure $\mathbf{X}(t)$ without causing the collapse of the wave function. To use this approach for predicting some experimental results involving multiple measurements, we need to re-run our method after each measurement process with the measured state as the new initial condition. This issue is not novel for stochastic mechanics. There is the same problem in classical quantum mechanics.

This ``contradiction'' is resolved once we realize that $\mathbf{X}(t)$ involves measurement, and thus, if we want to calculate correlations of $(\mathbf{X}(t), \mathbf{X}(t'))$ for $t < t'$ we need to do the following:
\begin{itemize}
    \item Run Algorithm \ref{alg:train} with $\psi_{0}, V(x,t)$ and $T = t$ to get $\widetilde{u}, \widetilde{v}$.
    \item Run Algorithm \ref{alg:generate} with $\widetilde{u},\widetilde{v}$, $\psi_{0}$ to get $\{X_{N j}\}_{j=1}^{B}$ -- $B$ last steps from trajectories $X_i$ of length $N$. 
    \item For each $X_{N j}$
    in the batch we need to run Algorithm \ref{alg:train} with $\psi_{0} = \delta_{X_{N j}}, V'(x, t') = V(x, t'+t)$ (assuming that $u_{0} = 0, v_{0} = 0$) and $T = t' - t$ to get $\widetilde{u}_{j}, \widetilde{v}_{j}$.
    \item For each $X_{N j}$ run Algorithm \ref{alg:generate} with batch size $B = 1$, $\psi_{0} = \delta_{X_{N j}}$, $\widetilde{u}_j, \widetilde{v}_{j}$ to get $X_{N j}'$.
    \item Output pairs $\big\{(X_{N, j}, X_{N, j}')\big\}_{j=1}^{B}$.
\end{itemize}
Then the distribution of $(X_{N, j}, X_{N, j}')$ will correspond to the distribution of $(\mathbf{X}(t), \mathbf{X}(t'))$. This is well described and proven in \citet{derakhshani2022multitime}. Therefore, it is possible to simulate the right correlations in time using our approach, though, it may require learning $2(B+1)$ models. The promising direction of future research is to consider $X_{0}$ as a feature for the third step here and, thus, learn only $2+2$ models.

\subsection{Observables}
To estimate any scalar observable of form $\mathbf{Y}(t) = y(\mathbf{X}(t))$ in classic quantum mechanics one needs to calculate:
$$\langle \mathbf{Y}\rangle_{t} = \int_{\mathcal{M}} \overline{\psi(x, t)} y(x) \psi(x, t)\mathrm{d}x.$$
In our setup, we can calculate this using the samples $X_{\big[\frac{Nt}{T}\big]}\approx X(t) \sim \big|\psi(\cdot, t)\big|^2$:
$$\langle \mathbf{Y}\rangle_{t} \approx \frac{1}{B}\sum_{j}^{B} y(X_{\big[\frac{Nt}{T}\big] j}),$$
where $B \ge 1$ is the batch size, $N$ is the time discretization size. The estimation error has magnitude $\mathcal{O}(\frac{1}{\sqrt{B}} + \epsilon + \varepsilon)$, where $\epsilon = \frac{T}{N}$ and $\varepsilon$ is the $L_{2}$ error of recovering true $u,v$. In our paper, we have not bounded $\varepsilon$ but provide estimates for it in our experiments against the finite difference solution.\footnote{If we are able to reach $\mathcal{L}(\theta) = 0$ then essentially $\varepsilon = 0$. We leave bounding $\varepsilon$ by $\mathcal{L}(\theta_{\tau})$ for future work.}

\subsection{Wave Function}
Recovering the wave function from $u,v$ is possible using a relatively slow procedure. Our experiments do not cover this because our approach's main idea is to avoid calculating wave function. But for the record, it is possible. Assume we solved equations for $u, v$. We can get the phase and density by integrating \cref{eq:phase_density_equations}:
\begin{align}\label{eq:phase_density_equations1}
        S(x, t) &= S(x, 0) + \int_{0}^{t}\Big(\frac{1}{2m}\langle \nabla, u(x, t)\rangle + \frac{1}{2\hbar}\big\|u(x,t)\big\|^2 - \frac{1}{2\hbar}\big\|v(x,t)\big\|^2 - \frac{1}{\hbar}V(x, t)\Big)\mathrm{d}t, \\
        \rho(x, t) &= \rho_{0}(x) \exp\Big(\int_{0}^{t}\big(-\langle \nabla, v(x, t)\rangle - \frac{2 m}{\hbar}\langle u(x, t), v(x, t)\rangle\big)\Big)\mathrm{d}t
\end{align}
This allows us to define $\psi = \sqrt{\rho (x, t)}e^{i S (x, t)}$, which satisfies the Schr\"odinger equation \labelcref{eq:Schrodinger}. 
Suppose we want to estimate it over a grid with $N$ time intervals and $\big[\sqrt{N}\big]$ intervals for each coordinate (a typical recommendation for \cref{eq:Schrodinger} is to have a grid satisfying $\mathrm{d}x^2\approx \mathrm{d}t$). It leads to a sample complexity of $\mathcal{O}(N^{\frac{d}{2}+1})$, which is as slow as other grid-based methods for quantum mechanics. The error in that case will  also be $\mathcal{O}(\sqrt{\epsilon} + \varepsilon)$ \citep{smith1985numerical}.

\section{On criticism of Stochastic Mechanics}
Three major concerns arise regarding stochastic mechanics developed by \citet{NelsonOG, guerra1995introduction}:
\begin{itemize}
    \item The proof of the equivalence of stochastic mechanics to classic quantum mechanics relies on an implicit assumption of the phase $S(x, t)$ being single-valued \citep{inequavalency}.
    \item If there is an underlying stochastic process of quantum mechanics, it should be non-Markovian \citep{nelson2005mystery}.
    \item For a quantum observable, e.g., a position operator $\mathbf{X}(t)$, a compound distribution of positions at two different timestamps $t, t'$ does not match distribution of $(\mathbf{X}(t), \mathbf{X}(t'))$ \citep{nelson2005mystery}. 
\end{itemize}
Appendix \ref{sec:measurement} discusses why a mismatch of the distributions is not a problem and how we can adopt stochastic mechanics with our approach to get correct compound distributions by incorporating the measurement process into the stochastic mechanical picture.

\subsection{On ``Inequivalence'' to Schr\"odinger equation}
This problem is explored in the paper by \citet{inequavalency}. Firstly, the authors argue that proofs of the equivalency in \citet{NelsonOG, guerra1995introduction} are based on the assumption that the wave function phase $S$ is single-valued. In the general case of a multi-valued phase, the wave functions are identified with sections of complex line bundles over $\mathcal{M}$. In the case of a trivial line bundle, the space of sections can be formed from single-valued functions, see \citet{about_line_bundles}. The equivalence class of line bundles over a manifold $\mathcal{M}$ is called Picard group, and for smooth manifolds, $\mathcal{M}$ is isomorphic to $H^2(\mathcal{M}, \mathbb{Z})$, so-called second cohomology group over $\mathbb{Z}$, see \citet{about_single_valuedness} for details. Elements in this group give rise to non-equivalent quantizations with irremovable gauge symmetry phase factor. 

Therefore, \textit{in this paper, we assume that} $H^2(\mathcal{M}, \mathbb{Z}) = 0$, which allows us to eliminate all criticism about non-equivalence. Under this assumption, stochastic mechanics is \textit{equivalent} indeed. This condition holds when $\mathcal{M} = \mathbb{R}^d$. Though, if a potential $V$ has singularities, e.g., $\frac{a}{\|x-x_{*}\|}$, then we should exclude $x_{*}$ from $\mathbb{R}^d$ which leads to $\mathcal{M} = \mathbb{R}^d \backslash \{x_{*}\}$ and this manifold satisfies $H^2(\mathcal{M}, \mathbb{Z}) \cong \mathbb{Z}$ \citep{may1999concise}, which essentially leads to "counterexample" provided in \citet{inequavalency}. We suggest a solution to this issue in Appendix \ref{sec:singular_potential}.

\subsection{On ``Superluminal'' Propagation of Signals}

We want to clarify why this work should not be judged from perspectives of \textit{physical realism}, \textit{correspondence to reality} and \textit{interpretations} of quantum mechanics. 
This tool gives the exact predictions as classical quantum mechanics at a moment of measurement. Thus, we do not care about a superluminal change in the drifts of entangled particles and other problems of the Markovian version of stochastic mechanics.

\subsection{Non-Markovianity}
Nelson believes that an underlying stochastic process of reality should be non-Markovian to avoid issues with the Markovian processes like superluminal propagation of signals \citep{nelson2005mystery}. Even if such a process were proposed in the future, it would not affect our approach. In stochastic calculus, there is a beautiful theorem from \citet{gyongy1986mimicking}:

\begin{theorem}
Assume $X(t), F(t), G(t)$ are adapted to Wiener process $W(t)$ and satisfy:
$$\mathrm{d}X(t) = F(t)\mathrm{d}t + G(t)\mathrm{d}\overset{\rightarrow}{W}.$$
Then there exist a Markovian process $Y(t)$ satisfying
$$\mathrm{d}Y(t) = f(Y(t), t)\mathrm{d}t + g(Y(t), t)\mathrm{d}\overset{\rightarrow}{W}$$
where $f(x, t) = \mathbb{E}(F(t)\|X(t)=x)$,$g(x, t) = \sqrt{\mathbb{E}(G(t)G(t)^{T}\|X(t)=x)}$ and such that $\forall t$ holds $\mathrm{Law}(X(t)) = \mathrm{Law}(Y(t))$.
\end{theorem}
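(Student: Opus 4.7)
The plan is to show that the one-dimensional marginal $\mu_t = \mathrm{Law}(X(t))$ satisfies, in the sense of distributions, the forward Kolmogorov equation associated with the candidate Markov SDE, and then to construct $Y$ as a (weak) solution of that SDE with the same initial law. Because both processes will then solve the same linear Fokker--Planck equation with identical initial datum, a uniqueness result identifies their time-marginals.

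First, I would apply It\^o's formula to $\phi(X(t))$ for an arbitrary test function $\phi \in C_c^\infty(\mathbb{R}^d)$, obtaining
\begin{align*}
\mathbb{E}\phi(X(t)) - \mathbb{E}\phi(X(0)) = \mathbb{E}\int_0^t \Big(\langle \nabla\phi(X(s)), F(s)\rangle + \tfrac12 \mathrm{Tr}\big(G(s)G(s)^T \nabla^2 \phi(X(s))\big)\Big)\mathrm{d}s.
\end{align*}
Applying the tower property, for instance $\mathbb{E}[\langle \nabla\phi(X(s)), F(s)\rangle] = \mathbb{E}[\langle \nabla\phi(X(s)), \mathbb{E}(F(s)\mid X(s))\rangle] = \mathbb{E}[\langle \nabla\phi(X(s)), f(X(s),s)\rangle]$, and the analogous identity component-wise for $G(s)G(s)^T$, rewrites the right-hand side purely in terms of $f$ and $gg^T$ evaluated at $X(s)$. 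This is exactly the weak form $\partial_t \mu_t = \mathcal{L}_t^* \mu_t$ of the Fokker--Planck equation whose generator $\mathcal{L}_t$ coincides with that of the candidate Markov SDE.

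Next, I would construct a weak solution $Y$ of $\mathrm{d}Y = f(Y,t)\mathrm{d}t + g(Y,t)\mathrm{d}\overset{\rightarrow}{W}$ with $Y(0) \sim \mathrm{Law}(X(0))$ via the Stroock--Varadhan martingale problem, using integrability/growth bounds on $f, g$ inherited from those on $F,G$. The marginals $\nu_t = \mathrm{Law}(Y(t))$ satisfy the same weak Fokker--Planck equation with the same initial condition, so a uniqueness/superposition principle for Fokker--Planck equations (in the spirit of Figalli--Trevisan) yields $\mu_t = \nu_t$ for every $t$, which is the claim.

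The hard part will be the regularity issues. The averaged coefficients $f(x,t) = \mathbb{E}(F(t)\mid X(t)=x)$ and $g(x,t) g(x,t)^T = \mathbb{E}(G(t)G(t)^T \mid X(t)=x)$ are defined only $\mu_t$-a.e.\ and need not be Lipschitz, or even continuous, so neither existence of the weak solution $Y$ nor uniqueness of solutions to the Fokker--Planck equation is off-the-shelf. Gy\"ongy's original argument handles this by mollification: one approximates $F, G$ (or equivalently $f, g$) by smooth coefficients, carries out the matching argument above for the regularised processes, and passes to the limit using Krylov-type a priori bounds on transition densities under a non-degeneracy assumption on $gg^T$, while controlling the marginals along the approximating sequence.
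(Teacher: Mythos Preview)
The paper does not prove this theorem at all: it is stated as a quotation of Gy\"ongy's mimicking theorem and attributed directly to \citet{gyongy1986mimicking}, with no accompanying argument. So there is no ``paper's own proof'' to compare your proposal against.

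That said, your sketch is a faithful outline of how the result is actually proved in the literature: derive the weak Fokker--Planck equation for $\mu_t=\mathrm{Law}(X(t))$ via It\^o's formula, use the tower property to replace $F,G$ by their conditional expectations $f,gg^T$, build a weak solution $Y$ of the mimicking SDE, and identify marginals by uniqueness for the forward equation. You are also right that the substantive work lies in the regularity issues (the conditional-expectation coefficients are only defined $\mu_t$-a.e.), and that Gy\"ongy handles this through approximation together with Krylov-type estimates under a uniform ellipticity hypothesis on $GG^T$. If you want to turn this into a self-contained proof, you would need to make that non-degeneracy assumption explicit (the paper's statement omits it) and spell out the compactness/tightness step that lets you pass to the limit after mollification.
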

This theorem tells us that we already know how to build a process $Y(t)$ without knowing $X(t)$; it is stochastic mechanics by Nelson \citep{guerra1995introduction, NelsonOG} that we know. From a numerical perspective, we better stick with $Y(t)$ as it is easier to simulate, and as we explained, we do not care about correspondence to reality as long as it gives the same final results.

 \subsection{Ground State}
 Unfortunately, our approach is unsuited for the ground state estimation or any other stationary state. FermiNet \citep{pfau2020ferminet} does a fantastic job already. The main focus of our work is time evolution. It is possible to estimate some observable $\mathbf{Y}$ for the ground state if its energy level is unique and significantly lower than others. In that case, the following value approximately equals the group state observable for $T\gg 1$:
 $$\langle\mathbf{Y}\rangle_{ground} \approx \frac{1}{T}\int_{0}^{T} \langle\mathbf{Y}\rangle_{t}\mathrm{d}t \approx \frac{1}{NB}\sum_{i=1}^{N}\sum_{j=1}^{B} y(X_{ij})$$
 This works only if the ground state is unique, and the initial conditions satisfy $\int_{\mathcal{M}} \overline{\psi_{0}}\psi_{ground}\mathrm{d}x \ne 0$, and its energy is well separated from other energy levels. In that scenario, oscillations will cancel each other out.

\section{Future Work}\label{app:future_work}
This section discusses possible directions for future research. Our method is a promising direction for fast quantum mechanics simulations, but we consider the most straightforward setup in our work. 
Possible future advances include:
\begin{itemize}
    \item In our work, we consider the simplest integrator of SDE (Euler-Maruyama), which may require setting $N\gg 1$ to achieve the desired accuracy. However, a higher-order integrator \citep{smith1985numerical} or an adaptive integrator \citep{ilie2015adaptive} should achieve the desired accuracy with much lower $ N $.
    \item Exploring the applicability of our method to fermionic systems is a promising avenue for future investigation. Successful extensions in this direction would not only broaden the scope of our approach but also have implications for designing novel materials, optimizing catalytic processes, and advancing quantum computing technologies. 
    
    \item It should be possible to extend our approach to a wide variety of other quantum mechanical equations, including Dirac and Klein-Gordon equations used to account for special relativity \citep{serva1988relativistic, blanchard2005stochastic}, a non-linear Schr\"odinger \cref{eq:Schrodinger} used in condensed matter physics \citep{serkin2000novel} by using McKean-Vlasov SDEs and the mean-field limit  \citep{10.1214/15-AOP1076, dos2022simulation}, and the Shr\"odinger equation with a spin component \citep{dankel1970mechanics,de1991imaginary}.
    \item We consider a rather simple, fully connected architecture of neural networks with $\tanh$ activation and three layers. It might be more beneficial to consider specialized architectures for quantum mechanical simulations, e.g., \citet{pfau2020ferminet}. Permutation invariance can be ensured using a self-attention mechanism \citep{vaswani2017attention}, which could potentially offer significant enhancements to model performance. Additionally, incorporating gradient flow techniques as suggested by \citet{neklyudov2024wasserstein} can help to accelerate our algorithm. 
    \item Many practical tasks require knowledge of the error magnitude. Thus, providing explicit bounds on $\varepsilon$ in terms of $\mathcal{L}(\theta_{M})$ is critical.
\end{itemize}

\newpage

\newpage

\end{document}